\documentclass{article}

\usepackage{amsmath,amsfonts,bm}

\def\Figref#1{Fig.~\ref{#1}}

\def\Secref#1{Sec.~\ref{#1}}

\def\eqref#1{equation~\ref{#1}}
\def\Eqref#1{Eq.~\ref{#1}}
\def\Tabref#1{Tab.~\ref{#1}}

\def\1{\bm{1}}

\DeclareMathAlphabet{\mathsfit}{\encodingdefault}{\sfdefault}{m}{sl}
\SetMathAlphabet{\mathsfit}{bold}{\encodingdefault}{\sfdefault}{bx}{n}

\usepackage[sort,authoryear,round]{natbib}
\usepackage{fullpage}
\usepackage[utf8]{inputenc} 
\usepackage[T1]{fontenc}    
\usepackage[table]{xcolor}

\usepackage{hyperref}       
\definecolor{myblue}{rgb}{0,0.2,0.8}
\hypersetup{ %
pdftitle={},
pdfauthor={},
pdfsubject={},
pdfkeywords={},
pdfborder=0 0 0,
pdfpagemode=UseNone,
colorlinks=true,
linkcolor=myblue,
citecolor=myblue,
filecolor=myblue,
urlcolor=myblue,
pdfview=FitH}
\usepackage{authblk}
\usepackage{url}            
\usepackage{booktabs}       
\usepackage{amsfonts}       
\usepackage{amsthm}
\usepackage{nicefrac}       
\usepackage{microtype}      

\usepackage{graphicx}
\usepackage{xspace}
\usepackage[varqu,varl,var0,scaled=0.97]{inconsolata}
\usepackage{caption}
\usepackage{overpic}
\usepackage{wrapfig}
\usepackage{tcolorbox}
\usepackage{enumitem}
\usepackage{threeparttable}
\usepackage{subfig}

\usepackage{tabularx}  
\usepackage{multirow} 
\usepackage{listings}
\usepackage{float}
\usepackage{soul}
\usepackage{algorithm}
\usepackage{algpseudocode}
\usepackage{multicol}

\usepackage{adjustbox}

\definecolor{lightblue}{RGB}{220,230,241}
\definecolor{dkgreen}{rgb}{0,0.6,0}
\definecolor{gray}{rgb}{0.5,0.5,0.5}
\definecolor{mauve}{rgb}{0.58,0,0.82}

\newcommand{\eg}{\emph{e.g.},\xspace}
\newcommand{\ie}{\emph{i.e.},\xspace}

\newcommand{\fold}{\texttt{FOLD}\xspace}
\newcommand{\magprune}{\texttt{MAG}\xspace}
\newcommand{\magone}{\texttt{MAG1}\xspace}
\newcommand{\magtwo}{\texttt{MAG2}\xspace}

\usepackage[capitalize,noabbrev]{cleveref}

\newtheorem{theorem}{Theorem}[section]

\title{\bf{Cut Less, Fold More: Model Compression\\ through the Lens of Projection Geometry}}

\author{Olga Saukh$^{\circ\text{,§}}$, Dong Wang$^\circ$, Haris Šikić$^\circ$, Yun Cheng$^\ast$, Lothar Thiele$^\star$}
\affil[]{$^\circ$Graz University of Technology, $^{\text{§}}$Complexity Science Hub, Austria \\
$^\ast$Swiss Data Science Center, $^\star$ETH Zurich, Switzerland
}
\affil[]{\texttt{\{saukh@, dong.wang@, haris.sikic@student.\}tugraz.at}, \\ \texttt{yun.cheng@sdsc.ethz.ch}, 
\texttt{thiele@tik.ee.ethz.ch} \\
}

\begin{document}
\date{}
\maketitle

\begin{abstract}
Compressing neural networks without retraining is vital for deployment at scale. We study calibration-free compression through the lens of projection geometry: structured pruning is an axis-aligned projection, whereas model folding performs a low-rank projection via weight clustering. We formalize both as orthogonal operators and show that, within a rank distance of one, folding provably yields smaller parameter reconstruction error, and under mild smoothness assumptions, smaller functional perturbations than pruning. At scale, we evaluate $>$1'000 checkpoints spanning ResNet18, PreActResNet18, ViT-B/32, and CLIP ViT-B/32 on CIFAR-10 and ImageNet-1K, covering diverse training hyperparameters (optimizers, learning rates, augmentations, regularization, sharpness-aware training), as well as multiple LLaMA-family 60M and 130M parameter models trained on C4. We show that folding typically achieves higher post-compression accuracy, with the largest gains at moderate–high compression. The gap narrows and occasionally reverses at specific training setups. Our results position folding as a geometry-aware, calibration-free alternative to pruning that is often superior in practice and principled in theory.
\end{abstract}

\section{Introduction}
Neural network compression is critical for deploying models in resource-constrained environments. Common approaches include quantization, which reduces the precision of weights and activations, and knowledge distillation, which transfers information from a large teacher model to a smaller student model. In this work, we focus on the class of calibration-free post-training structured compression methods that optimize the model architecture itself without access to training data. Among these, the most widely used is \emph{magnitude-based pruning}, which prunes tensor elements according to their magnitudes, using them as a proxy for their contribution to model accuracy~\citep{han2015learningweightsconnectionsefficient, mishra2021acceleratingsparsedeepneural, lu2023steplearningnmstructured, ding2024usmlitequantizationsparsityaware, bambhaniya2024progressivegradientflowrobust}. When combined with fine-tuning or a lightweight BatchNorm reset~\citep{saikumar2025signalcollapseoneshotpruning}, this approach achieves significant compression rates with negligible accuracy loss~\citep{kurtic2022optimalbertsurgeonscalable, sanh2020movementpruningadaptivesparsity}.  
In contrast, the recently introduced \emph{model folding} clusters similar weights and ties them together, providing an approximation of the original network~\citep{wang2025forget}. Both pruning and folding reduce parameter count but differ fundamentally: pruning removes weights entirely, while folding preserves them in merged representations.

In this work, we develop a unified theoretical and empirical framework to compare pruning and folding through the lens of \emph{orthogonal projections} in parameter space. We show that both compression methods can be viewed as projections onto lower-dimensional subspaces, but with crucial differences in geometry: pruning corresponds to axis-aligned coordinate projections, while folding projects onto cluster-structured subspaces that retain directional information.

At a high level, both pruning and folding compress the weights of a model. 
We show that for any pruned solution there exists a folded alternative that is \emph{almost} as small—using one extra component in the compressed representation—yet is strictly closer to the original weights (smaller Frobenius norm), which in turn bounds the change in the network function.
Intuitively, folding merges weight vectors with similar directions rather than zeroing coordinates, so the compressed model stays closer in behavior to the initial network. 

Empirically, we perform a comprehensive calibration-free study over $>$1'000 checkpoints spanning CNNs and ViTs on CIFAR-10 and ImageNet-1K, trained under diverse hyperparameter choices (optimizers, learning rates, augmentation, regularization, sharpness-aware training). We also train and process 18 LLaMA-family models with 60M and 130M parameters on C4, by varying learning rates, warmup lengths, and weight decay strength. After compression and also followed by lightweight and full fine-tuning, folding typically attains higher post-compression accuracy, with the largest gains at moderate to high compression. The margin narrows, and can occasionally reverse, at very low compression or under specific training setups, but the overall trend is consistent with our theoretical analysis. Our projection-based perspective opens new directions for designing compression methods that explicitly optimize for functional closeness. This paper makes the following contributions:
\begin{itemize}
\item We introduce a unified projection framework that casts pruning and folding as orthogonal projections onto, respectively, axis-aligned and cluster-structured subspaces. We prove that at a compression rank difference of one, folding achieves smaller parameter reconstruction error and tighter function-perturbation bounds under mild smoothness assumptions.
\item A large-scale evaluation across $>$1'000 checkpoints and diverse hyperparameters, covering CNNs and ViTs on CIFAR-10 and ImageNet-1K, as well as LLaMA-60M and LLaMA-130M on C4. In addition, we use post-compression lightweight LayerNorm reset for ViTs, or full-fine-tuning to show that the strong performance of folding is preserved in these settings.
\item We show that folding is a geometry-aware alternative that is often superior in practice, with clearly identified regimes (\eg moderate–high compression) where its advantage is most pronounced, and corner cases where the gap narrows.
\end{itemize}
We discuss related work in Appendix~\ref{appx:related}, however, the main text already positions pruning and folding within our projection framework and clarifies the novelty of our approach.

\section{Unified Framework for Pruning and Folding}
\label{sec:theory}
\subsection{Preliminaries and Definitions}

We consider a neural network with input \( x \in \mathbb{R}^d \). We assume ReLU activations and normalization layers (\eg BatchNorm or LayerNorm) are present. 

To develop the theoretical framework, we focus on compressing a single layer at a time. This layer has \( p \) inputs and \( m \) outputs with its parameters collected in matrix \( \mathbf{W} \in \mathbb{R}^{m \times p} \). A row \( w(i) \) of \( \mathbf{W} \) is denoted as the \(i\)th parameter vector with individual weights \( w(i,j) \). Since all other network parameters are treated as fixed, the network function can be expressed as \( f(x; \boldsymbol{W}) \), which is trained to minimize a loss function \( L(\mathbf{W}) \). 

We assume that the loss function\,\(L\) is Lipschitz continuous, \ie there exists a constant \(\kappa > 0\)\,such\,that
\begin{equation} \label{eq:lipschitz}
|L(\mathbf{W}_1) - L(\mathbf{W}_2)| 
\leq \kappa \, 
\lVert \mathbf{W}_1 - \mathbf{W}_2 \rVert_F
\end{equation}
for all admissible parameter matrices \(\mathbf{W}_1\) and \(\mathbf{W}_2\). The Frobenius norm of a matrix is defined as \( \lVert A \rVert_F = \sqrt{\sum_{i,j} |a_{ij}|^2} \), that is, the square root of the sum of the squares of its entries, or equivalently, the $\ell_2$-norm of the vectorized matrix. 
This Lipschitz condition controls the change in loss with respect to parameter perturbations.
 
\paragraph{Orthogonal Projection.} We formalize structured pruning and model folding as orthogonal projections in parameter space. A matrix \( \mathbf{C} \in \mathbb{R}^{m\times m} \) is an orthogonal projection if \( \mathbf{C} = \mathbf{C}^\top = \mathbf{C}^2 \), \ie it is symmetric and idempotent. Such projections map any parameter vector to its closest point (in the Euclidean norm) within a lower-dimensional subspace.

If the columns of \( \mathbf{U} \in \mathbb{R}^{m \times k} \) form a basis of a \( k \)-dimensional subspace,  the corresponding orthogonal projection is
\begin{equation}\label{eq:proj}
\mathbf{C} = \mathbf{U} (\mathbf{U}^\top \mathbf{U})^{-1} \mathbf{U}^\top.
\end{equation}
Equivalently,
\[
\mathbf{C} y = \underset{z \in \mathrm{Range}(\mathbf{U})}{\arg\min} \; \lVert y - z \rVert_2
\]
meaning \( \mathbf{C} y \) is the orthogonal projection of \( y \) onto the subspace spanned by \( \mathbf{U} \).

\subsection{Compression as Orthogonal Projection}

\paragraph{Structured pruning.} Pruning can be viewed as a projection onto a coordinate-aligned subspace at the level of neurons, filters, or channels. Assume the layer outputs are ordered so that the last \( m-k \) are pruned. The corresponding basis \( \mathbf{U}_p \) spans the \( k \)-dimensional subspace, with projection matrix \( \mathbf{C}_p \) and transformed weight matrix \( \mathbf{W}_p \):
\begin{equation}\label{eq:pruning}
\mathbf{U}_p = \begin{pmatrix} I \\ 0 \end{pmatrix}, \quad
\mathbf{C}_p = \begin{pmatrix} I & 0 \\ 0 & 0 \end{pmatrix}, \quad
\mathbf{W}_p = \mathbf{C}_p \mathbf{W}.
\end{equation}
Consequently, the last \( m-k \) rows of \( \mathbf{W}_p \) are zero, and the corresponding neurons, filters, or channels can be simply removed.

\paragraph{Model folding.} Folding groups the parameters into \( k \) clusters and replaces each cluster with its mean. Depending on the choice of clusters, a different folding results. Folding can be represented as an orthogonal projection onto the \( k \)-dimensional subspace spanned by \( \mathbf{U}_f \in \{0,1\}^{m\times k} \), where each row contains exactly one nonzero entry indicating the cluster assignment. A cluster \( S_j \) comprises all indices of parameter vectors belonging to it; thus, \( u_f(i,j) = 1 \) if and only if \( i \in S_j \).

The projection \( \mathbf{C}_f \) defined in \Eqref{eq:proj} maps each cluster to its mean~\citep{wang2025forget}. Specifically,  
\begin{equation}\label{eq:folding}
\mathbf{W}_f = \mathbf{C}_f \mathbf{W}, \quad
\forall i \in S_j : \; w_f(i) = \mu_j, \quad
\mu_j = \frac{1}{|S_j|} \sum_{i \in S_j} w(i),
\end{equation}
where \(\mu_j\) is the mean of cluster \(j\). After projection, all parameter vectors within a cluster are replaced by their mean, making them identical. As a result, the corresponding layer outputs are also identical, leaving a total of \(k\) distinct neurons, filters, or channels. Practically, the identical layer outputs can be joined while adapting the next layer appropriately, see \citep{wang2025forget}.

\subsection{Folding Dominates Pruning}
To compare pruning and folding, we first show that for any choice of pruning, there exists a folding that yields a more accurate approximation of the parameter matrix \( \mathbf{W} \).

\begin{theorem}
\label{thm:folding-existence}
Given any pruning with basis \( \mathbf{U}_p \) of rank \( 0 \leq k_p \leq m-1 \) (\ie at least one parameter vector is pruned), there exists a folding with basis \( \mathbf{U}'_f \) and rank \( k_f = k_p + 1 \) such that
\[
\lVert \mathbf{W} - \mathbf{W}_p \rVert_F^2 \geq \lVert \mathbf{W} - \mathbf{W}'_f \rVert_F^2,
\]
where \( \mathbf{W}_p = \mathbf{C}_p \mathbf{W} \) and \( \mathbf{W}'_f = \mathbf{C}'_f \mathbf{W} \), with \( \mathbf{C}_p \) and \( \mathbf{C}'_f \) denoting the orthogonal projections defined in~\Eqref{eq:proj}. 
\end{theorem}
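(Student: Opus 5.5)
The plan is to build the folding directly from the given pruning. Keep each of the $k_p$ retained parameter vectors as a singleton cluster, and collect all $m-k_p \geq 1$ pruned vectors into one extra cluster $S_{k_p+1}$. This gives $k_f = k_p+1$ clusters. Every row of $\mathbf{U}'_f$ has exactly one nonzero entry, and every cluster is nonempty, so $\mathbf{U}'_f$ has full column rank $k_p+1$ and (\ref{eq:proj}) is well defined. The case $k_p=0$ fits the same construction: there is a single cluster containing all rows.

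Next I compute both errors row by row, using the fact that $\lVert\cdot\rVert_F^2$ is the sum of squared row norms. For pruning, $\mathbf{W}_p = \mathbf{C}_p\mathbf{W}$ leaves the first $k_p$ rows unchanged and zeroes the rest, so
\[
\lVert \mathbf{W}-\mathbf{W}_p\rVert_F^2=\sum_{i\in S}\lVert w(i)\rVert_2^2, \qquad S=\{k_p+1,\dots,m\}.
\]
For folding, (\ref{eq:folding}) says a singleton cluster maps to its own vector, so those rows contribute zero error. The rows in $S$ are all replaced by the mean $\mu=\frac{1}{|S|}\sum_{i\in S}w(i)$, so
\[
\lVert \mathbf{W}-\mathbf{W}'_f\rVert_F^2=\sum_{i\in S}\lVert w(i)-\mu\rVert_2^2 .
\]

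The comparison then follows from the bias--variance (parallel axis) identity
\[
\sum_{i\in S}\lVert w(i)\rVert_2^2=\sum_{i\in S}\lVert w(i)-\mu\rVert_2^2+|S|\,\lVert\mu\rVert_2^2 .
\]
To verify it, expand $w(i)=(w(i)-\mu)+\mu$; the cross term vanishes because $\sum_{i\in S}(w(i)-\mu)=0$. Since $|S|\,\lVert\mu\rVert_2^2\geq 0$, the claimed inequality follows, with equality exactly when $\mu=0$. An equivalent way to see it is that the mean minimizes $\sum_{i\in S}\lVert w(i)-z\rVert^2$ over all $z$, and pruning corresponds to the particular choice $z=0$.

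There is no real obstacle. The only points needing care are that the clusters partition $\{1,\dots,m\}$ with no empty cluster, which keeps $\mathbf{U}_f'^\top\mathbf{U}'_f$ invertible, and the check that (\ref{eq:proj}) applied to this $\mathbf{U}'_f$ does reproduce the cluster-mean map of (\ref{eq:folding}). For that check, $\mathbf{U}_f'^\top\mathbf{U}'_f=\mathrm{diag}(|S_j|)$, which gives the mean directly.
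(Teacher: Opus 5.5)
Your proposal is correct and matches the paper's proof. The paper uses the same construction (all pruned rows merged into one extra cluster, retained rows kept as singletons) and the same expansion showing that the folded error equals the pruned error minus $(m-k_p)\lVert\mu\rVert_2^2$. Your extra checks, that no cluster is empty so $\mathbf{U}_f'^\top\mathbf{U}'_f$ is invertible and that \Eqref{eq:proj} reproduces the cluster means, are details the paper leaves implicit.
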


In the above theorem, $\mathbf{U}'_f$ denotes the constructive clustering obtained by merging all pruned rows into a single additional cluster. The proof is in Appendix~\ref{appx:theory}. By the Lipschitz continuity of the loss function in \Eqref{eq:lipschitz}, the superior approximation property of folding implies a tighter bound on the loss difference compared to pruning:
\[
| L(\mathbf{W}) - L(\mathbf{W}'_f) | \leq \kappa \, \lVert \mathbf{W} - \mathbf{W}'_f \rVert_F,
\quad
| L(\mathbf{W}) - L(\mathbf{W}_p) | \leq \kappa \, \lVert \mathbf{W} - \mathbf{W}_p \rVert_F,
\]
with
\[
\lVert \mathbf{W} - \mathbf{W}'_f \rVert_F^2 \leq \lVert \mathbf{W} - \mathbf{W}_p \rVert_F^2.
\]
Furthermore, the rank difference \( k_f = k_p + 1 \) between pruning and folding is practically negligible, since in typical scenarios many parameter vectors are pruned. For instance, under a uniform \(50\%\) per-layer retention, a ResNet-18 stage with 256 channels keeps \(k_p=128\) (so folding uses \(k_f=129\)), and a ViT-B/32 block with width 768 keeps \(k_p=384\) (so \(k_f=385\)); the relative increase is just \(1/k_p \approx 0.78\%\) and \(0.26\%\), respectively—negligible in practice. Moreover, for all layers and architectures we observe that loss and accuracy vary smoothly as the rank increases from 
$k_p$ to $k_p + 1$, no jumps in loss or accuracy, and the error difference between ranks $k$ and $k + 1$ is typically much smaller than the difference between pruning and folding at the same rank (see Appendix~\ref{subsec:one_rank_slack}).

Finally, we show that folding using optimal \(k\)-means clustering never yields a less accurate approximation of the parameter matrix \( \mathbf{W} \) than pruning. 

\begin{theorem}
\label{thm:folding-optimal}
Let \( \mathbf{U}^\star_f \) be the basis obtained from an optimal \( k \)-means clustering with \( k_f \) clusters, \ie the folding clusters are determined by a \( k \)-means algorithm minimizing the accumulated within-cluster sum of squares. Then, for any pruning with basis \( \mathbf{U}_p \) of rank \( k_p = k_f - 1 \), we have
\[
\lVert \mathbf{W} - \mathbf{W}_p \rVert_F^2 \geq 
\lVert \mathbf{W} - \mathbf{W}^\star_f \rVert_F^2,
\]
where \( \mathbf{W}_p = \mathbf{C}_p \mathbf{W} \) and \( \mathbf{W}^\star_f = \mathbf{C}^\star_f \mathbf{W} \), with \( \mathbf{C}_p \) and \( \mathbf{C}^\star_f \) denoting the orthogonal projections defined in~\Eqref{eq:proj}.
\end{theorem}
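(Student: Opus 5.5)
The plan is to reduce Theorem~\ref{thm:folding-optimal} to Theorem~\ref{thm:folding-existence}, using optimality of $k$-means.

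First, I will show that for any folding the reconstruction error equals the $k$-means objective of its clustering. By \Eqref{eq:folding}, $\mathbf{C}_f\mathbf{W}$ replaces every row $w(i)$ with $i\in S_j$ by the cluster mean $\mu_j$. Hence
\[
\lVert \mathbf{W}-\mathbf{C}_f\mathbf{W}\rVert_F^2=\sum_{j=1}^{k}\sum_{i\in S_j}\lVert w(i)-\mu_j\rVert_2^2,
\]
which is exactly the within-cluster sum of squares of the partition $\{S_j\}$. An optimal $k$-means clustering with $k_f$ clusters minimizes this quantity over all partitions of the $m$ rows into $k_f$ nonempty clusters. Therefore $\lVert \mathbf{W}-\mathbf{W}^\star_f\rVert_F^2\le \lVert \mathbf{W}-\mathbf{C}_f\mathbf{W}\rVert_F^2$ for every folding of rank $k_f$.

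Second, I fix an arbitrary pruning of rank $k_p=k_f-1$. Since $k_f\le m$, we have $k_p\le m-1$. Theorem~\ref{thm:folding-existence} then supplies a folding $\mathbf{U}'_f$ of rank $k_p+1=k_f$ with $\lVert \mathbf{W}-\mathbf{W}_p\rVert_F^2\ge\lVert \mathbf{W}-\mathbf{W}'_f\rVert_F^2$. That folding consists of the $k_p$ singleton clusters of kept rows plus one cluster holding all pruned rows. Chaining this with the first step gives
\[
\lVert \mathbf{W}-\mathbf{W}_p\rVert_F^2\ge\lVert \mathbf{W}-\mathbf{W}'_f\rVert_F^2\ge\lVert \mathbf{W}-\mathbf{W}^\star_f\rVert_F^2 .
\]

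The only delicate point is making sure the constructed folding $\mathbf{U}'_f$ is admissible in the competition defining $\mathbf{U}^\star_f$. It has exactly $k_f$ nonempty clusters: the $k_p$ singletons, plus the merged cluster, which is nonempty because at least one row is pruned. So it is a feasible $k$-means partition.

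If one wants a self-contained argument without Theorem~\ref{thm:folding-existence}, that inequality can be checked directly. The pruning error is $\sum_{i\in P}\lVert w(i)\rVert^2$, where $P$ is the set of pruned rows. The folding error is $\sum_{i\in P}\lVert w(i)-\mu\rVert^2$, where $\mu$ is the mean of the pruned rows. Since the mean minimizes the sum of squared distances to the rows in $P$, and $0$ is one particular candidate point, the folding error is at most the pruning error.
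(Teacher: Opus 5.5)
Your proof is correct and follows the same route as the paper. You identify the folding error with the $k$-means objective, invoke Theorem~\ref{thm:folding-existence} to obtain the fold $\mathbf{U}'_f$, and then chain $\lVert \mathbf{W}-\mathbf{W}_p\rVert_F^2\ge\lVert \mathbf{W}-\mathbf{W}'_f\rVert_F^2\ge\lVert \mathbf{W}-\mathbf{W}^\star_f\rVert_F^2$ via $k$-means optimality. The only differences are minor: you derive the identity between $\lVert \mathbf{W}-\mathbf{C}_f\mathbf{W}\rVert_F^2$ and the within-cluster sum of squares directly from \Eqref{eq:folding} rather than citing the matrix-factorization formulation, and you explicitly verify that $\mathbf{U}'_f$ has exactly $k_f$ nonempty clusters, which the paper leaves implicit.
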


The proof is given in Appendix~\ref{appx:theory}. This result demonstrates that \( k \)-means folding is not merely a heuristic, but an optimal projection under clustering constraints. Note that the special folding $\mathbf{W}'_f$ in Theorem~\ref{thm:folding-existence} is suboptimal, while Theorem~\ref{thm:folding-optimal} shows that $\mathbf{W}^\star_f$ achieves the minimum possible reconstruction error over all clusterings, producing a strictly stronger improvement as $\lVert \mathbf{W} - \mathbf{W}_p \rVert_F^2 \geq \lVert \mathbf{W} - \mathbf{W}'_f \rVert_F^2 \geq
\lVert \mathbf{W} - \mathbf{W}^\star_f \rVert_F^2$. Unlike pruning, which relies on parameter vector removal, folding generalizes the idea by enabling coordinated parameter merging. Thus, folding incurs less parameter distortion and provably smaller loss perturbation under a local parameter-Lipschitz assumption.

In addition, Theorem~\ref{thm:folding-optimal} has implications for a possible fine-tuning after compression. Matrix \( \mathbf{W} \) contains the optimized weights and \( \mathbf{W}_p \) or \( \mathbf{W}^\star_f \) contain the weights after pruning and folding the optimized network. As a result of Theorem~\ref{thm:folding-optimal}, the quadratic distance between the optimized weights and the compressed optimized weights is smaller for folding in comparison to pruning. 

Our theoretical results employ a one–rank slack comparing pruning at rank \( k_p \) to folding at \( k_f = k_p + 1 \), as a proof device to obtain a clean monotonicity guarantee on projection error. This slack does \emph{not} reflect our evaluation protocol. In all experiments we enforce matched sparsity budgets and compare methods at the \emph{same} retained size (parameters and FLOPs). Hence, empirical accuracy gaps cannot be attributed to extra capacity.

\begin{figure}[t]
     \centering
     \vskip -.4cm
     \subfloat[][ResNet18, Adam, \magone vs \fold, \textbf{no} L1 regularization]{
        \includegraphics[height=0.24\textwidth]{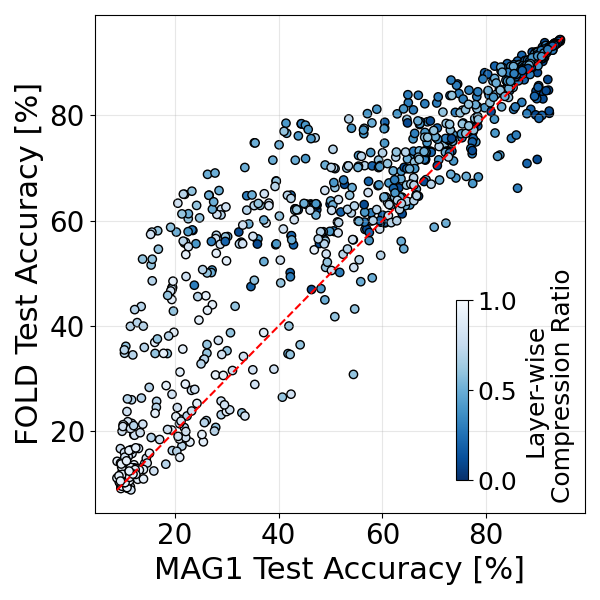}
        \includegraphics[height=0.24\textwidth]{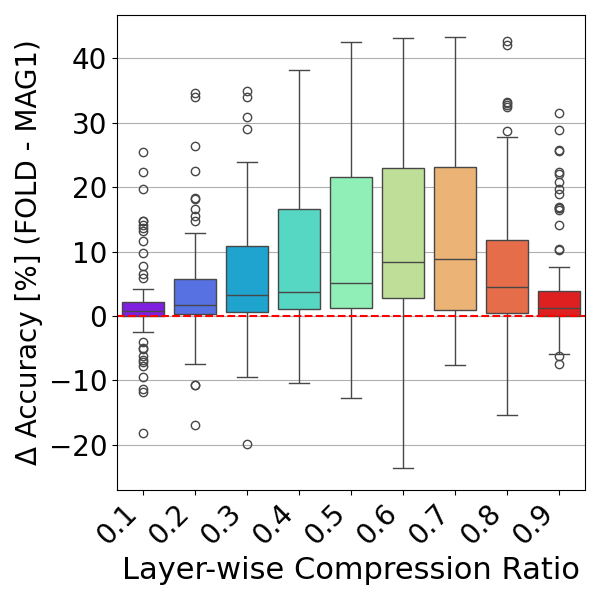}
     } 
     \subfloat[][PreActResNet18, \magone vs \fold]{
        \includegraphics[height=0.24\textwidth]{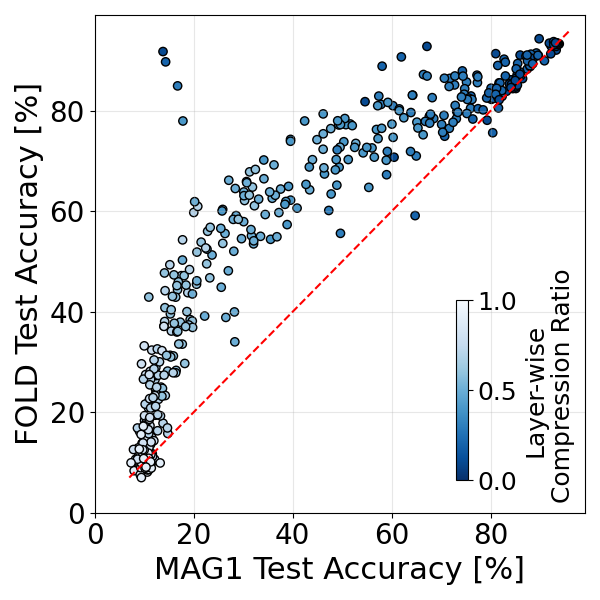}
        \includegraphics[height=0.24\textwidth]{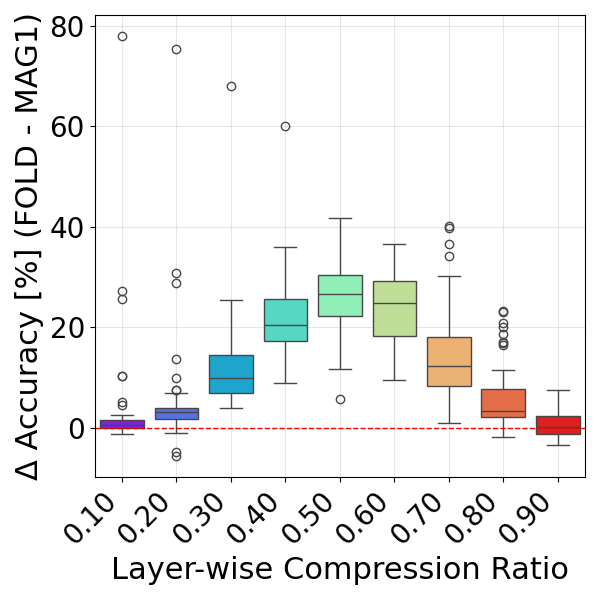}
     }
     
     \subfloat[][ViT-B/32, \magone vs \fold, base accuracy $>$75\%]{
        \includegraphics[height=0.24\textwidth]{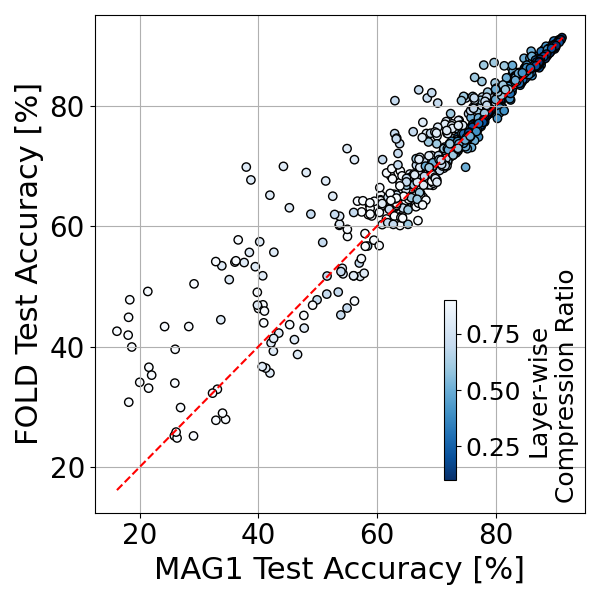}
        \includegraphics[height=0.24\textwidth]{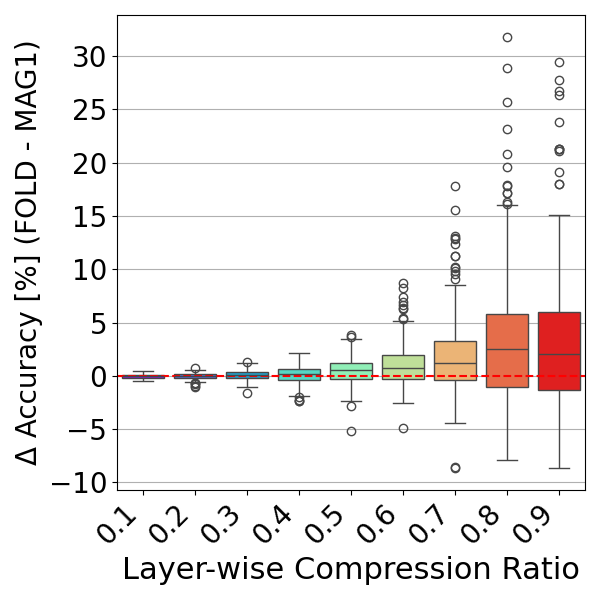}
     }
     \subfloat[][CLIP ViT-B/32, \magone vs \fold]{
        \includegraphics[height=0.24\textwidth]{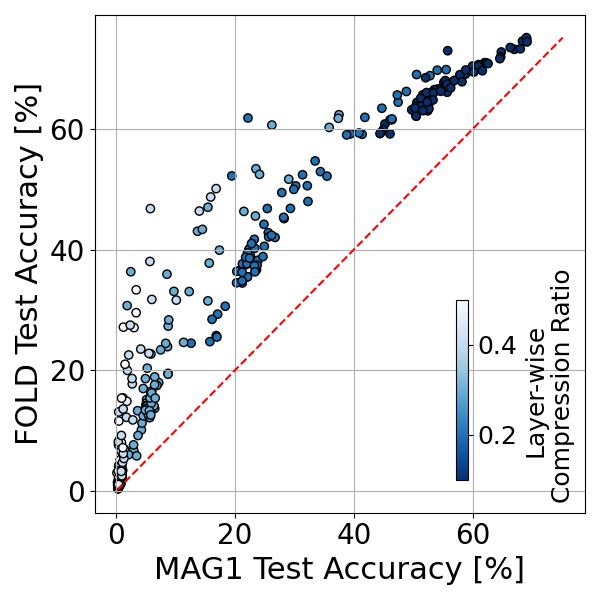}
        \includegraphics[height=0.24\textwidth]{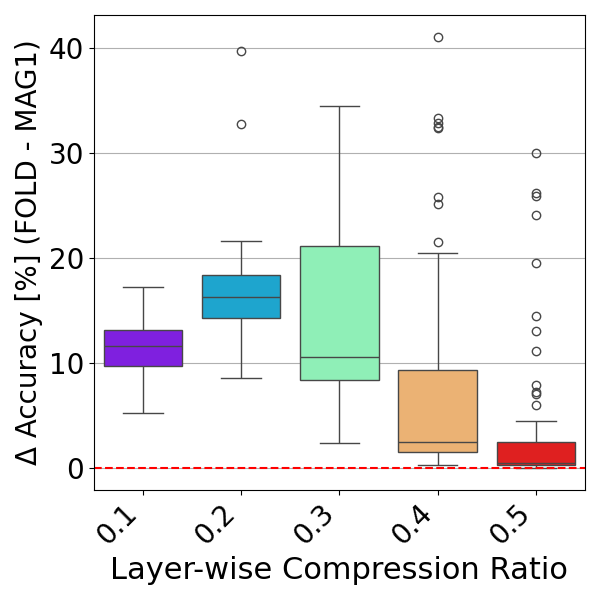}
     }
     \caption{\textbf{Folding outperforms magnitude pruning across diverse training regimes.}
     \textbf{Top row:} ResNet18 and PreActResNet18 on CIFAR-10. ResNet18 checkpoints were trained from scratch with Adam using different hyperparameter configurations. PreActResNet18 checkpoints are from \citet{andriushchenko2023modernlookrelationshipsharpness}.
     \textbf{Bottom row:} ViT-B/32 on CIFAR-10 from~\citep{andriushchenko2023modernlookrelationshipsharpness} and CLIP ViT-B/32 on ImageNet-1K from~\citep{wortsman2022modelsoupsaveragingweights}. See Appendix~\ref{appx:hyperparameters} for details. In these plots, we use checkpoints that were trained without L1 regularization. Scatter plots show post-compression accuracy for magnitude pruning (L1 criterion) versus folding at uniform per-layer compression ratios (color-coded by layer-wise compression ratio). Bar plots depict the accuracy gain by folding, computed as $\Delta=\mathrm{Acc}{(\text{\fold})}-\mathrm{Acc}{(\text{\magone})}$, as a function of layer-wise compression ratio. Folding yields the largest improvements at moderate to high compression, confirming its robustness across architectures and datasets.  \Figref{fig:accuracy_FOLD_vs_MAG_L2} shows the results for magnitude pruning with L2 criterion.
     }
     \label{fig:accuracy_FOLD_vs_MAG_L1}
\end{figure}

\section{Experimental Results}
\label{sec:exp}

Most pruning studies vary only seeds by training several checkpoints under a single hyperparameter recipe, leaving the role of upstream training underexplored. We instead benchmark $>1$'000 checkpoints spanning diverse hyperparameters (optimizers, learning rates, augmentation, regularization, SAM) to quantify how training choices interact with folding and pruning. Concretely, we train 216 ResNet18 (Adam) and 576 ResNet18 (SGD) models on CIFAR-10, include 50 PreActResNet18 and 200 ViT-B/32 checkpoints from \citep{andriushchenko2023modernlookrelationshipsharpness}, and add 72 CLIP ViT-B/32 models fine-tuned on ImageNet-1K from \citep{wortsman2022modelsoupsaveragingweights}. The two ViT families differ markedly in scale ($\sim$19M vs.\ $\sim$151M parameters). We also train 36 LLaMA-family 60M and 130M parameter models on the Colossal Clean Crawled Corpus (C4)~\citep{raffel2020exploring}. Training details are in Appendix~\ref{appx:hyperparameters}. The results for LLaMa-130M are in Appendix~\ref{appx:rebuttal}.

\begin{figure}[t]
     \centering
     \vskip -.4cm
     \subfloat[][ViT-B/32, \magone vs \fold, base accuracy $>$75\%]{
        \includegraphics[height=0.24\textwidth]{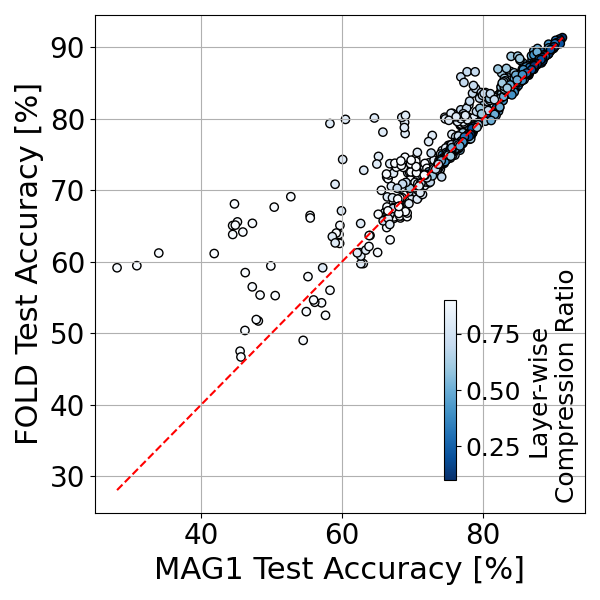}
        \includegraphics[height=0.24\textwidth]{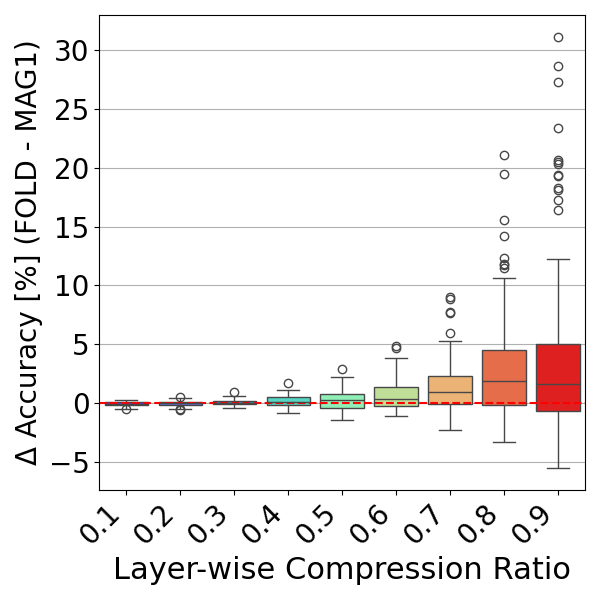}
     }
     \subfloat[][CLIP ViT-B/32, \magone vs \fold]{
        \includegraphics[height=0.24\textwidth]{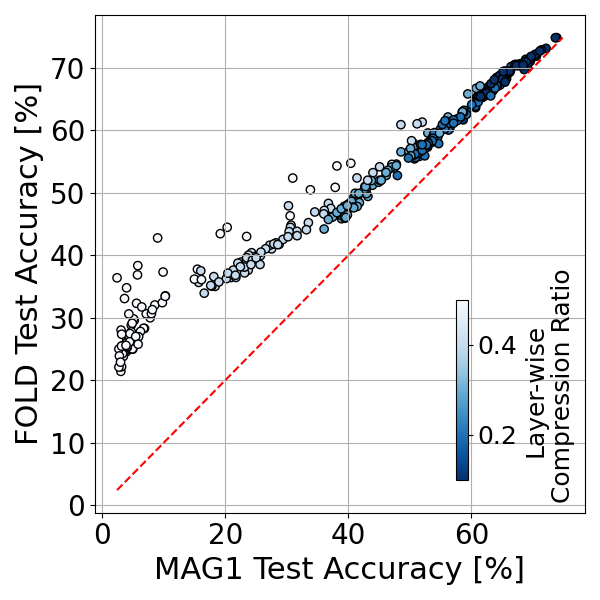}
        \includegraphics[height=0.24\textwidth]{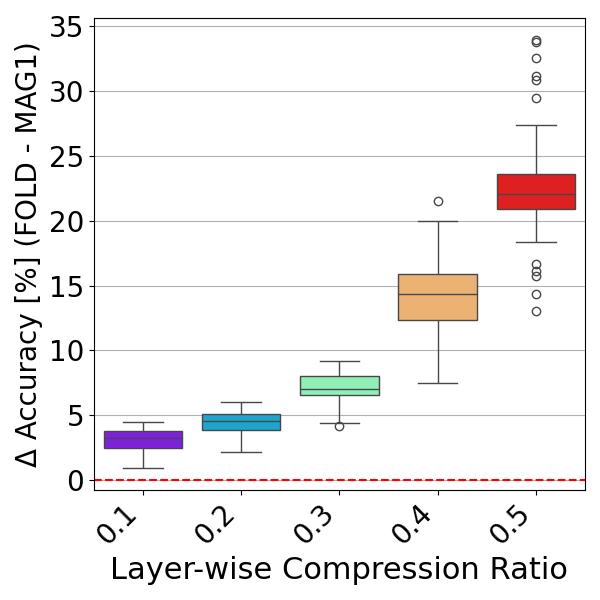}
     }
     \caption{\textbf{\magone versus \fold on ViTs after LayerNorm-only fine-tuning}
     for ViT-B/32 on CIFAR-10 and CLIP ViT-B/32 on ImageNet-1K. In the scatter plots, points are checkpoints, color encodes layer-wise compression. Bar plots depict the accuracy gain $\Delta=\mathrm{Acc}{(\text{\fold})}-\mathrm{Acc}{(\text{\magone})}$, which remains positive and typically grows with compression, indicating that even under lightweight LayerNorm adaptation \fold retains a consistent advantage over pruning.
     }
     \label{fig:ftLN_ViTs_FOLD_vs_MAG_L1}
\end{figure}

\begin{figure}[t]
     \centering
     \vskip -.4cm
     \subfloat[][\magone vs \fold, fine-tuning for\\1 (\textbf{left}) and 5 (\textbf{right}) epochs.]{\includegraphics[height=0.2\textwidth]{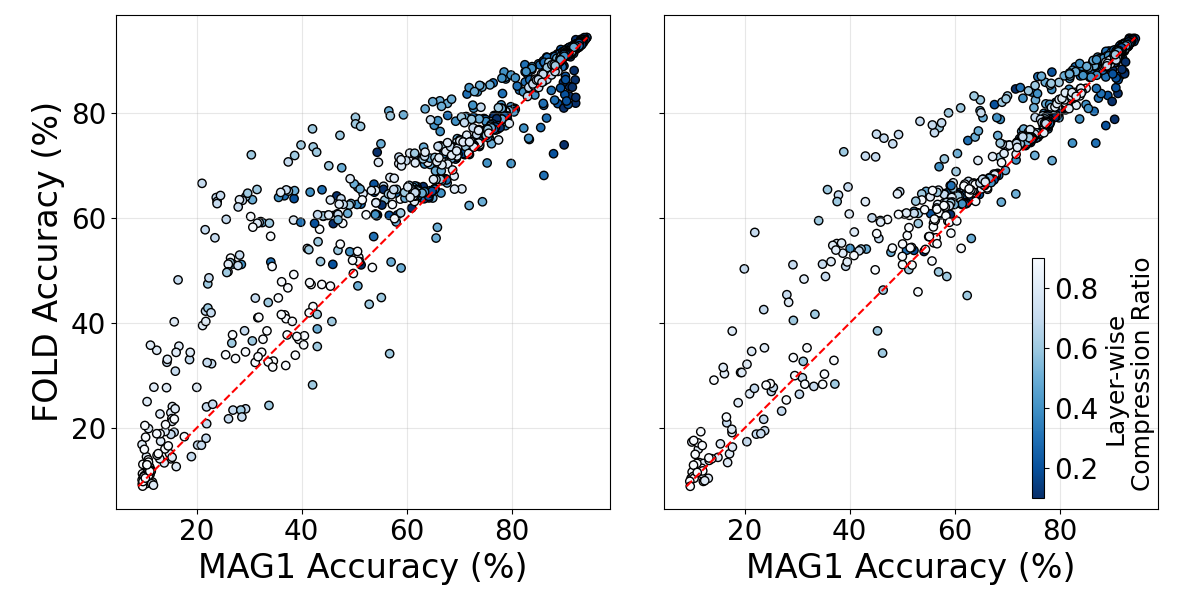}}
     \subfloat[][\magone vs \fold\\accuracy gap after ft.]{\includegraphics[height=0.2\textwidth]{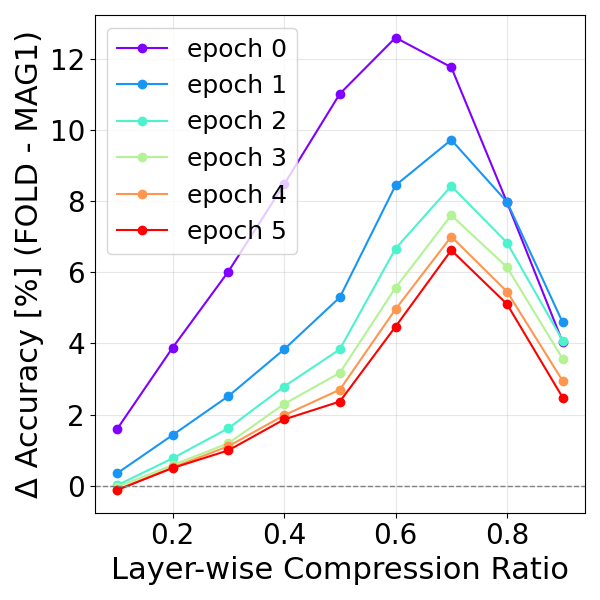}}
     \subfloat[][Before and after fine-tuning for 5 epochs: \fold (\textbf{left}) and \magone (\textbf{right}).]{\includegraphics[height=0.2\textwidth]{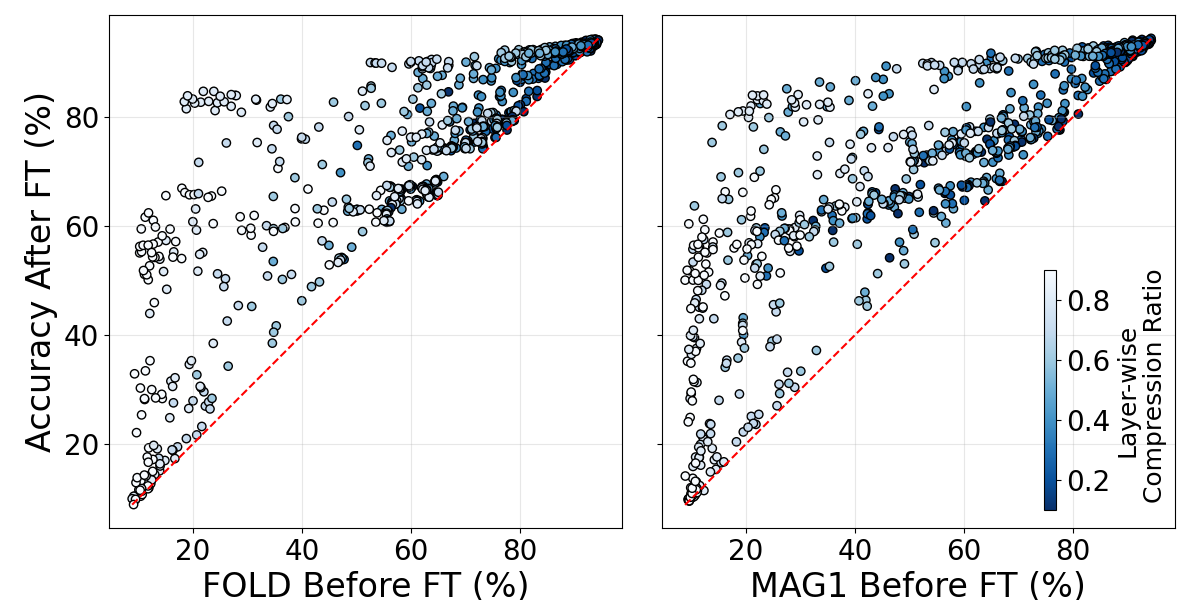}}
     
     \subfloat[][\magone vs \fold, fine-tuning for\\1 (\textbf{left}) and 5 (\textbf{right}) epochs.]
     {\includegraphics[height=0.2\textwidth]{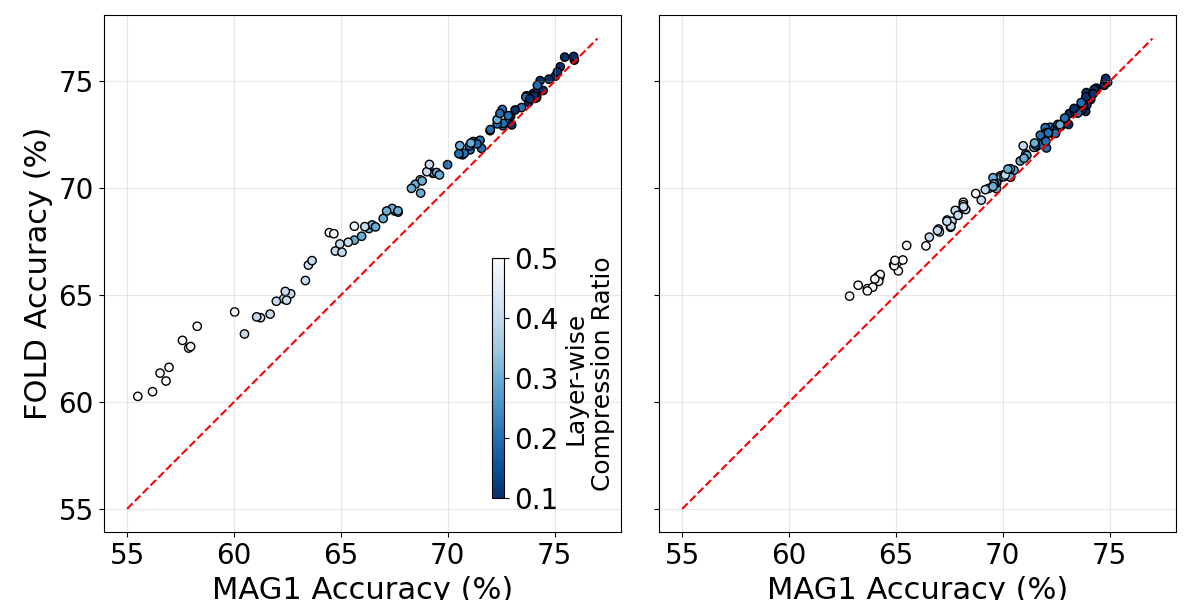}}
     \subfloat[][\magone vs \fold\\accuracy gap after ft.]{\includegraphics[height=0.2\textwidth]{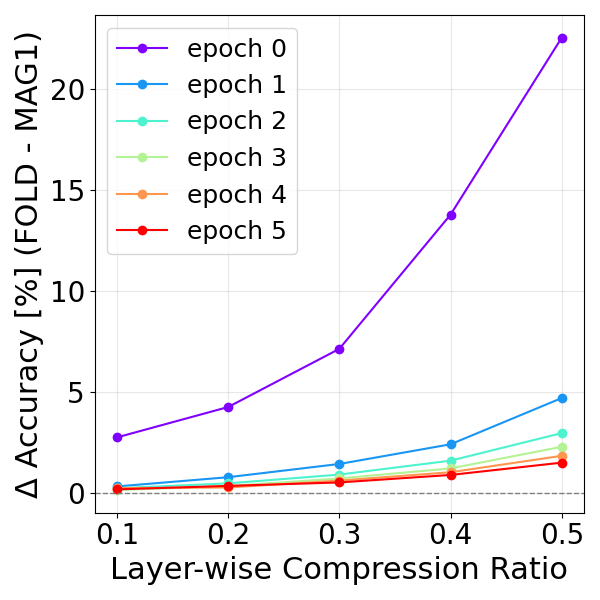}}
     \subfloat[][Before and after fine-tuning for 5 epochs: \fold (\textbf{left}) and \magone (\textbf{right}).]
     {\includegraphics[height=0.2\textwidth]{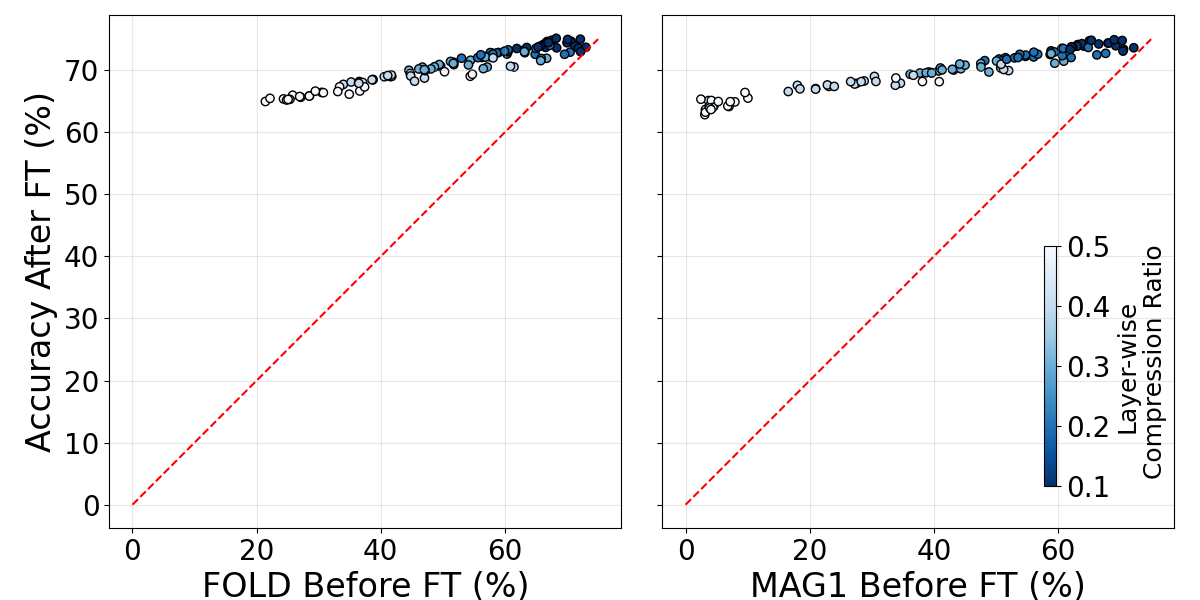}}
     
     \caption{\textbf{Folded models retain their accuracy advantage after fine-tuning.} Results for ResNet18 trained by Adam on CIFAR-10 (\textbf{top row}) and CLIP-ViT-B/32 trained on ImageNet-1K (\textbf{bottom row}): \textbf{(a,d)} compares post-compression accuracy of magnitude pruning (\magone) versus folding (\fold) after 1 and 5 epochs of fine-tuning. \textbf{(b,e)} show the accuracy gap between folding and pruning as a function of fine-tuning epochs, demonstrating that folding maintains a consistent lead, \ie the \fold accuracy delta is positive. \textbf{(c,f)} illustrate accuracy trajectories before and after 5 epochs of fine-tuning for both methods, highlighting that folded models recover accuracy faster. Further results in Appendix~\ref{appx:further_results}.}
     \label{fig:ft_CNN_and_ViTs_1_FOLD_vs_MAG_L1}
\end{figure}

We empirically compare model folding and structured pruning across CNNs, ViTs and LLaMA-60M models under matched training setups. Unless stated otherwise, we do not apply gradient-based fine-tuning: for CNNs we only re-estimate batch-normalization statistics via a single forward pass using \textsc{REPAIR}~\citep{jordan2023repairrenormalizingpermutedactivations} to isolate structural effects, and ViTs / LLaMA-60M models are left uncalibrated. Note that \textsc{REPAIR} was recently shown to substantially improve post-compression performance for pruned models~\citep{saikumar2025signalcollapseoneshotpruning}, and has also been applied on top of folding~\citep{wang2025forget}. We report results (i) immediately after compression (CNNs after \textsc{REPAIR}, ViTs with no further step), (ii) for ViTs additionally after a LayerNorm reset, and (iii) for both families after 1--5 epochs of full fine-tuning.

\textbf{Folding vs.\ Structured Pruning on CNNs and ViTs.}
We compare model folding (\fold) with structured magnitude pruning (\magprune) under L1 and L2 criteria (\magone, \magtwo) across representative CNN and ViT architectures. \Figref{fig:accuracy_FOLD_vs_MAG_L1} summarizes results: scatter plots show accuracy of \magone vs.\ \fold for each trained model, with compression ratio indicated by color. Results for \magtwo are in Appendix~\ref{appx:further_results}. Box plots depict the distribution of accuracy differences between \fold and \magone. Positive differences indicate folding outperforms pruning, with the gap widening at higher sparsity. This trend holds across ResNet18, PreActResNet18, ViT-B/32, and CLIP ViT-B/32 on both CIFAR-10 and ImageNet-1K, demonstrating robustness to architecture and dataset scale. These results support our theoretical claim (\Secref{sec:theory}): folding projects onto cluster-structured subspaces, preserving parameter alignment and reducing functional distortion, yielding consistent accuracy gains over magnitude pruning.

\begin{table*}[b!]
\centering
\small
\setlength{\tabcolsep}{4pt}
\renewcommand{\arraystretch}{1.2}
\resizebox{1.00\textwidth}{!}{%
\begin{tabular}{cccc|cc|cc}
\hline
\rowcolor{lightblue}
\textbf{weight\_decay} & \textbf{warmup\_steps} & \textbf{max\_lr} & 
\textbf{PPL$\downarrow$ 0\% sparsity} & \textbf{PPL$\downarrow$ \magtwo (20\%)} & \textbf{PPL$\downarrow$ \fold (20\%)} &
\textbf{PPL$\downarrow$ \magtwo (50\%)} & \textbf{PPL$\downarrow$ \fold (50\%)} \\
\hline
0.01 & 880  & 0.001 & 32.11 & 54.51 & \textbf{47.17} & 398.62 & \textbf{221.32} \\
0.01 & 1100 & 0.001 & 32.14 & 50.11 & \textbf{46.75} & 220.54 & \textbf{172.57} \\
0.01 & 2200 & 0.001 & 32.20 & \textbf{46.57} & 47.54 & \textbf{174.58} & 216.36 \\
0    & 880  & 0.001 & 32.17 & 51.14 & \textbf{48.23} & \textbf{220.33} & 223.86 \\
0    & 1100 & 0.001 & 32.21 & 50.03 & \textbf{47.47} & 231.41 & \textbf{204.47} \\
0    & 2200 & 0.001 & 32.40 & \textbf{46.38} & 46.92 & \textbf{177.48} & 185.27 \\
0.01 & 880  & 0.005 & 30.12 & 68.70 & \textbf{55.32} & 641.69 & \textbf{302.43} \\
0.01 & 1100 & 0.005 & 29.77 & 68.29 & \textbf{49.81} & 564.96 & \textbf{234.56} \\
0.01 & 2200 & 0.005 & 29.60 & 54.50 & \textbf{47.04} & 360.52 & \textbf{208.02} \\
0    & 880  & 0.005 & 30.47 & 78.73 & \textbf{62.35} & 762.05 & \textbf{395.04} \\
0    & 1100 & 0.005 & 30.17 & 59.20 & \textbf{49.58} & 544.87 & \textbf{184.74} \\
0    & 2200 & 0.005 & 29.75 & 56.18 & \textbf{46.55} & 353.35 & \underline{\textbf{165.21}} \\
0.01 & 880  & 0.01  & 31.82 & 66.98 & \textbf{51.80} & 910.48 & \textbf{406.75} \\
0.01 & 1100 & 0.01  & 29.85 & 102.41 & \textbf{67.69} & 977.92 & \textbf{367.94} \\
0.01 & 2200 & 0.01  & \underline{29.25} & 51.46 & \underline{\textbf{44.28}} & 323.68 & \textbf{288.83} \\
0    & 880  & 0.01  & 108.56 & 129.77 & \textbf{123.85} & 279.17 & \textbf{198.72} \\
0    & 1100 & 0.01  & 30.31 & 97.97 & \textbf{61.19} & 860.14 & \textbf{533.62} \\
0    & 2200 & 0.01  & 29.57 & 54.43 & \textbf{47.77} & 351.11 & \textbf{209.06} \\
\hline
\end{tabular}
}
\caption{\textbf{Evaluation of \fold and \magtwo on LLaMA-60M.} We train and evaluate 18 LLaMA-family models with 60M parameters on C4 by varying max\_lr, warmup steps and weight decay. Columns 4--8 show perplexity of the trained model before compression and after pruning / folding using layer-wise pruning ratio of $20\%$ and $50\%$. We prune only FFN blocks. Except for low learning rates with long warmup schedules, \fold outperforms \magtwo (highlighted in bold).}
\label{tab:llama60m}
\end{table*}

\textbf{Performance Comparison after Lightweight and Full Fine-Tuning.}
The previous results isolate structural effects by evaluating models without further optimization. We now test whether folding’s advantage persists after fine-tuning. 
\Figref{fig:ftLN_ViTs_FOLD_vs_MAG_L1} compares \magone and \fold on ViTs under lightweight LayerNorm-only adaptation: across ViT-B/32 (CIFAR-10) and CLIP ViT-B/32 (ImageNet-1K), folding consistently reaches higher post-compression accuracy, with the gap $\Delta=\mathrm{Acc}(\fold)-\mathrm{Acc}(\magone)$ remaining positive and typically growing with compression.

Next, we allow brief fine-tuning (1–5 epochs). 
\Figref{fig:ft_CNN_and_ViTs_1_FOLD_vs_MAG_L1} shows that folded models (a,d) start from higher accuracy and retain their lead, (b,e) maintain a positive relative gap, and (c,f) recover faster with fewer plateaus. 
Thus, folding provides a better initialization and requires fewer updates to regain performance, making it advantageous in settings with limited fine-tuning.

\textbf{Performance Comparison on LLaMA-60M.}
\Tabref{tab:llama60m} evaluates \fold and \magtwo on LLaMA-60M trained on C4 under 18 hyperparameter settings (varying learning rate, warmup, and weight decay). We prune or fold only the FFN blocks and report perplexity at baseline and at 20\% and 50\% layer-wise sparsity. Except for models trained with very low learning rates and long warmup, \fold consistently outperforms \magtwo. Similar finding have been obtained for LLaMA-130M models in \Tabref{tab:llama130m}.

\section{Model Compression Ablation Studies}
\label{sec:ablations}
The previous sections demonstrated that folding often outperforms structured pruning across architectures and compression ratios. On ResNets and ViTs, we probe which training factors impact this advantage. Specifically, we analyze sensitivity to learning rate, the use of sharpness-aware training (SAM)~\citep{foret2021sharpnessawareminimizationefficientlyimproving}, regularization and data augmentation~\citep{prabhu2019understandingadversarialrobustnessloss}---the hyperparameters known to influence loss landscape geometry and generalization~\citep{fort2019largescalestructureneural,li2018visualizinglosslandscapes,neyshabur2017exploringgeneralizationdeeplearning,chen2022visiontransformersoutperformresnets} in non-trivial ways~\citep{andriushchenko2023modernlookrelationshipsharpness}. To validate these curvature-related hypotheses, Appendix~\ref{appx:rebuttal} includes a sharpness analysis. Our measurements quantify how hyperparameters shift the local geometry of the loss landscape and help explain when \fold’s advantage widens or narrows.

\begin{wrapfigure}{r}{0.5\textwidth}
     \centering
     \includegraphics[height=0.24\textwidth]{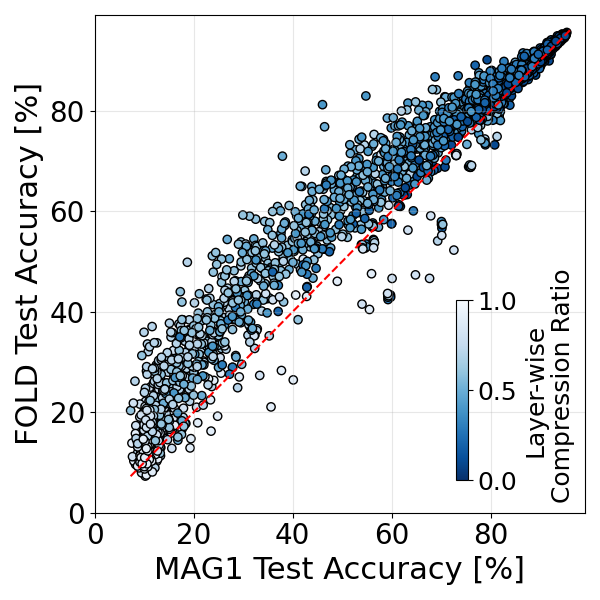}
     \includegraphics[height=0.24\textwidth]{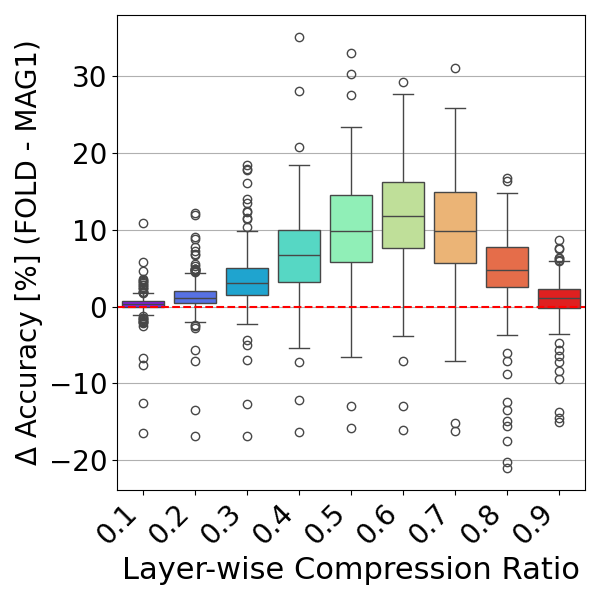}
     \caption{\textbf{Optimizer effect} evaluated on ResNet18 checkpoints trained on CIFAR-10 with SGD (no L1 regularization). The figure complements  \Figref{fig:accuracy_FOLD_vs_MAG_L1}(a).}
     \label{fig:optimizer}
\end{wrapfigure}

\textbf{Role of Optimizer.}
We repeat the ResNet18 analysis under Adam and SGD to gauge optimizer sensitivity. Compared to the Adam-trained sweep in \Figref{fig:accuracy_FOLD_vs_MAG_L1}(a), the complementary SGD sweep in \Figref{fig:optimizer} shows the same qualitative ordering—\fold exceeds \magone across compression levels—but with different baselines and dispersion: SGD checkpoints form a tighter cloud and exhibit a smaller median gap, whereas Adam yields larger variance and at times a more pronounced \fold advantage, especially at higher compression. 
The \fold--\magone difference remains positive under both optimizers in most cases, but its magnitude is optimizer-dependent. 

\textbf{Effect of Learning Rate.}
\Figref{fig:lr-effects} reports post-compression accuracy for \fold versus \magone across learning rates on ResNet18 (Adam, SGD), PreActResNet18, and ViT-B/32. With Adam, \fold’s edge is largest at moderate–low rates, narrows and can reverse at very high rates, and vanishes again at extremely small rates (both methods degrade). For SGD, the dependence is weaker and can be inverted (\eg ViT-B/32). 
The effect of learning rate is expressed through sharpness (see Appendix~\ref{appx:rebuttal}): when training places the model in regions where folding produces a smaller sharpness increase than pruning, folding wins. When folding produces a larger sharpness increase (most visible at high learning rates under Adam), pruning can outperform.
Adaptive methods like Adam are associated with sharper minima and distinct generalization behavior compared to SGD, amplifying this sensitivity~\citep{wilson2018marginalvalueadaptivegradient,jastrzębski2018factorsinfluencingminimasgd,zhou2021theoreticallyunderstandingsgdgeneralizes}.

\begin{figure}[h]
     \centering
     \vskip -.4cm
     \subfloat[][ResNet18, Adam, \textbf{no} L1 reg., \textbf{no} weight decay]{
        \includegraphics[height=0.24\textwidth]{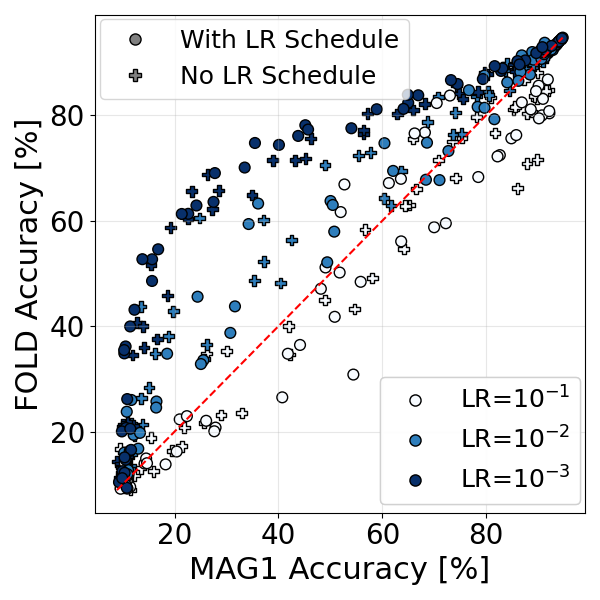}
        \includegraphics[height=0.24\textwidth]{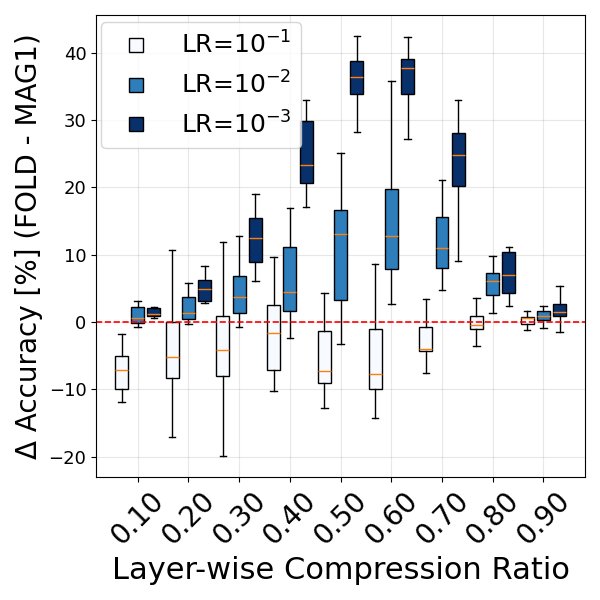}
    }
    \subfloat[][ResNet18, SGD, \textbf{no} L1 reg., \textbf{no} weight decay]{
        \includegraphics[height=0.24\textwidth]{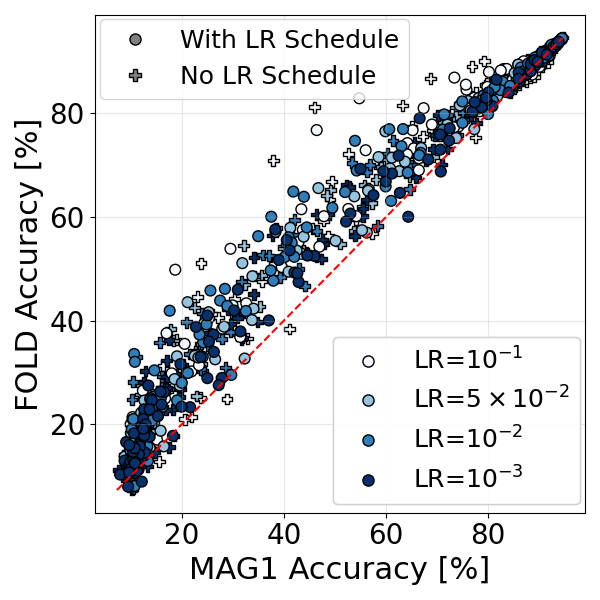}
        \includegraphics[height=0.24\textwidth]{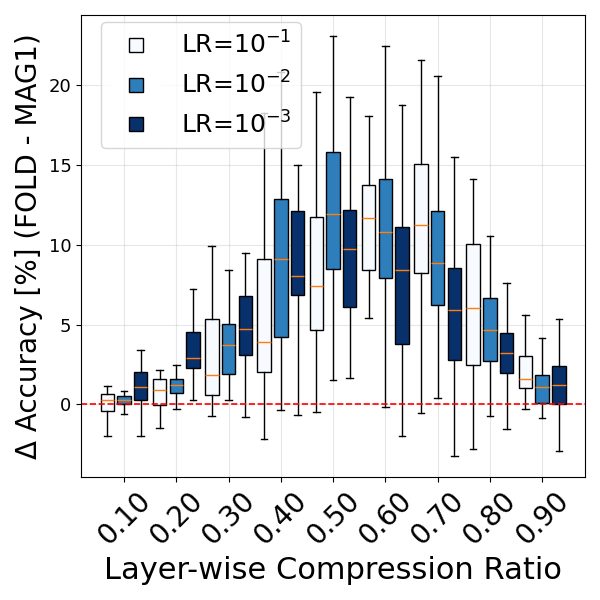}
    }

    \subfloat[][PreActResNet18, SGD]{
        \includegraphics[height=0.24\textwidth]{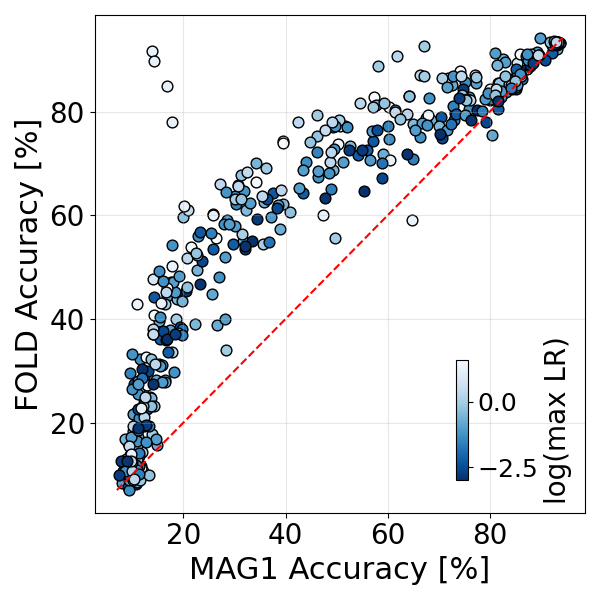}
        \includegraphics[height=0.24\textwidth]{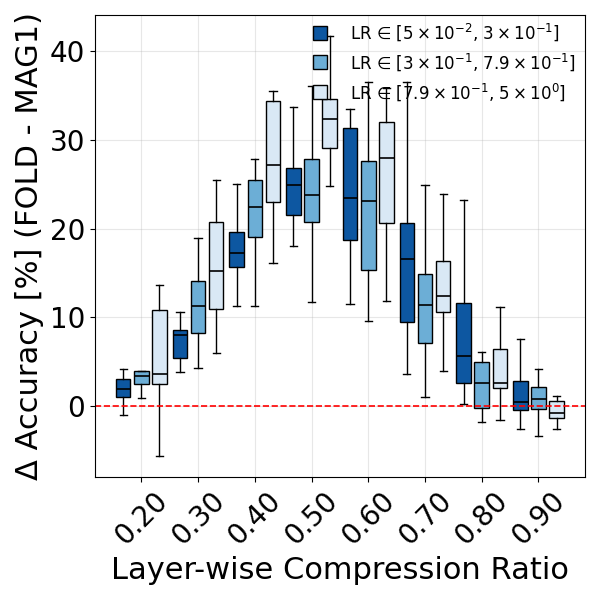}
    }
    \subfloat[][ViT-B/32, SGD]{
        \includegraphics[height=0.24\textwidth]{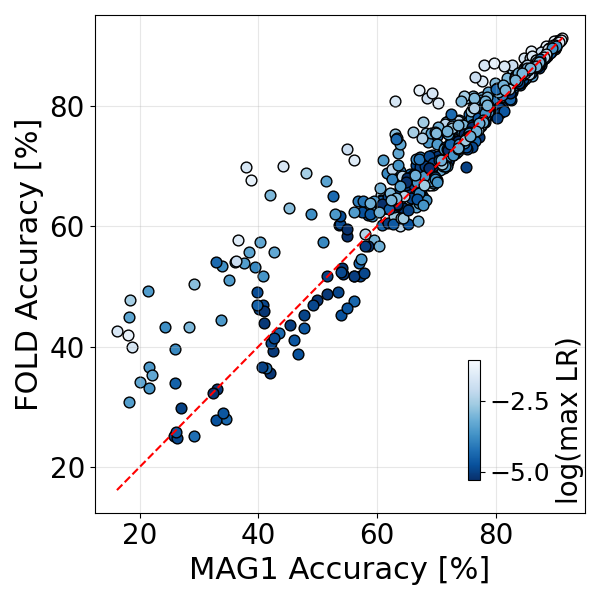}
        \includegraphics[height=0.24\textwidth]{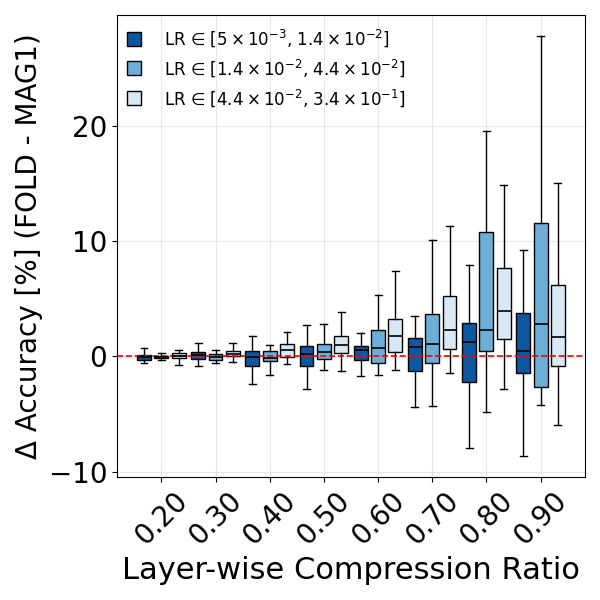}
    }
     
     \caption{\textbf{Learning rate modulates folding’s edge.} Post-compression accuracy of \fold and \magone across learning rates: ResNet18 with Adam \textbf{(a)} and SGD \textbf{(b)}, PreActResNet18 \textbf{(c)}, and ViT-B/32 \textbf{(d)}. \fold leads at moderate–low rates. With Adam, the gap shrinks or reverses at very high rates, and closes again at extremely small rates. SGD shows weaker or opposite dependence.}
    \label{fig:lr-effects}
\end{figure}

\textbf{Effect of SAM.}
\Figref{fig:sam-effects} evaluates training with and without SAM and measures post-compression accuracy. Across models, SAM lifts both methods. 
With light L1 regularization ($10^{-5}$) during training shown in (b), pruning narrows the gap at \emph{low} compression (where induced sparsity aligns with L1), yet \fold regains and extends its lead as compression increases. These trends are consistent with the view that SAM steers training to flatter solutions, reducing curvature sensitivity. 
Within this flatter neighborhood both pruning and folding projections operate inside the same robustness ball, so their geometric differences matter less and the gap narrows—an effect stronger for ViT-B/32, where high $\rho$ homogenizes head\,/\,channel saliencies and reduces the relative advantage of clustering.

\begin{figure}[h]
     \centering
     \vskip -.4cm
     \subfloat[][ResNet18, Adam, \magone vs \fold, \textbf{without} L1 reg.]{
        \includegraphics[height=0.24\textwidth]{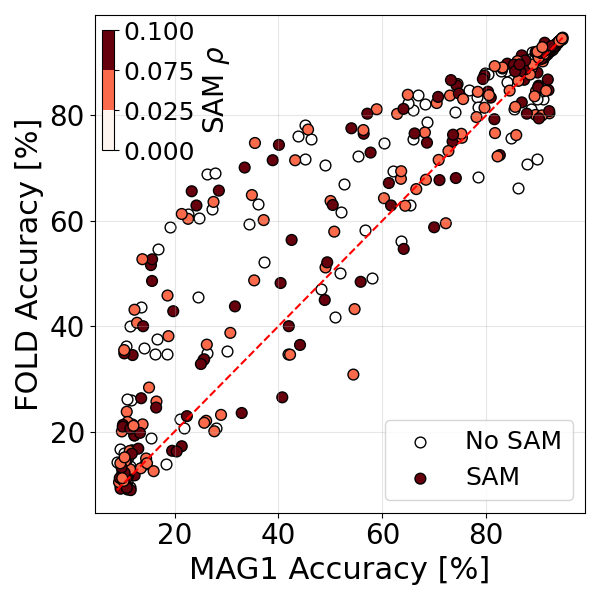}
        \includegraphics[height=0.24\textwidth]{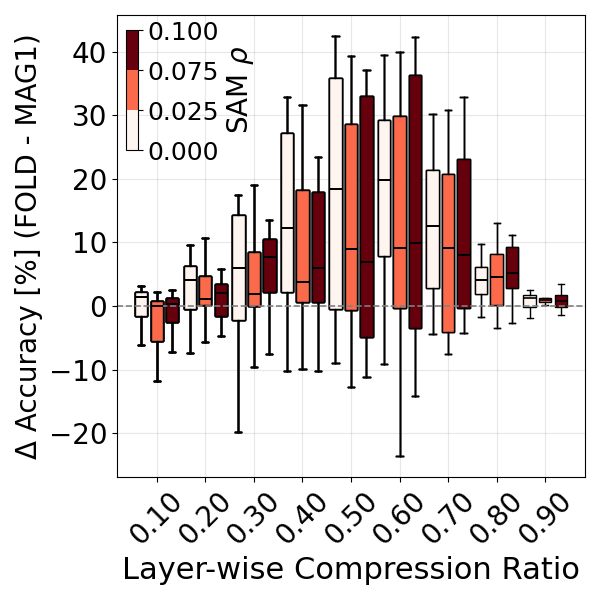}
     }
     \subfloat[][ResNet18, Adam, \magone vs \fold, \textbf{with} L1 reg.]{
        \includegraphics[height=0.24\textwidth]{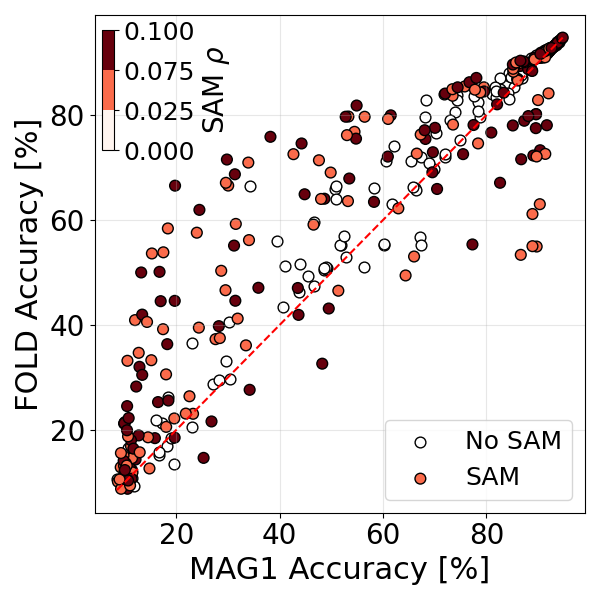}
        \includegraphics[height=0.24\textwidth]{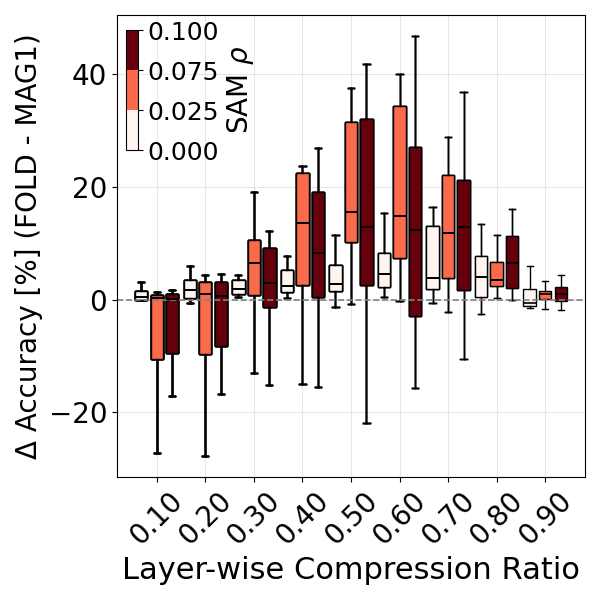}
     }

     \subfloat[][PreActResNet, \magone vs \fold, \textbf{no} L1 regularization]{
        \includegraphics[height=0.24\textwidth]{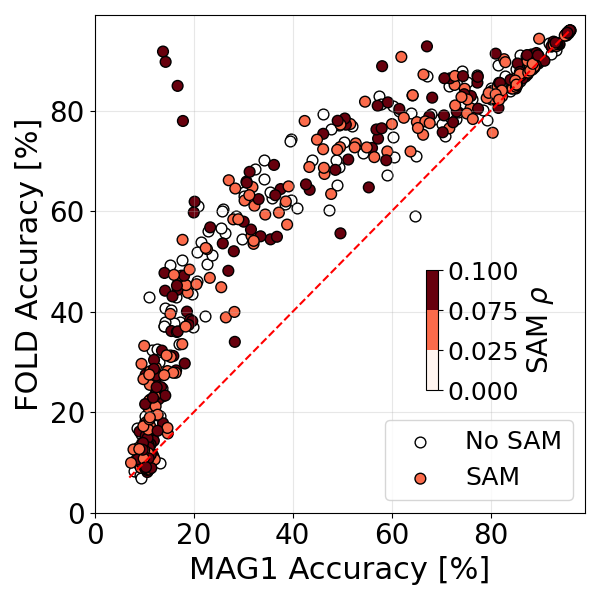}
        \includegraphics[height=0.24\textwidth]{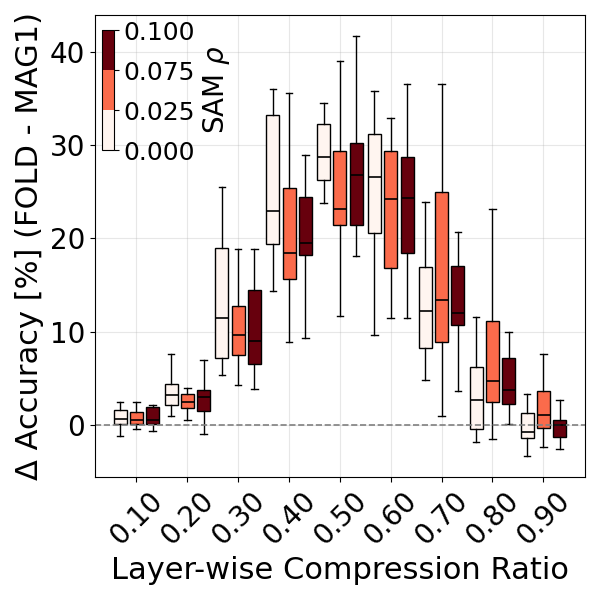}
     }
     \subfloat[][ViT-B/32, \magone vs \fold, \textbf{no} L1 regularization]{
        \includegraphics[height=0.24\textwidth]{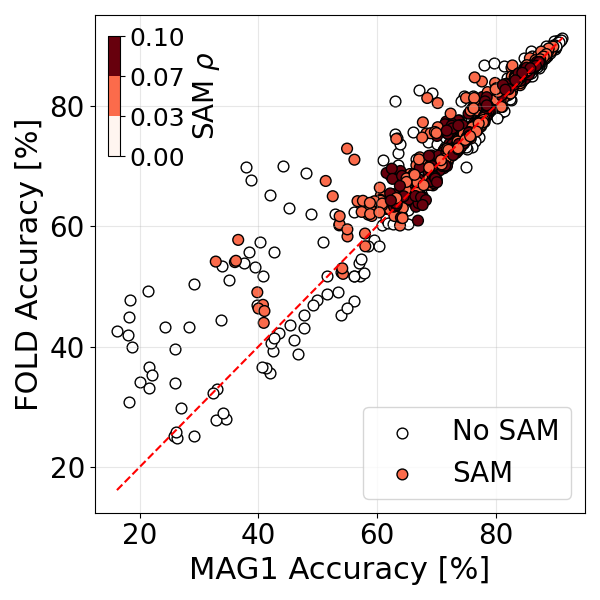}
        \includegraphics[height=0.24\textwidth]{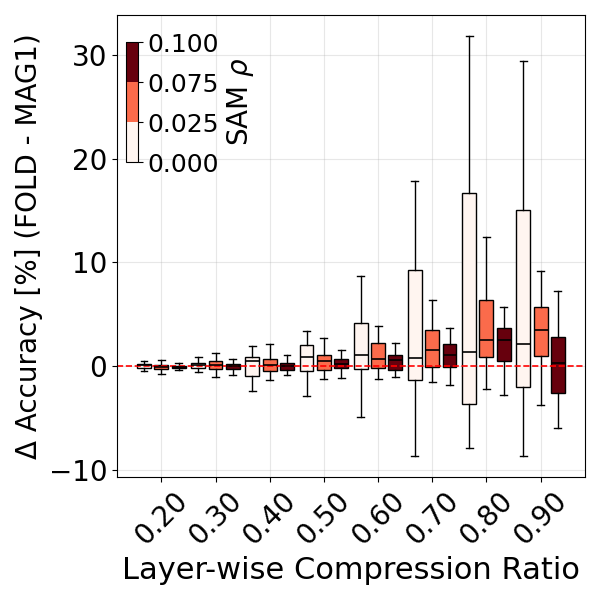}
      }

     \caption{\textbf{SAM~\citep{foret2021sharpnessawareminimizationefficientlyimproving} can boost model compression.} Post-compression accuracy under training with\,/\,without SAM. \textbf{(a)} ResNet18 (Adam), no L1. \textbf{(b)} ResNet18 (Adam), L1$=10^{-5}$. \textbf{(c)} PreActResNet18 (SGD), no L1. \textbf{(d)} ViT\mbox{-}B/32, no L1. SAM improves both \fold and \magone, but the uplift is consistently larger for \fold, especially with Adam. Light L1 regularization helps \magone at low compression, yet \fold retains a clear advantage at moderate–high compression.}
    \label{fig:sam-effects}
\end{figure}

\textbf{Effect of Data Augmentation.}
\Figref{fig:raug-effects} plots the distribution of \(\Delta\)Accuracy \((\fold-\magone)\) across checkpoints versus the layer-wise compression ratio, contrasting runs without (gray) and with RandAugment (green). For ResNet18 (Adam and SGD) and PreActResNet18, RAUG reduces or shifts \fold’s relative benefit. In contrast, for ViT-B/32 RAUG increases \fold’s advantage: the median \(\Delta\) rises with compression, suggesting that augmented ViT representations are especially amenable to projection-based removal. A plausible mechanism is that augmentation biases training toward flatter, more invariant solutions. This is supported by our sharpness analysis in Appendix~\ref{appx:rebuttal} and consistent with recent theory linking augmentation-induced input perturbations to equivalent parameter-space perturbations and showing that augmentations bias training toward flatter minima~\citep{yoo2025flatminimaperspectiveunderstanding}. 
Standard augmentation (\texttt{augm=True}) shows a similar trend and is omitted for brevity.

\begin{figure}[h]
     \centering
     \vskip -.4cm
     \subfloat[][ResNet18, Adam]{
        \includegraphics[height=0.24\textwidth]{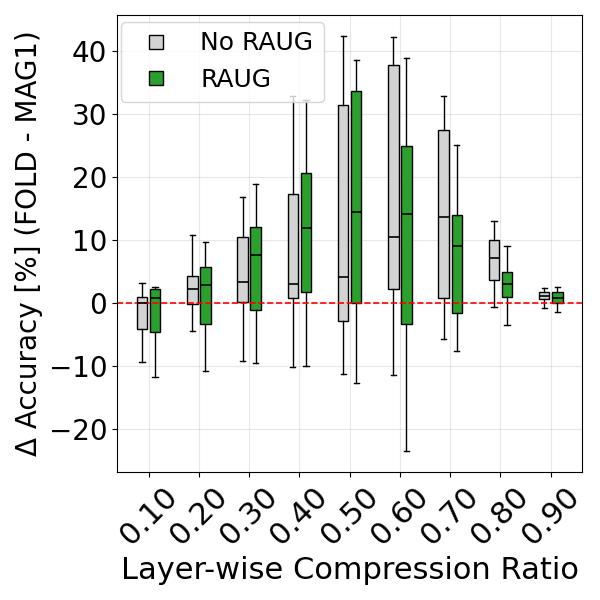}}
     \subfloat[][ResNet18, SGD]{
        \includegraphics[height=0.24\textwidth]{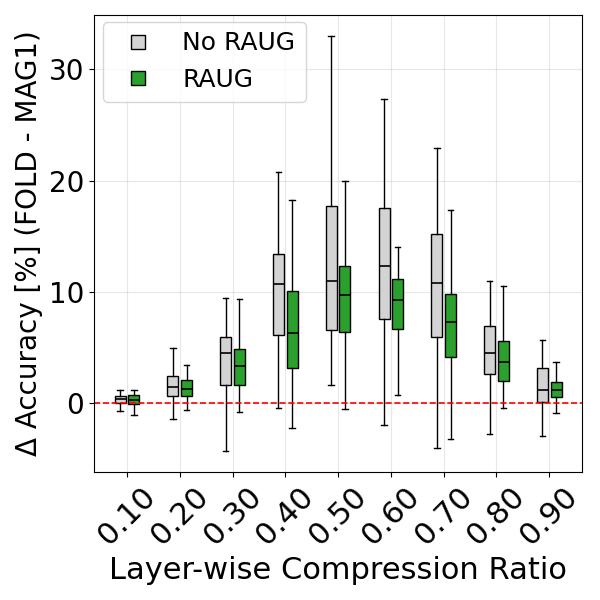}}
     \subfloat[][PreActResNet]{
        \includegraphics[height=0.24\textwidth]{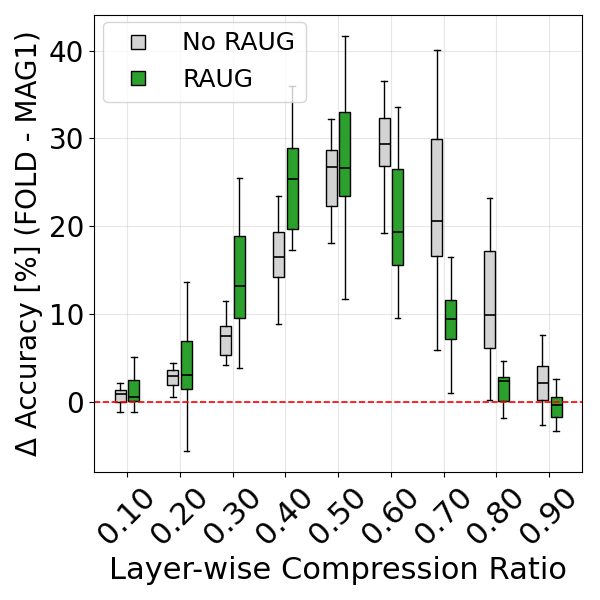}}
    \subfloat[][ViT-B/32]{
        \includegraphics[height=0.24\textwidth]{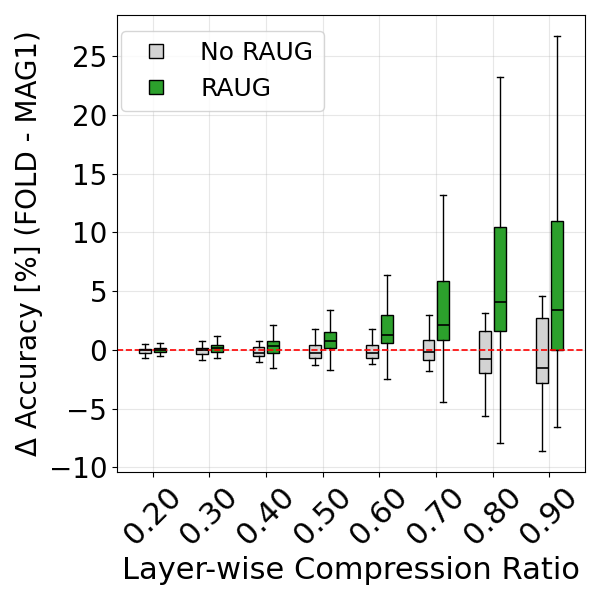}
     }
     \caption{\textbf{Augmentations have a generally positive effect on the post-compression accuracy.} 
    Post-compression accuracy without\,/\,with random augmentations for \textbf{(a)} ResNet18 (Adam), \textbf{(b)} ResNet18 (SGD), \textbf{(c)} PreActResNet18, and \textbf{(d)} ViT-B/32. 
    Augmentations boost both \fold and \magone. On ResNet18 they also narrow \fold’s advantage—most noticeably at moderate compression—consistent with added invariances making axis-aligned removals less damaging. 
    In ViT-B/32, augmentations are essential for folding\protect\footnotemark.}
    \label{fig:raug-effects}
\end{figure}

These ablations reveal a consistent pattern: conditions that encourage flatter and structured solutions—moderately low learning rates and SAM with a small–moderate radius—magnify \fold’s advantage, whereas extremes reduce it: very high or very low learning rates, stronger augmentations, or large SAM radii narrow the gap; SGD generally dampens all effects relative to Adam. This aligns with our projection view (\Secref{sec:theory}): when weights are well aligned, clustering reduces projection error more than coordinate removal and thus perturbs the function less, while weaker alignment or broad robustness neighborhoods make the two projections behave more similarly.

\section{Conclusion, Limitations, and Outlook}
We framed structured pruning and model folding as projection-based compression and showed that folding achieves smaller parameter deviation with a one-rank slack, implying tighter functional preservation under mild smoothness. A calibration-free evaluation over $>$1'000 checkpoints (ResNet18, PreActResNet18, ViT-B/32, CLIP ViT-B/32; CIFAR-10, ImageNet-1K; and LLaMA-60M and LLaMA-130M on C4) found that \fold typically surpasses \magone in post-compression accuracy, with the clearest gains at moderate–high compression and under training conditions that induce flatter, more structured solutions (\eg moderate learning rates, SAM). The gap narrows at very low compression and can shrink under strong data augmentation or large SAM radii, but the overall trend is robust across optimizers and hyperparameters.

\textbf{Limitations.}
Our theoretical guarantee allows a one-component increase in compressed rank but does not establish universal dominance at exactly matched sizes. Empirically, we focus on standard CNN and ViT families on CIFAR-10 and ImageNet-1K, as well as small LLaMA models on C4. For ViTs and LLaMA, pruning and folding are applied only to the FFN blocks. Extensions to attention layers is left for future work. We evaluate in strictly calibration-free settings, with optional BatchNorm/LayerNorm resets and short fine-tuning budgets, and compare primarily against magnitude-based structured pruning. Interactions with quantization, distillation, and unstructured sparsity are not considered. Larger LLMs are beyond the scope of this study due to the computational cost of training across diverse hyperparameter settings. We note that most SoTA pruning methods for LLMs rely on calibration data (\eg activation-aware/second-order) and are exclusively pruning-based.

\textbf{Outlook.}
We plan to extend folding to pruning\,/\,folding attention blocks, calibration-based settings and evaluate on larger LLMs\,/\,VLMs. We also plan to study interactions with quantization and adaptation methods. More broadly, our projection-based view positions folding as a geometry-aware primitive for compression: a foundation on which novel calibration-based compression methods, hybrid pipelines with quantization and distillation can be built, and a step toward principled model compression framework. In this sense, folding is not only a practical tool but a building block for the next generation of compression methods tailored to foundation models and deployment at scale.

\footnotetext{Note that the base accuracy of ViT-B/32 checkpoints trained without RAUG is lower than with RAUG.}

\paragraph{Reproducibility Statement.}
Our compression operators and evaluation protocol are described in \Secref{sec:theory}-\Secref{sec:exp}, with ablation studies in \Secref{sec:ablations}. A repository with configs and scripts to regenerate all figures/tables is linked in Appendix~\ref{appx:resources}. Complete proofs are in Appendix~\ref{appx:theory}. Related literature is summarized in Appendix~\ref{appx:related}. Training setups, datasets, links to the used checkpoints, and hyperparameter grids in Appendix~\ref{appx:hyperparameters}, and extended results in Appendix~\ref{appx:further_results} and Appendix~\ref{appx:rebuttal}.  Our limited LLM usage statement is in Appendix~\ref{appx:use_llms}. Together, these materials enable re-running the full pipeline and regenerating the results.

\paragraph{Acknowledgments.}
This work was in part supported by the FFG COMET K1 Centre “Pro$^{2}$Future II” (Cognitive and Sustainable Products and Production Systems of the Future; Contract No. 911655). The results presented in this paper were obtained using the computational resources of Pro2Future GmbH, and the Austrian Scientific Computing (ASC) infrastructure.

\bibliography{arxiv.bib}

@inproceedings{wang2025forget,
    title={Forget the Data and Fine-Tuning! Just Fold the Network to Compress},
    author={Dong Wang and Haris {\v{S}}iki{\'c} and Lothar Thiele and Olga Saukh},
    booktitle={The Thirteenth International Conference on Learning Representations},
    year={2025},
    url={https://openreview.net/forum?id=W2Wkp9MQsF}
}

@misc{jordan2023repairrenormalizingpermutedactivations,
      title={REPAIR: REnormalizing Permuted Activations for Interpolation Repair}, 
      author={Keller Jordan and Hanie Sedghi and Olga Saukh and Rahim Entezari and Behnam Neyshabur},
      year={2023},
      eprint={2211.08403},
      archivePrefix={arXiv},
      primaryClass={cs.LG},
      url={https://arxiv.org/abs/2211.08403}, 
}

@misc{andriushchenko2023modernlookrelationshipsharpness,
      title={A Modern Look at the Relationship between Sharpness and Generalization}, 
      author={Maksym Andriushchenko and Francesco Croce and Maximilian Müller and Matthias Hein and Nicolas Flammarion},
      year={2023},
      eprint={2302.07011},
      archivePrefix={arXiv},
      primaryClass={cs.LG},
      url={https://arxiv.org/abs/2302.07011}, 
}

@InProceedings{pmlr-v139-radford21a,
  title = 	 {Learning Transferable Visual Models From Natural Language Supervision},
  author =       {Radford, Alec and Kim, Jong Wook and Hallacy, Chris and Ramesh, Aditya and Goh, Gabriel and Agarwal, Sandhini and Sastry, Girish and Askell, Amanda and Mishkin, Pamela and Clark, Jack and Krueger, Gretchen and Sutskever, Ilya},
  booktitle = 	 {Proceedings of the 38th International Conference on Machine Learning},
  pages = 	 {8748--8763},
  year = 	 {2021},
  editor = 	 {Meila, Marina and Zhang, Tong},
  volume = 	 {139},
  series = 	 {Proceedings of Machine Learning Research},
  month = 	 {18--24 Jul},
  publisher =    {PMLR},
  url = 	 {https://proceedings.mlr.press/v139/radford21a.html},
}

@misc{ramesh2022hierarchicaltextconditionalimagegeneration,
      title={Hierarchical Text-Conditional Image Generation with CLIP Latents}, 
      author={Aditya Ramesh and Prafulla Dhariwal and Alex Nichol and Casey Chu and Mark Chen},
      year={2022},
      eprint={2204.06125},
      archivePrefix={arXiv},
      primaryClass={cs.CV},
      url={https://arxiv.org/abs/2204.06125}, 
}

@misc{wortsman2022modelsoupsaveragingweights,
      title={Model soups: averaging weights of multiple fine-tuned models improves accuracy without increasing inference time}, 
      author={Mitchell Wortsman and Gabriel Ilharco and Samir Yitzhak Gadre and Rebecca Roelofs and Raphael Gontijo-Lopes and Ari S. Morcos and Hongseok Namkoong and Ali Farhadi and Yair Carmon and Simon Kornblith and Ludwig Schmidt},
      year={2022},
      eprint={2203.05482},
      archivePrefix={arXiv},
      primaryClass={cs.LG},
      url={https://arxiv.org/abs/2203.05482}, 
}

@misc{han2015learningweightsconnectionsefficient,
      title={Learning both Weights and Connections for Efficient Neural Networks}, 
      author={Song Han and Jeff Pool and John Tran and William J. Dally},
      year={2015},
      eprint={1506.02626},
      archivePrefix={arXiv},
      primaryClass={cs.NE},
      url={https://arxiv.org/abs/1506.02626}, 
}

@misc{mishra2021acceleratingsparsedeepneural,
      title={Accelerating Sparse Deep Neural Networks}, 
      author={Asit Mishra and Jorge Albericio Latorre and Jeff Pool and Darko Stosic and Dusan Stosic and Ganesh Venkatesh and Chong Yu and Paulius Micikevicius},
      year={2021},
      eprint={2104.08378},
      archivePrefix={arXiv},
      primaryClass={cs.LG},
      url={https://arxiv.org/abs/2104.08378}, 
}

@misc{lu2023steplearningnmstructured,
      title={STEP: Learning N:M Structured Sparsity Masks from Scratch with Precondition}, 
      author={Yucheng Lu and Shivani Agrawal and Suvinay Subramanian and Oleg Rybakov and Christopher De Sa and Amir Yazdanbakhsh},
      year={2023},
      eprint={2302.01172},
      archivePrefix={arXiv},
      primaryClass={cs.LG},
      url={https://arxiv.org/abs/2302.01172}, 
}

@misc{ding2024usmlitequantizationsparsityaware,
      title={USM-Lite: Quantization and Sparsity Aware Fine-tuning for Speech Recognition with Universal Speech Models}, 
      author={Shaojin Ding and David Qiu and David Rim and Yanzhang He and Oleg Rybakov and Bo Li and Rohit Prabhavalkar and Weiran Wang and Tara N. Sainath and Zhonglin Han and Jian Li and Amir Yazdanbakhsh and Shivani Agrawal},
      year={2024},
      eprint={2312.08553},
      archivePrefix={arXiv},
      primaryClass={eess.AS},
      url={https://arxiv.org/abs/2312.08553}, 
}

@misc{bambhaniya2024progressivegradientflowrobust,
      title={Progressive Gradient Flow for Robust N:M Sparsity Training in Transformers}, 
      author={Abhimanyu Rajeshkumar Bambhaniya and Amir Yazdanbakhsh and Suvinay Subramanian and Sheng-Chun Kao and Shivani Agrawal and Utku Evci and Tushar Krishna},
      year={2024},
      eprint={2402.04744},
      archivePrefix={arXiv},
      primaryClass={cs.LG},
      url={https://arxiv.org/abs/2402.04744}, 
}

@misc{kurtic2022optimalbertsurgeonscalable,
      title={The Optimal BERT Surgeon: Scalable and Accurate Second-Order Pruning for Large Language Models}, 
      author={Eldar Kurtic and Daniel Campos and Tuan Nguyen and Elias Frantar and Mark Kurtz and Benjamin Fineran and Michael Goin and Dan Alistarh},
      year={2022},
      eprint={2203.07259},
      archivePrefix={arXiv},
      primaryClass={cs.CL},
      url={https://arxiv.org/abs/2203.07259}, 
}

@misc{sanh2020movementpruningadaptivesparsity,
      title={Movement Pruning: Adaptive Sparsity by Fine-Tuning}, 
      author={Victor Sanh and Thomas Wolf and Alexander M. Rush},
      year={2020},
      eprint={2005.07683},
      archivePrefix={arXiv},
      primaryClass={cs.CL},
      url={https://arxiv.org/abs/2005.07683}, 
}

@inproceedings{rouhani_microscaling_2023,
 author = {Darvish Rouhani, Bita and Lo, Daniel and Zhao, Ritchie and Liu, Ming and Fowers, Jeremy and Ovtcharov , Kalin and Vinogradsky, Anna and Massengill , Sarah and Yang, Lita and Bittner, Ray and Forin, Alessandro and Zhu, Haishan and Na, Taesik and Patel, Prerak and Che, Shuai and Chand Koppaka , Lok and SONG, XIA and Som, Subhojit  and Das, Kaustav  and T, Saurabh and Reinhardt , Steve and Lanka, Sitaram and Chung, Eric and Burger, Doug},
 booktitle = {Advances in Neural Information Processing Systems},
 editor = {H. Larochelle and M. Ranzato and R. Hadsell and M.F. Balcan and H. Lin},
 pages = {10271--10281},
 publisher = {Curran Associates, Inc.},
 title = {Pushing the Limits of Narrow Precision Inferencing at Cloud Scale with Microsoft Floating Point},
 url = {https://proceedings.neurips.cc/paper_files/paper/2020/file/747e32ab0fea7fbd2ad9ec03daa3f840-Paper.pdf},
 volume = {33},
 year = {2020}
}

@INPROCEEDINGS{zhang_fast_2022,
  author={Qian Zhang, Sai and McDanel, Bradley and Kung, H. T.},
  booktitle={2022 IEEE International Symposium on High-Performance Computer Architecture (HPCA)}, 
  title={FAST: DNN Training Under Variable Precision Block Floating Point with Stochastic Rounding}, 
  year={2022},
  volume={},
  number={},
  pages={846-860},
  doi={10.1109/HPCA53966.2022.00067}
}

@article{Yao_2019,
   title={Balanced Sparsity for Efficient DNN Inference on GPU},
   volume={33},
   ISSN={2159-5399},
   url={http://dx.doi.org/10.1609/aaai.v33i01.33015676},
   DOI={10.1609/aaai.v33i01.33015676},
   number={01},
   journal={Proceedings of the AAAI Conference on Artificial Intelligence},
   publisher={Association for the Advancement of Artificial Intelligence (AAAI)},
   author={Yao, Zhuliang and Cao, Shijie and Xiao, Wencong and Zhang, Chen and Nie, Lanshun},
   year={2019},
   month=jul, pages={5676–5683} 
}

@article{Kang_2020,
   title={Accelerator-Aware Pruning for Convolutional Neural Networks},
   ISSN={1558-2205},
   url={http://dx.doi.org/10.1109/TCSVT.2019.2911674},
   DOI={10.1109/tcsvt.2019.2911674},
   journal={IEEE Transactions on Circuits and Systems for Video Technology},
   publisher={Institute of Electrical and Electronics Engineers (IEEE)},
   author={Kang, Hyeong-Ju},
   year={2020},
   pages={1–1} 
}

@misc{frantar2023sparsegptmassivelanguagemodels,
      title={SparseGPT: Massive Language Models Can Be Accurately Pruned in One-Shot}, 
      author={Elias Frantar and Dan Alistarh},
      year={2023},
      eprint={2301.00774},
      archivePrefix={arXiv},
      primaryClass={cs.LG},
      url={https://arxiv.org/abs/2301.00774}, 
}

@misc{sun2024simpleeffectivepruningapproach,
      title={A Simple and Effective Pruning Approach for Large Language Models}, 
      author={Mingjie Sun and Zhuang Liu and Anna Bair and J. Zico Kolter},
      year={2024},
      eprint={2306.11695},
      archivePrefix={arXiv},
      primaryClass={cs.CL},
      url={https://arxiv.org/abs/2306.11695}, 
}

@misc{bauckhage2015,
      title={{K-means clustering is matrix factorization}}, 
      author={Christian Bauckhage},
      year={2015},
      howpublished={arXiv:1512.07548},
      url={https://arxiv.org/abs/1512.07548}
}

@misc{saikumar2025signalcollapseoneshotpruning,
      title={Signal Collapse in One-Shot Pruning: When Sparse Models Fail to Distinguish Neural Representations}, 
      author={Dhananjay Saikumar and Blesson Varghese},
      year={2025},
      eprint={2502.15790},
      archivePrefix={arXiv},
      primaryClass={cs.LG},
      url={https://arxiv.org/abs/2502.15790}, 
}

@misc{foret2021sharpnessawareminimizationefficientlyimproving,
      title={Sharpness-Aware Minimization for Efficiently Improving Generalization}, 
      author={Pierre Foret and Ariel Kleiner and Hossein Mobahi and Behnam Neyshabur},
      year={2021},
      eprint={2010.01412},
      archivePrefix={arXiv},
      primaryClass={cs.LG},
      url={https://arxiv.org/abs/2010.01412}, 
}

@misc{prabhu2019understandingadversarialrobustnessloss,
      title={Understanding Adversarial Robustness Through Loss Landscape Geometries}, 
      author={Vinay Uday Prabhu and Dian Ang Yap and Joyce Xu and John Whaley},
      year={2019},
      eprint={1907.09061},
      archivePrefix={arXiv},
      primaryClass={cs.LG},
      url={https://arxiv.org/abs/1907.09061}, 
}

@misc{fort2019largescalestructureneural,
      title={Large Scale Structure of Neural Network Loss Landscapes}, 
      author={Stanislav Fort and Stanislaw Jastrzebski},
      year={2019},
      eprint={1906.04724},
      archivePrefix={arXiv},
      primaryClass={cs.LG},
      url={https://arxiv.org/abs/1906.04724}, 
}

@inproceedings{li2018visualizinglosslandscapes,
    author = {Li, Hao and Xu, Zheng and Taylor, Gavin and Studer, Christoph and Goldstein, Tom},
    title = {Visualizing the loss landscape of neural nets},
    year = {2018},
    publisher = {Curran Associates Inc.},
    address = {Red Hook, NY, USA},
    booktitle = {Proceedings of the 32nd International Conference on Neural Information Processing Systems},
    pages = {6391–6401},
    numpages = {11},
    location = {Montr\'{e}al, Canada},
    series = {NIPS'18}
}

@misc{neyshabur2017exploringgeneralizationdeeplearning,
      title={Exploring Generalization in Deep Learning}, 
      author={Behnam Neyshabur and Srinadh Bhojanapalli and David McAllester and Nathan Srebro},
      year={2017},
      eprint={1706.08947},
      archivePrefix={arXiv},
      primaryClass={cs.LG},
      url={https://arxiv.org/abs/1706.08947}, 
}

@misc{chen2022visiontransformersoutperformresnets,
      title={When Vision Transformers Outperform ResNets without Pre-training or Strong Data Augmentations}, 
      author={Xiangning Chen and Cho-Jui Hsieh and Boqing Gong},
      year={2022},
      eprint={2106.01548},
      archivePrefix={arXiv},
      primaryClass={cs.CV},
      url={https://arxiv.org/abs/2106.01548}, 
}

@misc{jastrzębski2018factorsinfluencingminimasgd,
      title={Three Factors Influencing Minima in SGD}, 
      author={Stanisław Jastrzębski and Zachary Kenton and Devansh Arpit and Nicolas Ballas and Asja Fischer and Yoshua Bengio and Amos Storkey},
      year={2018},
      eprint={1711.04623},
      archivePrefix={arXiv},
      primaryClass={cs.LG},
      url={https://arxiv.org/abs/1711.04623}, 
}

@misc{zhou2021theoreticallyunderstandingsgdgeneralizes,
      title={Towards Theoretically Understanding Why SGD Generalizes Better Than ADAM in Deep Learning}, 
      author={Pan Zhou and Jiashi Feng and Chao Ma and Caiming Xiong and Steven Hoi and Weinan E},
      year={2021},
      eprint={2010.05627},
      archivePrefix={arXiv},
      primaryClass={cs.LG},
      url={https://arxiv.org/abs/2010.05627}, 
}

@misc{wilson2018marginalvalueadaptivegradient,
      title={The Marginal Value of Adaptive Gradient Methods in Machine Learning}, 
      author={Ashia C. Wilson and Rebecca Roelofs and Mitchell Stern and Nathan Srebro and Benjamin Recht},
      year={2018},
      eprint={1705.08292},
      archivePrefix={arXiv},
      primaryClass={stat.ML},
      url={https://arxiv.org/abs/1705.08292}, 
}

@misc{yoo2025flatminimaperspectiveunderstanding,
      title={A Flat Minima Perspective on Understanding Augmentations and Model Robustness}, 
      author={Weebum Yoo and Sung Whan Yoon},
      year={2025},
      eprint={2505.24592},
      archivePrefix={arXiv},
      primaryClass={cs.LG},
      url={https://arxiv.org/abs/2505.24592}, 
}

@article{Hartigan1979,
  author = {Hartigan, J. A. and Wong, M. A.},
  journal = {JSTOR: Applied Statistics},
  number = 1,
  pages = {100--108},
  title = {A k-means clustering algorithm},
  volume = 28,
  year = 1979
}

@article{raffel2020exploring,
  title={Exploring the limits of transfer learning with a unified text-to-text transformer},
  author={Raffel, Colin and Shazeer, Noam and Roberts, Adam and Lee, Katherine and Narang, Sharan and Matena, Michael and Zhou, Yanqi and Li, Wei and Liu, Peter J},
  journal={Journal of machine learning research},
  volume={21},
  number={140},
  pages={1--67},
  year={2020}
}

@misc{raffel2023t5,
      title={Exploring the Limits of Transfer Learning with a Unified Text-to-Text Transformer}, 
      author={Colin Raffel and Noam Shazeer and Adam Roberts and Katherine Lee and Sharan Narang and Michael Matena and Yanqi Zhou and Wei Li and Peter J. Liu},
      year={2023},
      eprint={1910.10683},
      archivePrefix={arXiv},
      primaryClass={cs.LG},
      url={https://arxiv.org/abs/1910.10683}, 
}

@misc{sltrain,
      title={SLTrain: a sparse plus low-rank approach for parameter and memory efficient pretraining}, 
      author={Andi Han and Jiaxiang Li and Wei Huang and Mingyi Hong and Akiko Takeda and Pratik Jawanpuria and Bamdev Mishra},
      year={2024},
      eprint={2406.02214},
      archivePrefix={arXiv},
      primaryClass={cs.LG},
      url={https://arxiv.org/abs/2406.02214}, 
}

@misc{glentis2025scalableparametermemoryefficient,
      title={Scalable Parameter and Memory Efficient Pretraining for LLM: Recent Algorithmic Advances and Benchmarking}, 
      author={Athanasios Glentis and Jiaxiang Li and Qiulin Shang and Andi Han and Ioannis Tsaknakis and Quan Wei and Mingyi Hong},
      year={2025},
      eprint={2505.22922},
      archivePrefix={arXiv},
      primaryClass={cs.LG},
      url={https://arxiv.org/abs/2505.22922}, 
}

@misc{touvron2023llama2openfoundation,
      title={Llama 2: Open Foundation and Fine-Tuned Chat Models}, 
      author={Hugo Touvron and Louis Martin and Kevin Stone and Peter Albert and Amjad Almahairi and Yasmine Babaei and Nikolay Bashlykov and Soumya Batra and Prajjwal Bhargava and Shruti Bhosale and Dan Bikel and Lukas Blecher and Cristian Canton Ferrer and Moya Chen and Guillem Cucurull and David Esiobu and Jude Fernandes and Jeremy Fu and Wenyin Fu and Brian Fuller and Cynthia Gao and Vedanuj Goswami and Naman Goyal and Anthony Hartshorn and Saghar Hosseini and Rui Hou and Hakan Inan and Marcin Kardas and Viktor Kerkez and Madian Khabsa and Isabel Kloumann and Artem Korenev and Punit Singh Koura and Marie-Anne Lachaux and Thibaut Lavril and Jenya Lee and Diana Liskovich and Yinghai Lu and Yuning Mao and Xavier Martinet and Todor Mihaylov and Pushkar Mishra and Igor Molybog and Yixin Nie and Andrew Poulton and Jeremy Reizenstein and Rashi Rungta and Kalyan Saladi and Alan Schelten and Ruan Silva and Eric Michael Smith and Ranjan Subramanian and Xiaoqing Ellen Tan and Binh Tang and Ross Taylor and Adina Williams and Jian Xiang Kuan and Puxin Xu and Zheng Yan and Iliyan Zarov and Yuchen Zhang and Angela Fan and Melanie Kambadur and Sharan Narang and Aurelien Rodriguez and Robert Stojnic and Sergey Edunov and Thomas Scialom},
      year={2023},
      eprint={2307.09288},
      archivePrefix={arXiv},
      primaryClass={cs.CL},
      url={https://arxiv.org/abs/2307.09288}, 
}

@misc{touvron2023llamaopenefficientfoundation,
      title={LLaMA: Open and Efficient Foundation Language Models}, 
      author={Hugo Touvron and Thibaut Lavril and Gautier Izacard and Xavier Martinet and Marie-Anne Lachaux and Timothée Lacroix and Baptiste Rozière and Naman Goyal and Eric Hambro and Faisal Azhar and Aurelien Rodriguez and Armand Joulin and Edouard Grave and Guillaume Lample},
      year={2023},
      eprint={2302.13971},
      archivePrefix={arXiv},
      primaryClass={cs.CL},
      url={https://arxiv.org/abs/2302.13971}, 
}

@misc{bair2024adaptivesharpnessawarepruningrobust,
      title={Adaptive Sharpness-Aware Pruning for Robust Sparse Networks}, 
      author={Anna Bair and Hongxu Yin and Maying Shen and Pavlo Molchanov and Jose Alvarez},
      year={2024},
      eprint={2306.14306},
      archivePrefix={arXiv},
      primaryClass={cs.LG},
      url={https://arxiv.org/abs/2306.14306}, 
}

@misc{zhang2025sparseprunedeepnetwork,
      title={How Sparse Can We Prune A Deep Network: A Fundamental Limit Perspective}, 
      author={Qiaozhe Zhang and Ruijie Zhang and Jun Sun and Yingzhuang Liu},
      year={2025},
      eprint={2306.05857},
      archivePrefix={arXiv},
      primaryClass={stat.ML},
      url={https://arxiv.org/abs/2306.05857}, 
}

@misc{hinton2015distillingknowledgeneuralnetwork,
      title={Distilling the Knowledge in a Neural Network}, 
      author={Geoffrey Hinton and Oriol Vinyals and Jeff Dean},
      year={2015},
      eprint={1503.02531},
      archivePrefix={arXiv},
      primaryClass={stat.ML},
      url={https://arxiv.org/abs/1503.02531}, 
}

@misc{micaelli2019zeroshotknowledgetransferadversarial,
      title={Zero-shot Knowledge Transfer via Adversarial Belief Matching}, 
      author={Paul Micaelli and Amos Storkey},
      year={2019},
      eprint={1905.09768},
      archivePrefix={arXiv},
      primaryClass={cs.LG},
      url={https://arxiv.org/abs/1905.09768}, 
}

@misc{chen2019datafreelearningstudentnetworks,
      title={Data-Free Learning of Student Networks}, 
      author={Hanting Chen and Yunhe Wang and Chang Xu and Zhaohui Yang and Chuanjian Liu and Boxin Shi and Chunjing Xu and Chao Xu and Qi Tian},
      year={2019},
      eprint={1904.01186},
      archivePrefix={arXiv},
      primaryClass={cs.LG},
      url={https://arxiv.org/abs/1904.01186}, 
}

@misc{fang2020datafreeadversarialdistillation,
      title={Data-Free Adversarial Distillation}, 
      author={Gongfan Fang and Jie Song and Chengchao Shen and Xinchao Wang and Da Chen and Mingli Song},
      year={2020},
      eprint={1912.11006},
      archivePrefix={arXiv},
      primaryClass={cs.LG},
      url={https://arxiv.org/abs/1912.11006}, 
}

@inproceedings{yu2023data,
  title={Data-free knowledge distillation via feature exchange and activation region constraint},
  author={Yu, Shikang and Chen, Jiachen and Han, Hu and Jiang, Shuqiang},
  booktitle={Proceedings of the IEEE/CVF Conference on Computer Vision and Pattern Recognition},
  pages={24266--24275},
  year={2023}
}

@misc{haroush2020knowledgewithinmethodsdatafree,
      title={The Knowledge Within: Methods for Data-Free Model Compression}, 
      author={Matan Haroush and Itay Hubara and Elad Hoffer and Daniel Soudry},
      year={2020},
      eprint={1912.01274},
      archivePrefix={arXiv},
      primaryClass={cs.LG},
      url={https://arxiv.org/abs/1912.01274}, 
}

@misc{ren2023lowrankpruneandfactorizelanguagemodel,
      title={Low-Rank Prune-And-Factorize for Language Model Compression}, 
      author={Siyu Ren and Kenny Q. Zhu},
      year={2023},
      eprint={2306.14152},
      archivePrefix={arXiv},
      primaryClass={cs.CL},
      url={https://arxiv.org/abs/2306.14152}, 
}

@misc{horvath2024maestrouncoveringlowrankstructures,
      title={Maestro: Uncovering Low-Rank Structures via Trainable Decomposition}, 
      author={Samuel Horvath and Stefanos Laskaridis and Shashank Rajput and Hongyi Wang},
      year={2024},
      eprint={2308.14929},
      archivePrefix={arXiv},
      primaryClass={cs.LG},
      url={https://arxiv.org/abs/2308.14929}, 
}

@misc{lebedev2015speedingupconvolutionalneuralnetworks,
      title={Speeding-up Convolutional Neural Networks Using Fine-tuned CP-Decomposition}, 
      author={Vadim Lebedev and Yaroslav Ganin and Maksim Rakhuba and Ivan Oseledets and Victor Lempitsky},
      year={2015},
      eprint={1412.6553},
      archivePrefix={arXiv},
      primaryClass={cs.CV},
      url={https://arxiv.org/abs/1412.6553}, 
}

@misc{kim2016compressiondeepconvolutionalneural,
      title={Compression of Deep Convolutional Neural Networks for Fast and Low Power Mobile Applications}, 
      author={Yong-Deok Kim and Eunhyeok Park and Sungjoo Yoo and Taelim Choi and Lu Yang and Dongjun Shin},
      year={2016},
      eprint={1511.06530},
      archivePrefix={arXiv},
      primaryClass={cs.CV},
      url={https://arxiv.org/abs/1511.06530}, 
}

@article{li2016pruning,
      title={Pruning filters for efficient convnets},
      author={Li, Hao and Kadav, Asim and Durdanovic, Igor and Samet, Hanan and Graf, Hans Peter},
      journal={arXiv preprint arXiv:1608.08710},
      year={2016}
}

@InProceedings{luo2017iccv,
    author = {Luo, Jian-Hao and Wu, Jianxin and Lin, Weiyao},
    title = {ThiNet: A Filter Level Pruning Method for Deep Neural Network Compression},
    booktitle = {Proceedings of the IEEE International Conference on Computer Vision (ICCV)},
    month = {Oct},
    year = {2017}
}

@misc{hu2016networktrimming,
      title={Network Trimming: A Data-Driven Neuron Pruning Approach towards Efficient Deep Architectures}, 
      author={Hengyuan Hu and Rui Peng and Yu-Wing Tai and Chi-Keung Tang},
      year={2016},
      eprint={1607.03250},
      archivePrefix={arXiv},
      primaryClass={cs.NE},
      url={https://arxiv.org/abs/1607.03250}, 
}

@article{wen2016learning,
  title={Learning structured sparsity in deep neural networks},
  author={Wen, Wei and Wu, Chunpeng and Wang, Yandan and Chen, Yiran and Li, Hai},
  journal={Advances in neural information processing systems},
  volume={29},
  year={2016}
}

@misc{entezari2020classdependentcompressiondeepneural,
      title={Class-dependent Compression of Deep Neural Networks}, 
      author={Rahim Entezari and Olga Saukh},
      year={2020},
      eprint={1909.10364},
      archivePrefix={arXiv},
      primaryClass={cs.LG},
      url={https://arxiv.org/abs/1909.10364}, 
}

@article{theus2024metapruning,
  title={Towards Meta-Pruning via Optimal Transport},
  author={Theus, Alexander and Geimer, Olin and Wicke, Friedrich and Hofmann, Thomas and Anagnostidis, Sotiris and Singh, Sidak Pal},
  journal={arXiv preprint arXiv:2402.07839},
  year={2024}
}

@misc{wortsman2022model,
      title={Model soups: averaging weights of multiple fine-tuned models improves accuracy without increasing inference time}, 
      author={Mitchell Wortsman and Gabriel Ilharco and Samir Yitzhak Gadre and Rebecca Roelofs and Raphael Gontijo-Lopes and Ari S. Morcos and Hongseok Namkoong and Ali Farhadi and Yair Carmon and Simon Kornblith and Ludwig Schmidt},
      year={2022},
      eprint={2203.05482},
      archivePrefix={arXiv},
      primaryClass={cs.LG}
}

@misc{entezari2022role,
      title={The Role of Permutation Invariance in Linear Mode Connectivity of Neural Networks}, 
      author={Rahim Entezari and Hanie Sedghi and Olga Saukh and Behnam Neyshabur},
      year={2022},
      eprint={2110.06296},
      archivePrefix={arXiv},
      primaryClass={cs.LG},
      url={https://arxiv.org/pdf/2110.06296}
}

@misc{ainsworth2023git,
      title={Git {R}e-{B}asin: Merging Models modulo Permutation Symmetries}, 
      author={Samuel K. Ainsworth and Jonathan Hayase and Siddhartha Srinivasa},
      year={2023},
      eprint={2209.04836},
      archivePrefix={arXiv},
      primaryClass={cs.LG},
      url={https://arxiv.org/abs/2209.04836}
}

@article{chen2023going, 
    title={Going Beyond Neural Network Feature Similarity: The Network Feature Complexity and Its Interpretation Using Category Theory}, 
    author={Chen, Yiting and Zhou, Zhanpeng and Yan, Junchi}, journal={arXiv preprint arXiv:2310.06756}, year={2023} 
}

@misc{stoica2024zipitmergingmodelsdifferent,
      title={ZipIt! Merging Models from Different Tasks without Training}, 
      author={George Stoica and Daniel Bolya and Jakob Bjorner and Pratik Ramesh and Taylor Hearn and Judy Hoffman},
      year={2024},
      eprint={2305.03053},
      archivePrefix={arXiv},
      primaryClass={cs.CV},
      url={https://arxiv.org/abs/2305.03053}, 
}

\clearpage
\appendix
\section*{Appendix}
The following sections provide supplementary information and complement the main paper:
\begin{itemize}
    \item Appendix~\ref{appx:resources}: Code, Data, and Resources.
    \item Appendix~\ref{appx:theory}: Proofs of Theoretical Claims.
    \item Appendix~\ref{appx:related}: Related Work.
    \item Appendix~\ref{appx:hyperparameters}: Training Details.
    \item Appendix~\ref{appx:further_results}: Extended Empirical Comparison of Folding and Pruning.
    \item Appendix~\ref{appx:rebuttal}: Additional Analyses: Sharpness, Runtime, and LLMs.
    \item Appendix~\ref{appx:use_llms}: Use of Large Language Models.
\end{itemize}

\section{Code, Data, and Resources}
\label{appx:resources}

\textbf{Code and logs.}
A repository with all source code, experiment configs, and figure-generation scripts (including the exact logs used to render every plot/table) are released at \url{https://github.com/osaukh/folding_as_projection}. The repo contains: implementations of folding and pruning operators, training/evaluation pipelines, scripts to plot ablations, and notebooks to reproduce figures directly from logs. We log all training metrics and hyperparameters with Weights \& Biases\footnote{Weights \& Biases: \url{https://wandb.ai}} and export logs alongside the code for reproduction. Additionally, we provide another repository for reproducing results of compressing LLaMA-60M and LLaMA-130M with folding and magnitude structured pruning at \url{https://github.com/nanguoyu/simple_model_folding_public}. Our folding implementation is based on the code by~\citet{wang2025forget}\footnote{Model folding universal: \url{https://github.com/nanguoyu/model-folding-universal} and model folding for CNNs: \url{https://github.com/marza96/ModelFolding/}}.

\textbf{Datasets.}
We use CIFAR-10\footnote{CIFAR-10: \url{https://www.cs.toronto.edu/\~kriz/cifar.html}} and ImageNet-1K\footnote{ImageNet-1K: \url{https://image-net.org/}}. CIFAR-10 is downloaded automatically via \texttt{torchvision}. ImageNet-1K requires the official credentials and follows its license. Pretrained/fine-tuned checkpoints referenced in the paper are either trained by us (configs in the repo) or obtained from the cited works~\citep{andriushchenko2023modernlookrelationshipsharpness,wortsman2022modelsoupsaveragingweights}. The download links are also provided in Appendix~\ref{appx:hyperparameters}.

\textbf{Compute resources.}
Experiments were run on a cluster featuring 8$\times$ NVIDIA A100 (80\,GB RAM) GPUs. All random seeds are fixed in the configs and scripts.

\textbf{Computational complexity and memory cost.}
At inference and matched retained sizes, folding and structured pruning yield the same compute and memory. The difference lies in the compression step: magnitude pruning is a one-pass scoring and selection procedure (\(O(pm)\) to score \(p\) filters of dimension \(m\), plus \(O(p\log p)\) selection), whereas folding runs \(k\)-means on layer weights with \(T\) sweeps. Using Hartigan’s algorithm~\citep{Hartigan1979}, one sweep costs \(O(pkm)\), with max \(T=10\) sweeps the total is \(O(pkmT)\) (effectively linear in \(pm\) when \(k\) is small). This cost is paid once per layer and is small compared to training.

\textbf{Runtime overview.}
The most expensive step in our study is fine-tuning of CLIP VIT-B/32 on ImageNet-1K (1--5 epochs), which dominates wall-clock time (order of hours per run). In contrast, compression is lightweight. We detail measured runtime overhead of compression in Appendix~\ref{appx:rebuttal}.

\section{Proofs of Theoretical Claims}
\label{appx:theory}

\renewcommand{\thetheorem}{2.\arabic{theorem}}
\setcounter{theorem}{0} 

Below we prove that for any choice of pruning, there exists a folding that yields a more accurate approximation of the parameter matrix \( \mathbf{W} \).

\begin{theorem}
Given any pruning with basis \( \mathbf{U}_p \) of rank \( 0 \leq k_p \leq m-1 \) (\ie at least one parameter vector is pruned), there exists a folding with basis \( \mathbf{U}'_f \) and rank \( k_f = k_p + 1 \) such that
\[
\lVert \mathbf{W} - \mathbf{W}_p \rVert_F^2 \geq \lVert \mathbf{W} - \mathbf{W}'_f \rVert_F^2,
\]
where \( \mathbf{W}_p = \mathbf{C}_p \mathbf{W} \) and \( \mathbf{W}'_f = \mathbf{C}'_f \mathbf{W} \), with \( \mathbf{C}_p \) and \( \mathbf{C}'_f \) denoting the orthogonal projections defined in~\Eqref{eq:proj}.
\end{theorem}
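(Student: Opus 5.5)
Construct $\mathbf{U}'_f$ explicitly. Keep each of the $k_p$ retained rows as its own singleton cluster, and merge all $m-k_p\ge 1$ pruned rows into one additional cluster $S_{k_p+1}$. Every row then belongs to exactly one cluster, and all $k_f=k_p+1$ clusters are nonempty. Hence $\mathbf{U}'_f\in\{0,1\}^{m\times k_f}$ has disjoint-support nonzero columns, so it has full column rank $k_p+1$. Note that $\mathbf{U}'_f{}^\top\mathbf{U}'_f=\mathrm{diag}(|S_j|)$ is invertible, so \Eqref{eq:proj} is well defined. The edge case $k_p=0$ is just the single cluster of all rows.

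Both errors decompose row-wise, because the Frobenius norm squared is a sum of squared row norms.

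For pruning, $\mathbf{W}_p=\mathbf{C}_p\mathbf{W}$ reproduces the retained rows exactly and zeroes the rest. Therefore
\[
\lVert \mathbf{W}-\mathbf{W}_p\rVert_F^2=\sum_{i\in S_{k_p+1}}\lVert w(i)\rVert_2^2 .
\]

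For the folding, use \Eqref{eq:folding}. A singleton cluster maps its row to itself, so those rows contribute zero error. The merged cluster replaces each of its rows by the mean $\mu=\frac{1}{|S_{k_p+1}|}\sum_{i\in S_{k_p+1}}w(i)$. Therefore
\[
\lVert \mathbf{W}-\mathbf{W}'_f\rVert_F^2=\sum_{i\in S_{k_p+1}}\lVert w(i)-\mu\rVert_2^2 .
\]

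The comparison is then the standard bias–variance (parallel-axis) identity. Expanding $\lVert w(i)-\mu\rVert^2$ and summing over the cluster, with $n=|S_{k_p+1}|$, gives
\[
\sum_{i}\lVert w(i)\rVert^2=\sum_i\lVert w(i)-\mu\rVert^2+n\lVert\mu\rVert^2 .
\]
Since $n\lVert\mu\rVert^2\ge 0$, the pruning error is at least the folding error. Equality holds exactly when the pruned rows have zero mean.

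An alternative, projection-theoretic route is also available. It goes through the claim $\mathrm{Range}(\mathbf{U}_p)\subseteq\mathrm{Range}(\mathbf{U}'_f)$, which holds because the subspace of $\mathbf{U}'_f$ contains every $e_i$ for a retained $i$. The error $\lVert(I-\mathbf{C})\mathbf{W}\rVert_F^2$ is monotone under subspace inclusion, applied column by column via the argmin characterization after \Eqref{eq:proj}. This route gives the result without computing means.

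There is no real obstacle here. The only point needing care is checking that the constructed $\mathbf{U}'_f$ is a valid folding basis of exact rank $k_p+1$. That requires the merged cluster to be nonempty, which is exactly guaranteed by $k_p\le m-1$.
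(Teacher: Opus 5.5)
Your proposal is correct and follows the paper's proof. It uses the same construction (retained rows as singleton clusters, all pruned rows merged into one extra cluster), the same row-wise computation of both errors, and the same expansion yielding $\lVert \mathbf{W}-\mathbf{W}_p\rVert_F^2-\lVert \mathbf{W}-\mathbf{W}'_f\rVert_F^2=(m-k_p)\lVert\mu\rVert_2^2\ge 0$; your extra well-definedness checks (invertibility of ${\mathbf{U}'_f}^{\top}\mathbf{U}'_f$, nonemptiness of the merged cluster) and the alternative route via $\mathrm{Range}(\mathbf{U}_p)\subseteq\mathrm{Range}(\mathbf{U}'_f)$ are also sound, the latter being a valid argument the paper does not use.
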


\begin{proof}
The rows of \( \mathbf{W} \) can be ordered such that the pruned parameter vectors are first: \( w(1), ..., w(m-k_p) \). Then we find that 
\[
\mathbf{W} - \mathbf{W}_p = \begin{pmatrix} w(1) \\ \cdots \\ w(m-k_p) \\ 0 \\ \cdots \\ 0
\end{pmatrix}
\]
using \Eqref{eq:pruning}. For the existence proof, we choose a folding that clusters all parameter vectors \( w(1), ..., w(m-k_p) \) into a single cluster, all other parameter vectors have individual clusters, \ie 
\[
\mathbf{U}'_f = \begin{pmatrix} 
1 & 0 \\
\cdots & 0 \\
1 & 0 \\
0 & \mathbf{I}
\end{pmatrix} \quad ; \quad
\mathbf{W} - \mathbf{W}'_f = \begin{pmatrix} w(1) - \mu \\ \cdots \\ w(m-k_p) - \mu \\ 0 \\ \cdots \\ 0
\end{pmatrix} \quad ; \quad \mu = \frac{1}{m-k_p} \sum_{i=1}^{m-k_p} w(i)
\]
using \Eqref{eq:folding}.

We have \( \lVert \mathbf{W} - \mathbf{W}_p \rVert_F^2 = \sum_{i=1}^{m-k_p} w(i)^T w(i) \) and 
\begin{align*}
 \lVert \mathbf{W} - \mathbf{W}'_f \rVert_F^2 &= \sum_{i=1}^{m-k_p} (w(i) - \mu)^T (w(i) -\mu) = \sum_{i=1}^{m-k_p} \bigl(w(i)^T w(i) - 2w(i)^T \mu + \mu^T \mu \bigr)  \\ 
 &= \sum_{i=1}^{m-k_p} w(i)^T w(i) - (m-k_p) \mu^T \mu \\
 &\leq \sum_{i=1}^{m-k_p} w(i)^T w(i) = \lVert \mathbf{W} - \mathbf{W}_p \rVert_F^2
\end{align*} 
The latter inequality directly establishes the theorem.
\end{proof}

The following theorem shows that folding using optimal \(k\)-means clustering never yields a less accurate approximation of the parameter matrix \( \mathbf{W} \) than pruning. 

\begin{theorem}
Let \( \mathbf{U}^\star_f \) be the basis obtained from an optimal \( k \)-means clustering with \( k_f \) clusters, \ie the folding clusters are determined by a \( k \)-means algorithm minimizing the accumulated within-cluster sum of squares. Then, for any pruning with basis \( \mathbf{U}_p \) of rank \( k_p = k_f - 1 \), we have
\[
\lVert \mathbf{W} - \mathbf{W}_p \rVert_F^2 \geq \lVert \mathbf{W} - \mathbf{W}^\star_f \rVert_F^2,
\]
where \( \mathbf{W}_p = \mathbf{C}_p \mathbf{W} \) and \( \mathbf{W}^\star_f = \mathbf{C}^\star_f \mathbf{W} \), with \( \mathbf{C}_p \) and \( \mathbf{C}^\star_f \) denoting the orthogonal projections defined in~\Eqref{eq:proj}.
\end{theorem}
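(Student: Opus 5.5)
The plan is to reduce the statement to Theorem~\ref{thm:folding-existence} together with the characterization of the folding error as a within-cluster sum of squares.

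First, for any folding with clusters $S_1,\dots,S_k$, \Eqref{eq:folding} gives $w_f(i)=\mu_j$ for $i\in S_j$. Consequently,
\[
\lVert \mathbf{W}-\mathbf{C}_f\mathbf{W}\rVert_F^2=\sum_{j=1}^{k}\sum_{i\in S_j}\lVert w(i)-\mu_j\rVert_2^2 ,
\]
which is exactly the $k$-means objective for the rows of $\mathbf{W}$ under that partition. For a fixed partition, the centroids $\mu_j$ are the minimizing centers. Hence an optimal $k$-means clustering with $k_f$ clusters, meaning one that minimizes the accumulated within-cluster sum of squares over all partitions into $k_f$ nonempty clusters, yields $\mathbf{W}^\star_f$ with
\[
\lVert \mathbf{W}-\mathbf{W}^\star_f\rVert_F^2\le \lVert \mathbf{W}-\mathbf{C}_f\mathbf{W}\rVert_F^2
\]
for every folding $\mathbf{C}_f$ of rank $k_f$.

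Second, take an arbitrary pruning of rank $k_p=k_f-1$. Since $k_f\le m$, we have $0\le k_p\le m-1$, so Theorem~\ref{thm:folding-existence} applies. It produces the folding $\mathbf{U}'_f$, obtained by merging all pruned rows into a single cluster and keeping every retained row as a singleton, of rank $k_p+1=k_f$, with $\lVert \mathbf{W}-\mathbf{W}_p\rVert_F^2\ge\lVert \mathbf{W}-\mathbf{W}'_f\rVert_F^2$. Since $\mathbf{U}'_f$ is a valid partition into exactly $k_f$ nonempty clusters, it is feasible for the $k$-means problem. Chaining the two inequalities gives the claim.

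The main obstacle is making sure the two notions of rank and feasibility agree. The rank of $\mathbf{U}'_f$ equals its number of nonempty clusters, which is $k_f$. The optimal $k$-means is taken over partitions with exactly $k_f$ nonempty clusters, or equivalently at most $k_f$, since splitting a cluster never increases the objective. I would state this convention explicitly so that $\mathbf{U}'_f$ lies in the feasible set. The existence of an optimum is trivial because there are finitely many partitions.
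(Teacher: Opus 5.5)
Your proposal is correct and follows the paper's proof: you chain the inequality from Theorem~\ref{thm:folding-existence} for the folding $\mathbf{U}'_f$ (all pruned rows merged into one cluster) with the optimality of $k$-means over all $k_f$-cluster foldings. The only difference is that you derive $\lVert \mathbf{W}-\mathbf{C}_f\mathbf{W}\rVert_F^2=\sum_j\sum_{i\in S_j}\lVert w(i)-\mu_j\rVert_2^2$ directly from \Eqref{eq:folding} and state explicitly why $\mathbf{U}'_f$ is feasible (exactly $k_f\le m$ nonempty clusters), whereas the paper simply cites Bauckhage's matrix-factorization formulation of $k$-means.
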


\begin{proof}
According to \citet{bauckhage2015} and \cite{wang2025forget}, the problem of \( k \)-means clustering can be formulated as the following constrained matrix factorization problem:
\[
    \min_{\mathbf{U}} \; \left\lVert \mathbf{W} - \mathbf{U} (\mathbf{U}^\top \mathbf{U})^{-1} \mathbf{U}^\top  \mathbf{W} \right\rVert_F^2
    \quad \text{subject to} \quad 
    u(i,j) \in \{0,1\}, \; \sum_{j} u(i,j) = 1 \; \forall i.
\]

This formulation coincides with the orthogonal projection of model folding, see \Eqref{eq:proj} and \Eqref{eq:folding}. Theorem~\ref{thm:folding-existence} guarantees the existence of a folding basis \( \mathbf{U}'_f \) and the corresponding projection \( \mathbf{C}'_f \) for any pruning \( \mathbf{W}_p \) of \( \mathbf{W} \), such that
\[
    \lVert \mathbf{W} - \mathbf{W}_p \rVert_F^2 \;\; \geq \;\; \lVert \mathbf{W} - \mathbf{W}'_f \rVert_F^2.
\]
Since optimal \( k \)-means clustering achieves the minimal possible error \(\lVert \mathbf{W} - \mathbf{W}^\star_f \rVert_F^2 \;\; \leq \;\; \lVert \mathbf{W} - \mathbf{W}'_f \rVert_F^2 \), the theorem follows.
\end{proof}

\section{Related Work}
\label{appx:related}
Model compression encompasses a wide range of approaches designed to reduce inference cost while preserving model utility. We focus on \emph{post-training, calibration-free structured compression}, where the model architecture is modified without access to data or gradients. In this setting, the dominant baselines are structured pruning and, more recently, model folding. Below we discuss these families of methods and clarify how our projection-theoretic view relates to and extends prior work.

\textbf{Post-training compression.}
Quantization reduces arithmetic precision~\citep{rouhani_microscaling_2023,zhang_fast_2022}, but typically requires calibration to maintain activation ranges. Knowledge distillation~\citep{hinton2015distillingknowledgeneuralnetwork} produces reduced students trained to imitate teacher logits. Even data-free variants~\citep{micaelli2019zeroshotknowledgetransferadversarial,chen2019datafreelearningstudentnetworks,fang2020datafreeadversarialdistillation,yu2023data,haroush2020knowledgewithinmethodsdatafree} require full training dynamics and do not yield structural compression. Low-rank factorization via matrix or tensor decompositions~\citep{ren2023lowrankpruneandfactorizelanguagemodel,horvath2024maestrouncoveringlowrankstructures,lebedev2015speedingupconvolutionalneuralnetworks,kim2016compressiondeepconvolutionalneural} approximates pretrained weights by continuous subspaces but generally requires fine-tuning for restoration. These approaches differ fundamentally from our objective: they modify numerical precision or parameterization, not the discrete structure of the model.

\textbf{Structured pruning.}
Structured pruning removes neurons, channels, filters, or blocks~\citep{li2016pruning,luo2017iccv,hu2016networktrimming,wen2016learning}. Magnitude-based criteria~\citep{han2015learningweightsconnectionsefficient,lu2023steplearningnmstructured,ding2024usmlitequantizationsparsityaware,entezari2020classdependentcompressiondeepneural} dominate due to simplicity and hardware alignment. However, structured pruning typically requires fine-tuning or recalibration~\citep{kurtic2022optimalbertsurgeonscalable,sanh2020movementpruningadaptivesparsity} to mitigate accuracy degradation, and even calibration-based methods such as SparseGPT~\citep{frantar2023sparsegptmassivelanguagemodels} or Wanda~\citep{sun2024simpleeffectivepruningapproach} operate through axis-aligned removal of coordinates. One-shot improvements using N:M sparsity~\citep{Yao_2019,Kang_2020} or OT-based structural alignment~\citep{theus2024metapruning} still operate within the same paradigm: pruning corresponds to enforcing that the retained parameter vectors lie in a fixed coordinate-aligned subspace.

Our work shows that such axis-aligned projections are geometrically restrictive. We formalize pruning as an orthogonal projection onto a coordinate subspace and demonstrate that, at matched ranks up to one slack, pruning is provably dominated by projections onto cluster-structured subspaces.

\textbf{Weight clustering and model folding.}
Model folding, recently introduced by~\citet{wang2025forget}, ties groups of similar channels by replacing them with their mean, yielding dense low-rank layers that preserve structural couplings. Folding implicitly performs a \emph{cluster-structured projection} determined by discrete assignments, and practical implementations rely on k-means clustering. This operator class is strictly richer than axis-aligned pruning: folding enables coordinated merging rather than coordinate removal, while remaining compatible with dense inference. IFM~\citep{chen2023going} is related in that it also merges channels via grouping, but its variance-collapse correction is ineffective~\citep{wang2025forget}, leading to substantially weaker performance.

Our work strengthens this line along two axes. First, we provide a unified \emph{projection-geometric framework} showing that both pruning and folding are orthogonal projections, but onto fundamentally different subspaces: coordinate-aligned versus cluster-structured. Second, we prove that for any pruned solution of rank $k$, there exists a folded solution of rank $k\!+\!1$ with strictly smaller parameter reconstruction error, and that optimal k-means folding minimizes this projection error among all cluster-structured projections. This establishes a strict theoretical separation between pruning and folding and explains the empirical superiority of folding in calibration-free settings.

\textbf{Model merging and alignment.}
Model merging combines independently trained models via parameter averaging or permutation alignment. Model soups~\citep{wortsman2022model} exploit shared initialization. Permutation matching~\citep{entezari2022role,ainsworth2023git} constructs neuron correspondences. REPAIR~\citep{jordan2023repairrenormalizingpermutedactivations} stabilizes fused models by re-normalizing preactivations. Intra-model merging approaches such as ZipIt!~\citep{stoica2024zipitmergingmodelsdifferent} combine computational units but do not target compression under fixed architectural constraints.

These works differ from ours in both objective and mechanism. Merging seeks functional fusion across networks, whereas folding compresses a \emph{single} network by exploiting intra-layer redundancy. Our projection-theoretic formulation shows that folding operates as a structured projection with explicit geometric optimality guarantees—properties not shared by merging methods.

\textbf{Positioning of this work.}
Across pruning, folding, and merging, prior efforts lack a unifying mathematical framework that characterizes the geometry of post-training structural compression. Our contribution is to introduce such a framework: we cast pruning and folding as orthogonal projections and show that cluster-structured projections admit strictly smaller distortion than coordinate projections under practically negligible rank slack. This perspective yields nontrivial theoretical guarantees and aligns closely with the empirical phenomena observed across CNNs, ViTs, and LLaMA models.

\section{Training Details}
\label{appx:hyperparameters}

The following subsections detail the hyperparameters used to train our checkpoints. For checkpoints taken from the literature, we summarize the available training details.

\subsection{ResNet18 on CIFAR-10 Training Setup with Adam and SGD}
We trained a total of 792 ResNet18 models on CIFAR-10 by varying hyperparameter configurations. We used two optimizers: Adam and SGD. \Tabref{tab:training-config} summarizes the parameter combinations explored for each optimizer. For Adam, we used 3 learning rates and 1 momentum value. For SGD, we used 3 learning rates and 2 momentum values. The remaining parameters were shared across both optimizers: weight decay (3 values), L1 regularization (2 values), RandAugment (2 values), Sharpness-Aware Minimization (3 values), and learning rate scheduling (2 values). This resulted in 216 models trained with Adam and 576 models trained with SGD. In the ablation studies, we filter checkpoints (as specified in the figure captions) to highlight the observed effects.

\begin{table}[h]
\centering
\begin{tabular}{ll}
\toprule
\rowcolor{lightblue}
\textbf{Parameter} & \textbf{Values} \\ \midrule
Optimizer & \texttt{adam}, \texttt{sgd} \\
Learning Rate & \texttt{adam}: 0.1, 0.01, 0.001 \\
                & \texttt{sgd}: 0.1, 0.05, 0.01, 0.001 \\
Momentum & \texttt{adam}: 0.0 \\
                & \texttt{sgd}: 0.9, 0.99 \\
Weight Decay & 0.0, 0.0005, 0.001 \\
L1 Regularization & 0.0, $1 \times 10^{-5}$ \\
RandAugment & True, False \\
SAM (Sharpness-Aware Minimization) & None, 0.05, 0.1 \\
Learning Rate Schedule & True, False \\ \bottomrule
\end{tabular}
\caption{Hyperparameter combinations used for ResNet18 training on CIFAR-10.}
\label{tab:training-config}
\end{table}

\begin{figure}[t]
     \centering
     \vskip -0.4cm
     \subfloat[][ResNet18, Adam, \fold vs \magtwo, \textbf{no} L1 regularization]{
        \includegraphics[height=0.24\textwidth]{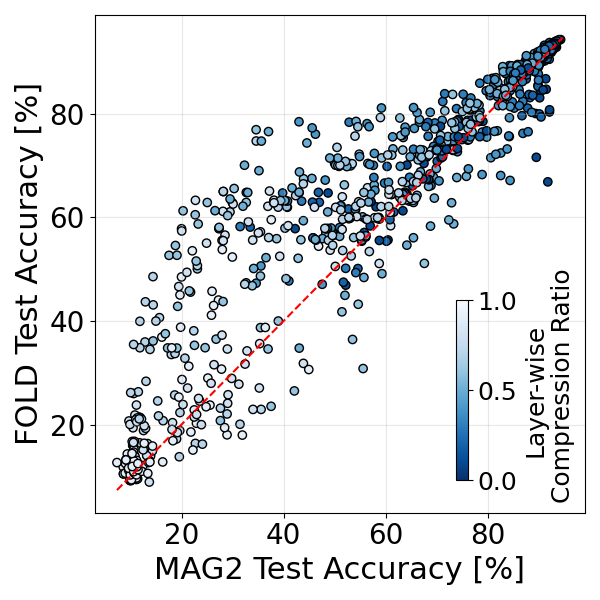}
        \includegraphics[height=0.24\textwidth]{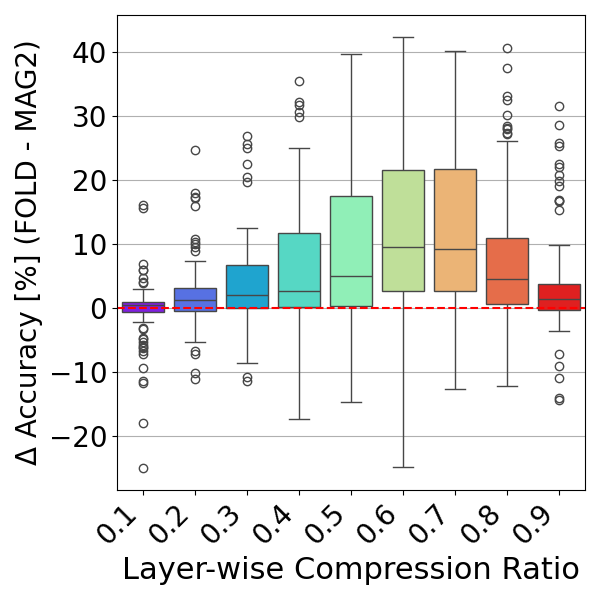}
    }
    \subfloat[][PreActResNet18, \fold vs \magtwo]{
        \includegraphics[height=0.24\textwidth]{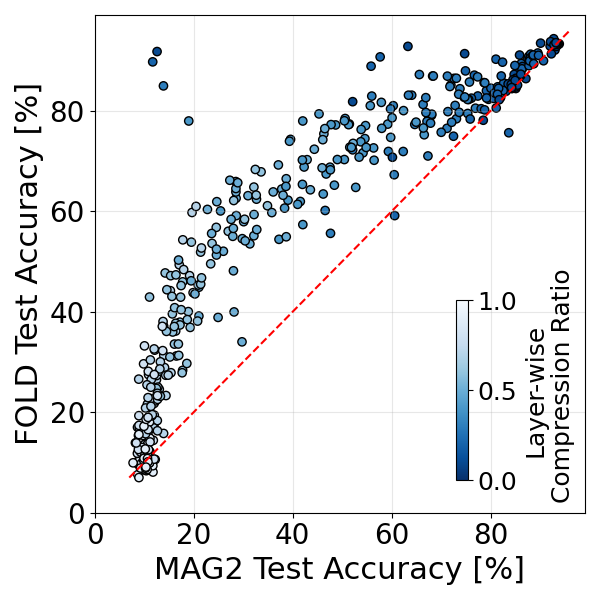}
        \includegraphics[height=0.24\textwidth]{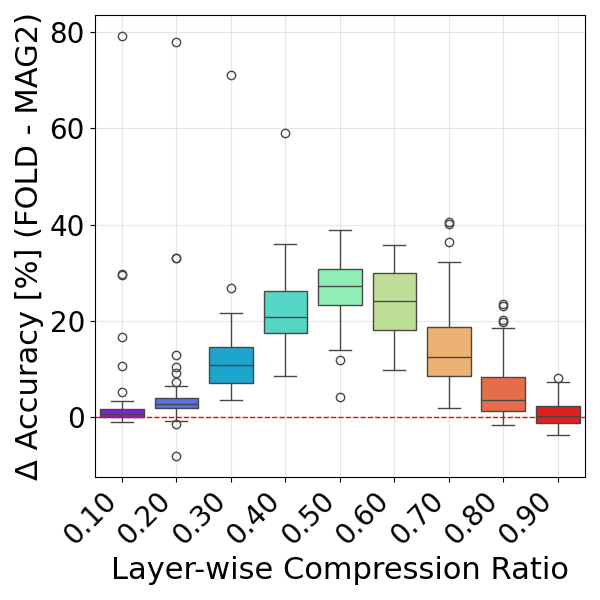}
    }

    \subfloat[][ViT-B/32, \fold vs \magtwo]{
        \includegraphics[height=0.24\textwidth]{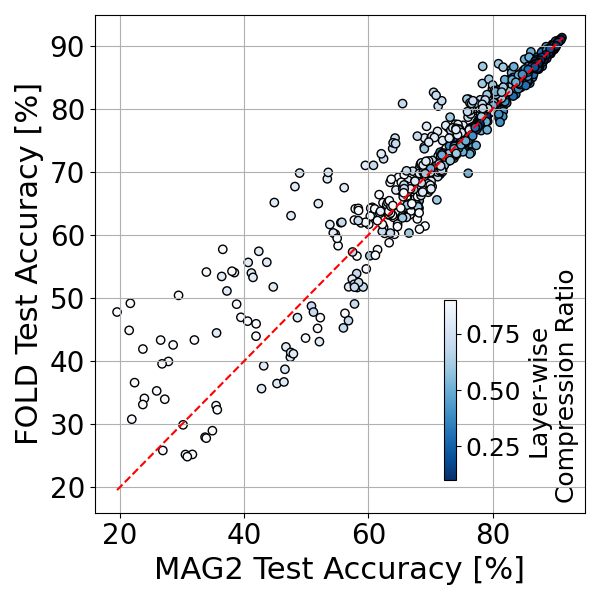}
        \includegraphics[height=0.24\textwidth]{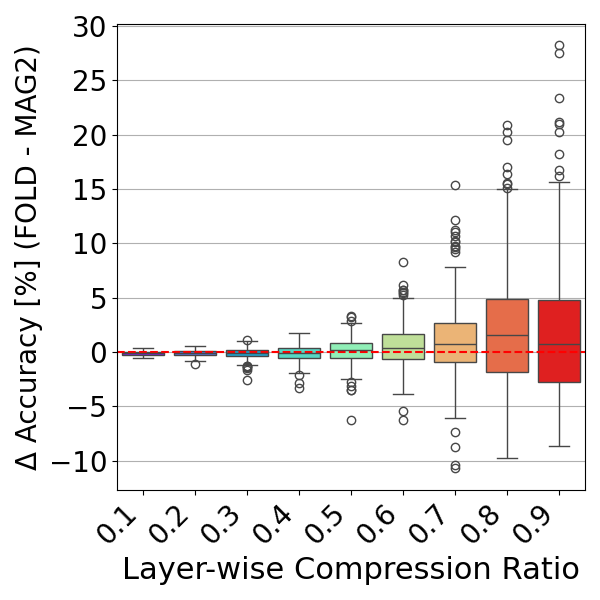}
    }
    \subfloat[][CLIP ViT-B/32, \fold vs \magtwo]{
        \includegraphics[height=0.24\textwidth]{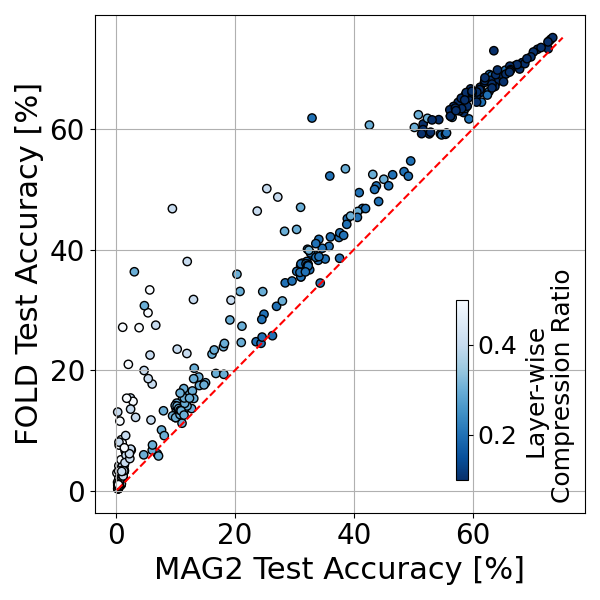}
        \includegraphics[height=0.24\textwidth]{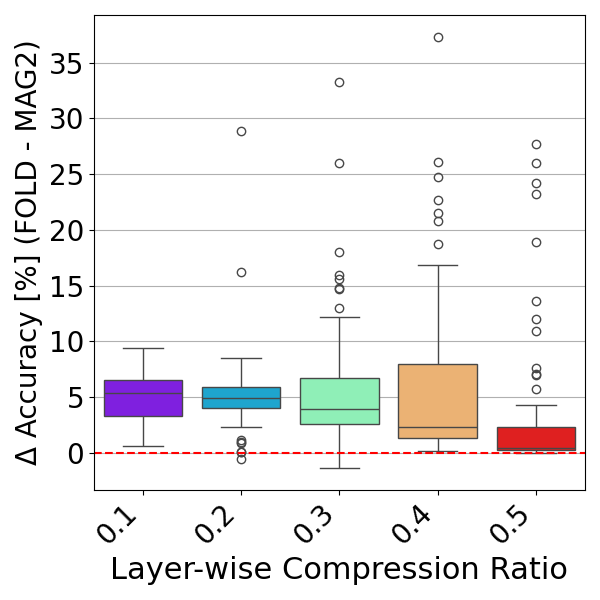}
    }
    \caption{\textbf{Folding outperforms magnitude pruning across diverse training regimes.} The same setup as in \Figref{fig:accuracy_FOLD_vs_MAG_L1}, but compared to the L2 magnitude pruning criterion.
    \textbf{Top row:} ResNet18 and PreActResNet18 on CIFAR-10. ResNet18 checkpoints were trained from scratch with Adam using different hyperparameter configurations.
    \textbf{Bottom row:} ViT-B/32 on CIFAR-10 and CLIP ViT-B/32 on ImageNet-1K. Scatter plots show post-compression accuracy for folding versus magnitude pruning (L2 criterion) at uniform per-layer compression ratios. Bar plots depict the accuracy gain by folding, computed as $\Delta=\mathrm{Acc}{(\text{\fold})}-\mathrm{Acc}{(\text{\magtwo})}$, as a function of layer-wise compression ratio. Folding yields the largest improvements at moderate to high compression, confirming its robustness across architectures and datasets.}
     \label{fig:accuracy_FOLD_vs_MAG_L2}
\end{figure}

\subsection{PreActResNet18 on CIFAR-10}
We use 50 trained PreActResNet18 models on CIFAR-10 from \citep{andriushchenko2023modernlookrelationshipsharpness}\footnote{Download link: \url{https://drive.google.com/drive/folders/1LmthJCb3RXBFWjeTOC4UOOl7Ppgg2h7n}}. The models are trained using a fixed set of training parameters and a sweep over a few key hyperparameters. \Tabref{tab:preact-config} summarizes varied parameters used in this experiment. All checkpoints used the same training protocol: 200 epochs, batch size 128, and no label noise. The model width was fixed at 64 and the learning rate schedule followed a cyclic pattern. Only the maximum learning rate (\texttt{lr\_max}), SAM strength (\texttt{sam\_rho}), and augmentation settings were varied. For the learning rate ablation studies, we adopt the reported maximum learning rate.

\begin{figure}[t]
     \centering
     \vskip -0.4cm
     \subfloat[][ViT-B/32, \magtwo vs \fold, base accuracy $>$75\%]{
        \includegraphics[height=0.24\textwidth]{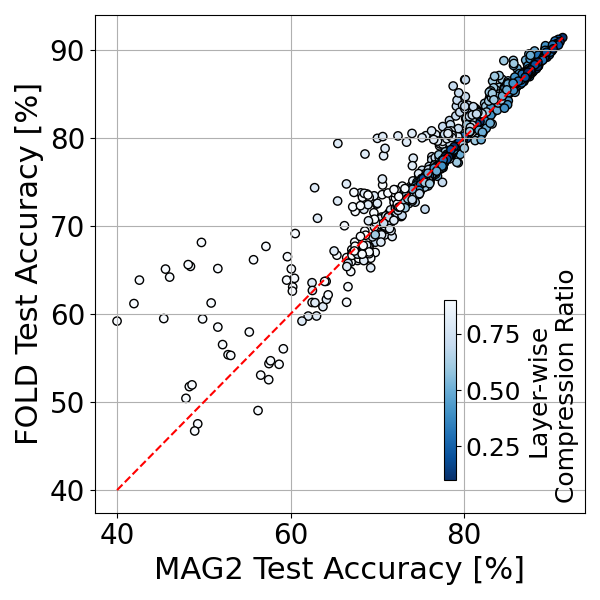}
        \includegraphics[height=0.24\textwidth]{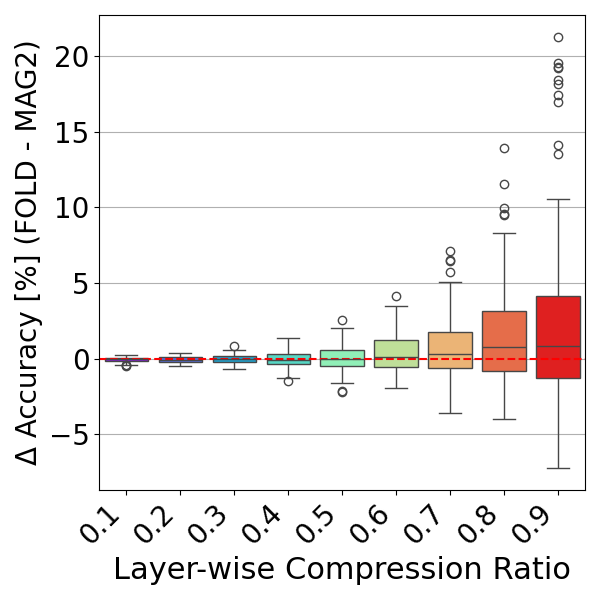}
     }
     \subfloat[][CLIP ViT-B/32, \magtwo vs \fold]{
        \includegraphics[height=0.24\textwidth]{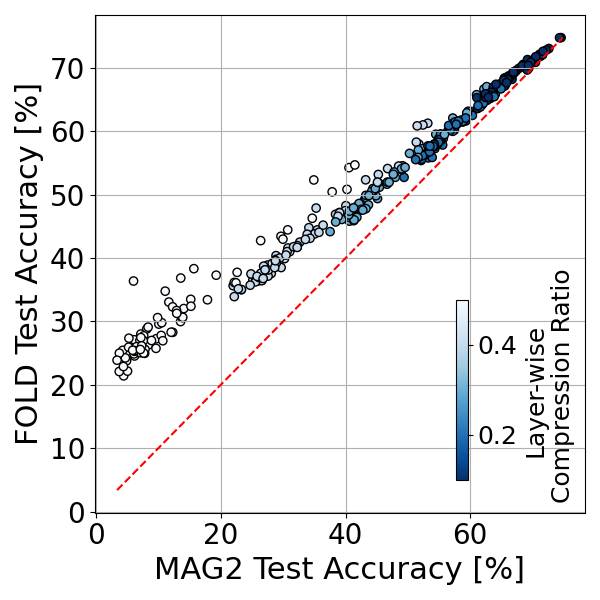}
        \includegraphics[height=0.24\textwidth]{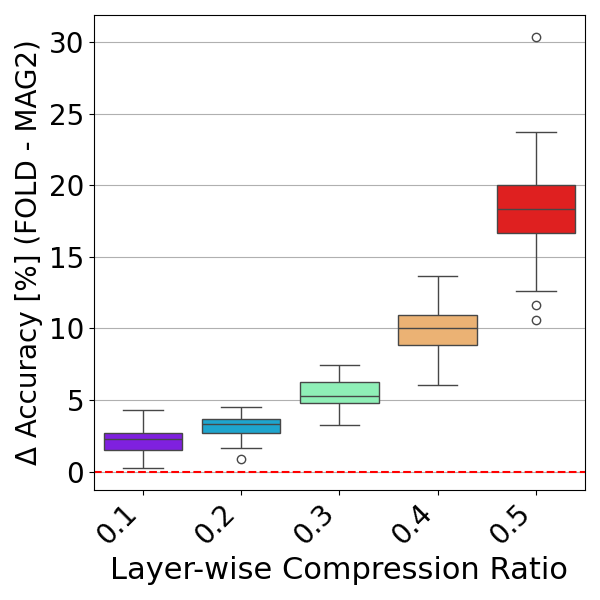}
     }
     \caption{\textbf{\fold versus \magtwo on ViTs after LayerNorm-only fine-tuning}
     for ViT-B/32 on CIFAR-10 and CLIP ViT-B/32 on ImageNet-1K. In the scatter plots, points are checkpoints, color encodes layer-wise compression. Bar plots depict the accuracy gain $\Delta=\mathrm{Acc}{(\text{\fold})}-\mathrm{Acc}{(\text{\magone})}$, which remains positive and typically grows with compression, indicating that even under lightweight LayerNorm adaptation \fold retains a consistent advantage over pruning. The figure follows the same setup as \Figref{fig:ftLN_ViTs_FOLD_vs_MAG_L1} in the main paper, but for \magtwo.
     }
     \label{fig:ftLN_ViTs_FOLD_vs_MAG_L2}
\end{figure}

\begin{table}[h]
\centering
\begin{tabular}{ll}
\rowcolor{lightblue}
\toprule
\textbf{Parameter} & \textbf{Values} \\ \midrule
Optimizer & \texttt{sgd} \\
Max / Base Learning Rate (\texttt{lr\_max}) & from 0.0504 to 4.9759 \\
SAM Strength (\texttt{sam\_rho}) & 0.0, 0.05, 0.1 \\
Standard Augmentation (\texttt{augm}) & True, False \\
RandAugment (\texttt{randaug}) & True, False \\
\bottomrule
\end{tabular}
\caption{Fixed and varying parameters for PreActResNet18 training on CIFAR-10.}
\label{tab:preact-config}
\end{table}

\begin{figure}[t]
     \centering
     \vskip -.4cm
     \subfloat[][\magone vs \fold, fine-tuning for\\1 (\textbf{left}) and 5 (\textbf{right}) epochs.]
     {\includegraphics[height=0.2\textwidth]{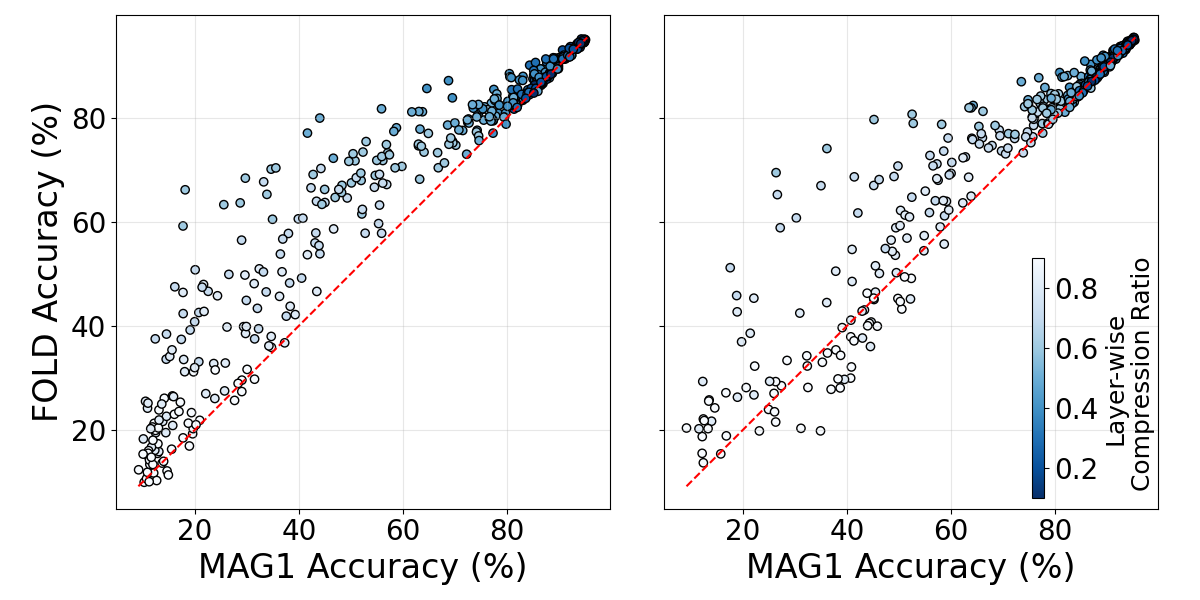}}
     \subfloat[][\magone vs \fold\\accuracy gap after ft.]{\includegraphics[height=0.2\textwidth]{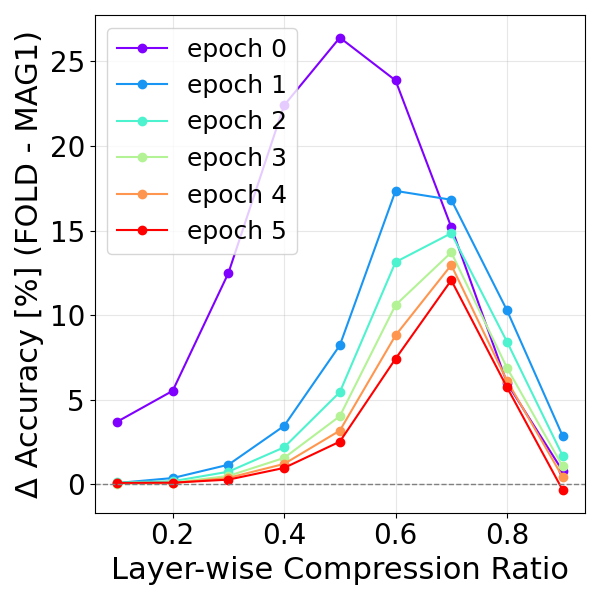}}
     \subfloat[][Before and after fine-tuning for 5 epochs: \fold (\textbf{left}) and \magone (\textbf{right}).]
     {\includegraphics[height=0.2\textwidth]{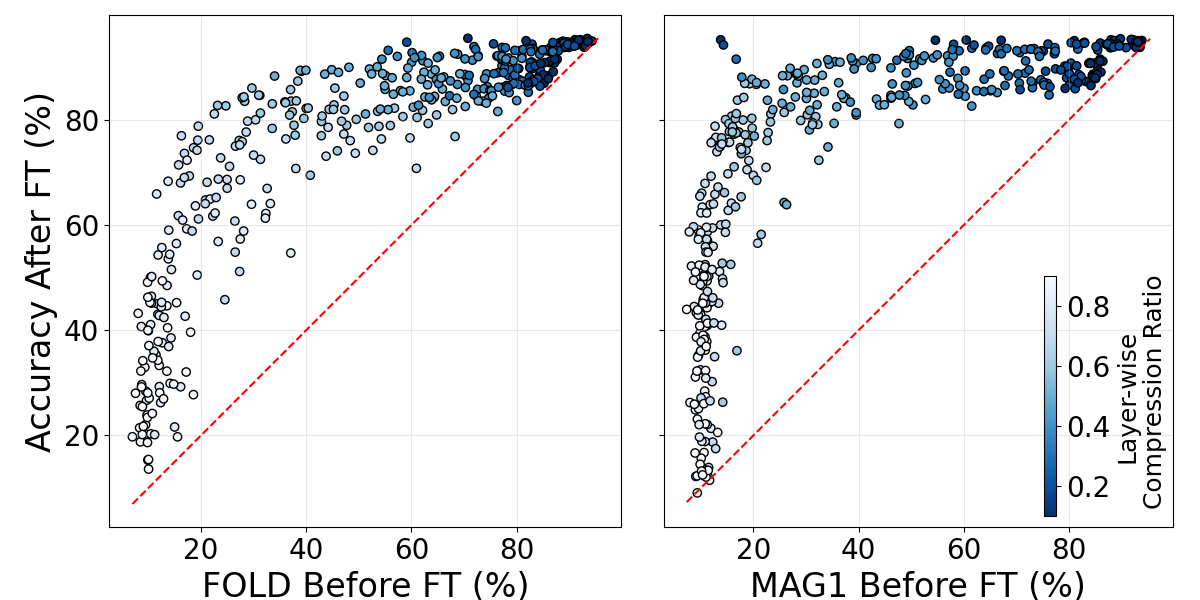}}

     \subfloat[][\magone vs \fold, fine-tuning for\\1 (\textbf{left}) and 5 (\textbf{right}) epochs.]{\includegraphics[height=0.2\textwidth]{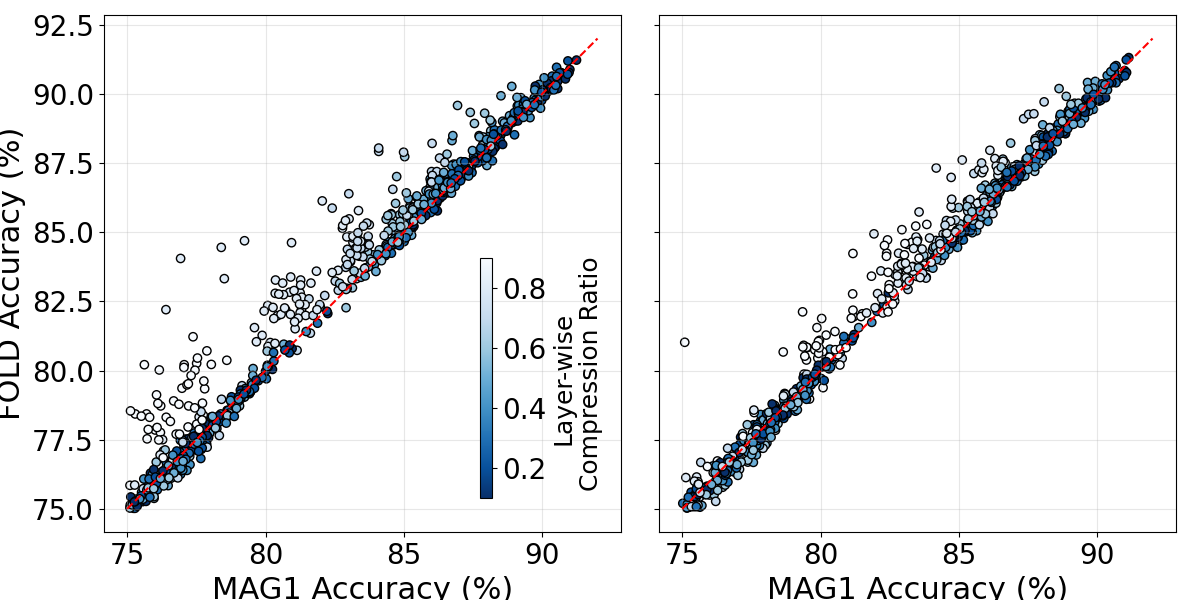}}
     \subfloat[][\magone vs \fold\\accuracy gap after ft.]{\includegraphics[height=0.2\textwidth]{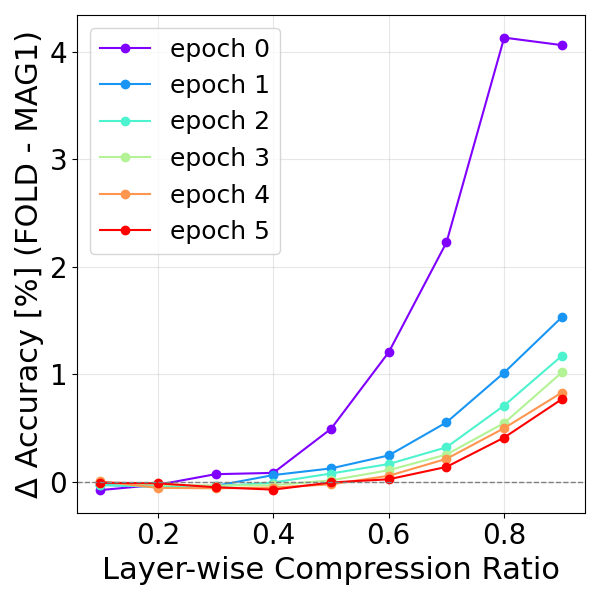}}
     \subfloat[][Before and after fine-tuning for 5 epochs: \fold (\textbf{left}) and \magone (\textbf{right}).]{\includegraphics[height=0.2\textwidth]{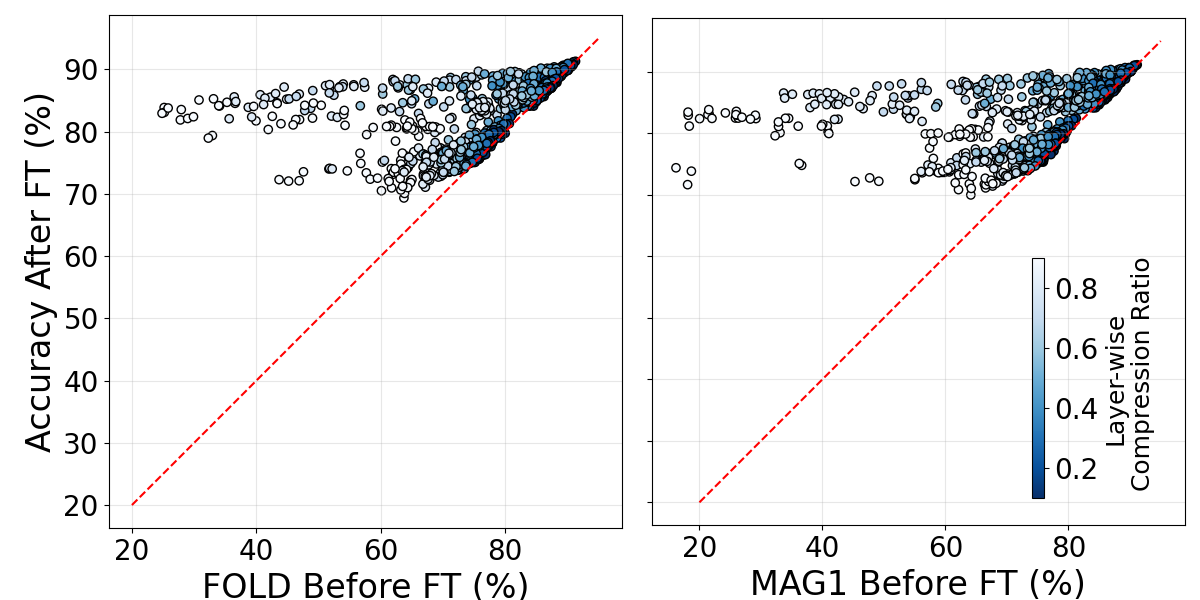}}
     
     \caption{\textbf{\fold outperforms \magone after full fine-tuning for 1--5 epochs on PreActResNet18 and ViT-B/32 on CIFAR-10.} Results for PreActResNet18 (\textbf{top}) and ViT-B/32 (\textbf{bottom}). \textbf{(a,d)} accuracy of \magone vs. \fold after 1 and 5 epochs of fine-tuning. \textbf{(b,e)} accuracy gap $\Delta$ over epochs, remaining positive. \textbf{(c,f)} accuracy trajectories from post-compression through 5 epochs, showing faster recovery and higher final accuracy for \fold. The figure extends \Figref{fig:ft_CNN_and_ViTs_1_FOLD_vs_MAG_L1} in the main paper to PreActResNet18 and ViT-B/32 architectures where \fold is benchmarked against \magone.}
     \label{fig:ft_CNN_and_ViTs_2_FOLD_vs_MAG_L1}
\end{figure}

\subsection{ViT-B/32 on CIFAR-10}
The 200 Vision Transformers (ViT) also from \citep{andriushchenko2023modernlookrelationshipsharpness}, width=256, were trained on CIFAR-10, batch size 128, for 200 epochs with a cosine learning rate schedule and linear warmup. The main hyperparameters are summarized in \Tabref{tab:vit-config}. 
We made use of the maximum learning rate, the use of data augmentation, and the use of Sharpness-Aware Minimization (SAM) in our evaluations. 
All other settings were fixed.

\begin{table}[h]
\centering
\begin{tabular}{ll}
\toprule
\rowcolor{lightblue}
\textbf{Parameter} & \textbf{Values} \\
\midrule
Optimizer & \texttt{sgd} \\
Max / Base Learning Rate (\texttt{lr\_max}) & from 0.005087 to 0.492936 \\
SAM Strength (\texttt{sam\_rho}) & 0.0, 0.05, 0.1 \\
Standard Augmentation (\texttt{augm}) & True, False \\
RandAugment (\texttt{randaug}) & True, False \\
\bottomrule
\end{tabular}
\caption{Fixed and varying parameters for ViT-B/32 Base training on CIFAR-10.}
\label{tab:vit-config}
\end{table}

\subsection{CLIP ViT-B/32 on ImageNet-1K}

CLIP~\citep{pmlr-v139-radford21a} models are known for the widespread use of CLIP features~\citep{ramesh2022hierarchicaltextconditionalimagegeneration}. We use the pool of models introduced by \citet{wortsman2022modelsoupsaveragingweights}, who fine-tuned the CLIP ViT-B/32 architecture on ImageNet-1K multiple times using different randomly sampled training hyperparameters\footnote{Download link: \url{https://github.com/mlfoundations/model-soups/releases/}}. These hyperparameters include learning rate, number of training epochs, weight decay, label smoothing, and augmentation strategies, as stated in \citep{wortsman2022modelsoupsaveragingweights}. The resulting collection of 72 fine-tuned models provides a strong basis for evaluating the performance of model folding compared to pruning on CLIP ViT architectures. All checkpoints were evaluated jointly in our study, without parameter-specific ablations.

\subsection{LLaMA-60M on Colossal Clean Crawled Corpus (C4)}
We train 36 LLaMA-family models~\citep{touvron2023llamaopenefficientfoundation,touvron2023llama2openfoundation} with 60M and 130M parameters on the Colossal Clean Crawled Corpus (C4)~\citep{raffel2020exploring} on a NVIDIA DGX Station A100 featuring eight NVIDIA A100 GPUs (each equipped with 80GB memory). The training time for a LLaMA-60M model is about 45 minutes. \Tabref{llama_setting} summarizes the fixed hyperparameters used to train LLaMA-60M and LLaMA-130M. The learning rate is linearly warmed up, followed by a cosine annealing schedule that decays to 10\% of the initial value. We use the T5-base tokenizer~\citep{raffel2023t5} and AdamW optimizer, consistent with prior work~\citep{glentis2025scalableparametermemoryefficient,sltrain}.
\begin{table}[ht]
\centering
\begin{tabular}{ccccccc}
\rowcolor{lightblue}
\toprule
Params & Hidden & Intermediate & Heads & Layers & Steps & Data (Tokens) \\ \midrule
60M    & 512    & 1376         & 8     & 8      & 11K   & 1.3B  \\
130M   & 768    & 2048         & 12    & 12     & 22K   & 2.6B  \\
\bottomrule
\end{tabular}
\caption{Training hyperparameters of LLaMA-60M architecture.}
\label{llama_setting}
\end{table}

Note that in our work, pruning and folding are applied exclusively to the feed-forward network (FFN) layers of the trained LLaMA-60M and LLaMA-130M models.

\section{Extended Empirical Comparison of Folding and Pruning}
\label{appx:further_results}

We provide additional experiments to complement the main results.  
\Figref{fig:accuracy_FOLD_vs_MAG_L2} mirrors the setup of \Figref{fig:accuracy_FOLD_vs_MAG_L1} in the main paper, but replaces the L1 criterion for magnitude pruning with L2 (\magtwo).  
Similarly, \Figref{fig:ftLN_ViTs_FOLD_vs_MAG_L2}, \Figref{fig:ft_CNN_and_ViTs_2_FOLD_vs_MAG_L1}, \Figref{fig:ft_FOLD_vs_MAG_L2}, and \Figref{fig:ft_ViTs_FOLD_vs_MAG_L2} extend the corresponding figures in the main paper to other network architectures and to the L2 case. Across all comparisons, the qualitative picture remains the same: \fold consistently matches or outperforms magnitude pruning, independent of the chosen norm.

We further include ablations to study the robustness of these findings with respect to training hyperparameters.  
\Figref{fig:lr-effects-other}, \Figref{fig:sam-effects-other}, and \Figref{fig:raug-effects-other} report the effect of varying learning rate, SAM strength, and RandAugment, respectively.  
Finally, \Figref{fig:wd-effects-other} shows the influence of weight decay.  
Taken together, these studies confirm that the relative advantage of \fold is stable across different regularization strategies and training configurations.

\begin{figure}[t]
     \centering
     \vskip -0.4cm
     \subfloat[][\magone vs \fold, fine-tuning for\\1 (\textbf{left}) and 5 (\textbf{right}) epochs.]{\includegraphics[height=0.2\textwidth]{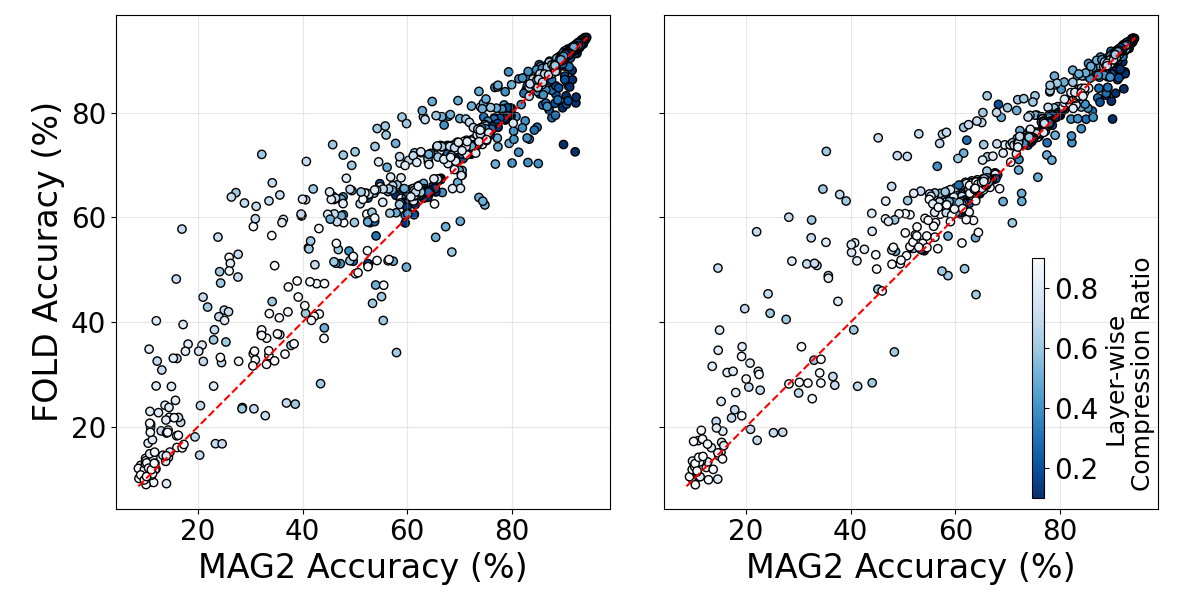}}
     \subfloat[][\fold vs \magone\\accuracy gap after ft.]{\includegraphics[height=0.2\textwidth]{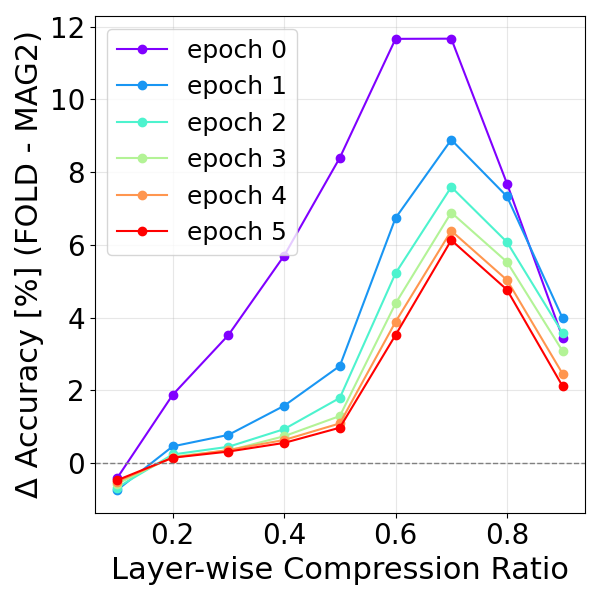}}
     \subfloat[][Before and after fine-tuning for 5 epochs: \fold (\textbf{left}) and \magone (\textbf{right}).]{\includegraphics[height=0.2\textwidth]{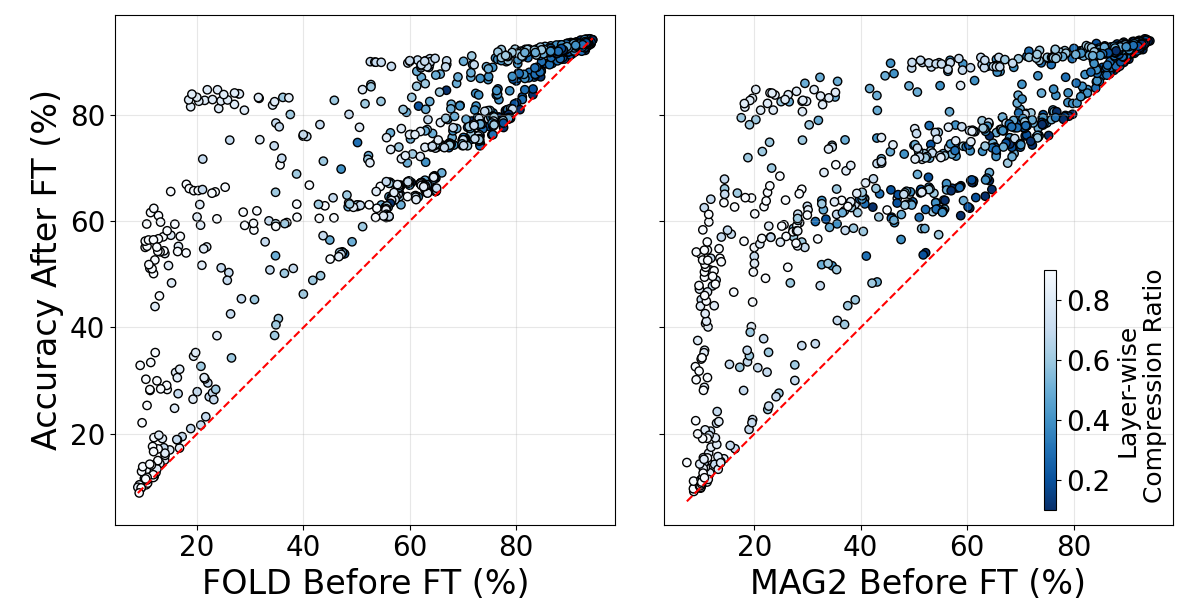}}
     
     \subfloat[][\magone vs \fold, fine-tuning for\\1 (\textbf{left}) and 5 (\textbf{right}) epochs.]
     {\includegraphics[height=0.2\textwidth]{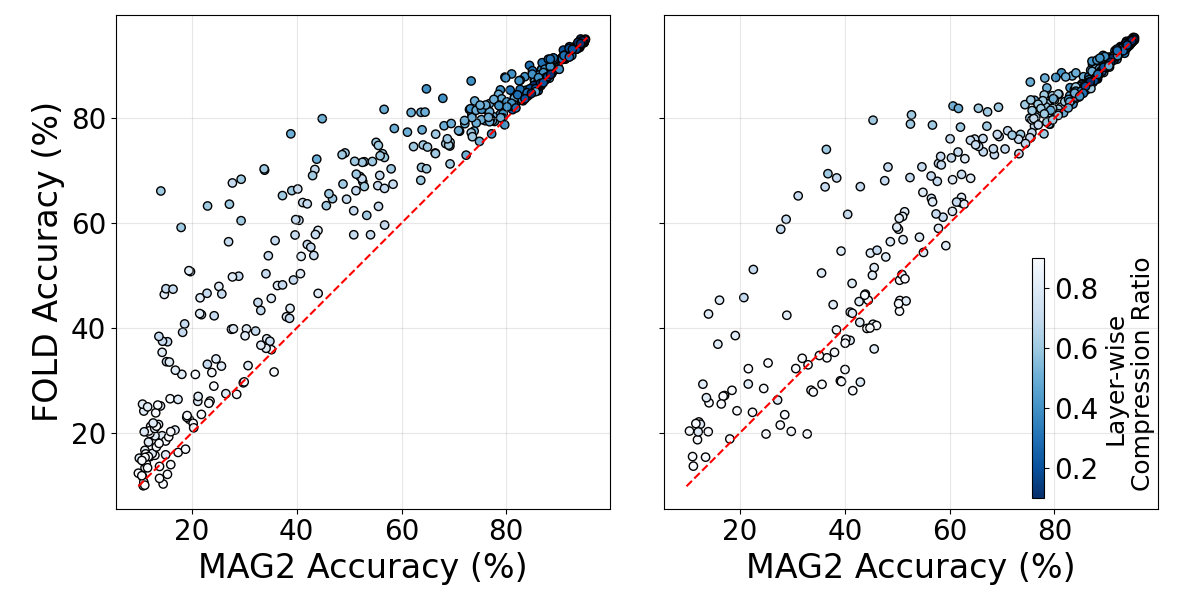}}
     \subfloat[][\fold vs \magone\\accuracy gap after ft.]{\includegraphics[height=0.2\textwidth]{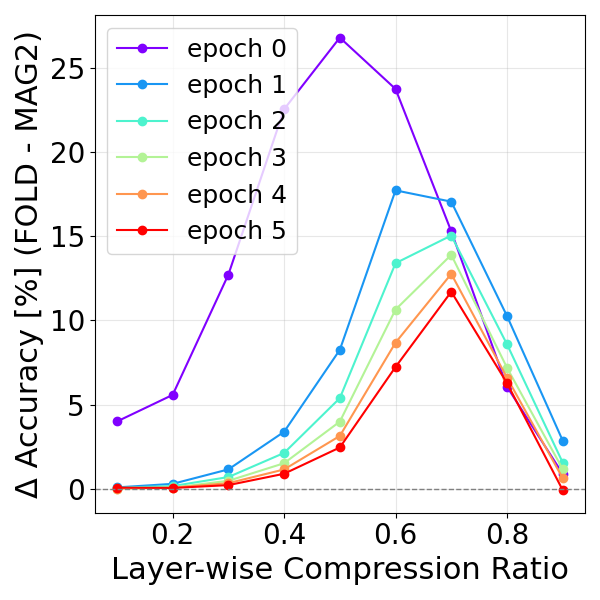}}
     \subfloat[][Before and after fine-tuning for 5 epochs: \fold (\textbf{left}) and \magone (\textbf{right}).]
     {\includegraphics[height=0.2\textwidth]{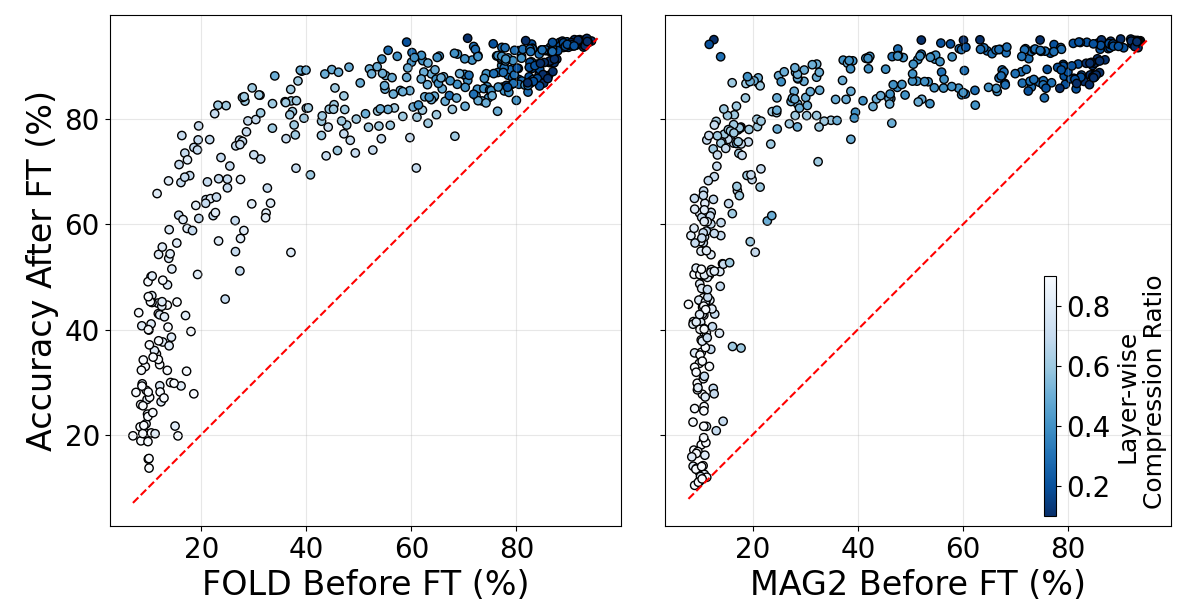}}
     
     \caption{\textbf{Folded models retain their accuracy advantage after fine-tuning.} Results for ResNet18 trained by Adam (\textbf{top row}) and PreActResNet18 trained by SGD on CIFAR-10 (\textbf{bottom row}): \textbf{(a,d)} compares post-compression accuracy of magnitude pruning with L2 criterion (\magtwo) versus folding (\fold) after 1 and 5 epochs of fine-tuning. \textbf{(b,e)} show the accuracy gap between folding and pruning as a function of fine-tuning epochs, demonstrating that folding maintains a consistent lead, \ie the \fold accuracy delta is positive. \textbf{(c,f)} illustrate accuracy trajectories before and after 5 epochs of fine-tuning for both methods, highlighting that folded models recover accuracy faster and reach higher final performance than pruned models. The figure extends \Figref{fig:ft_CNN_and_ViTs_1_FOLD_vs_MAG_L1} in the main paper and \Figref{fig:ft_CNN_and_ViTs_2_FOLD_vs_MAG_L1} in the appendix to \magtwo.
     }
     \label{fig:ft_FOLD_vs_MAG_L2}
\end{figure}

\begin{figure}[t]
     \centering
     \vskip -0.4cm
     \subfloat[][\magtwo vs \fold, fine-tuning for\\1 (\textbf{left}) and 5 (\textbf{right}) epochs.]{\includegraphics[height=0.2\textwidth]{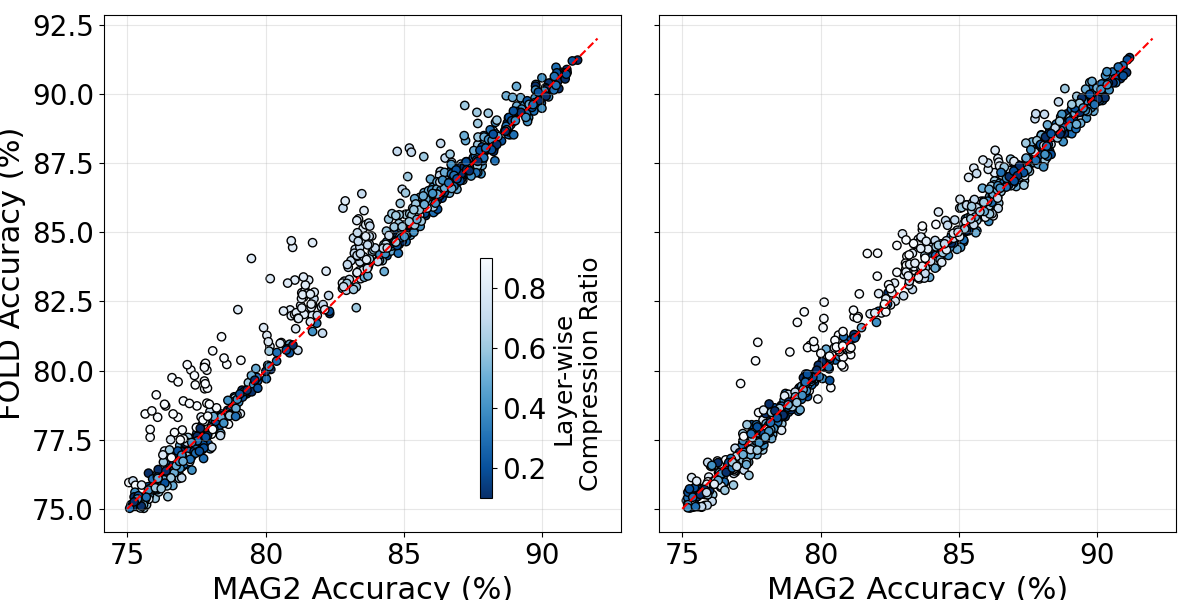}}
     \subfloat[][\magtwo vs \fold\\accuracy gap after ft.]{\includegraphics[height=0.2\textwidth]{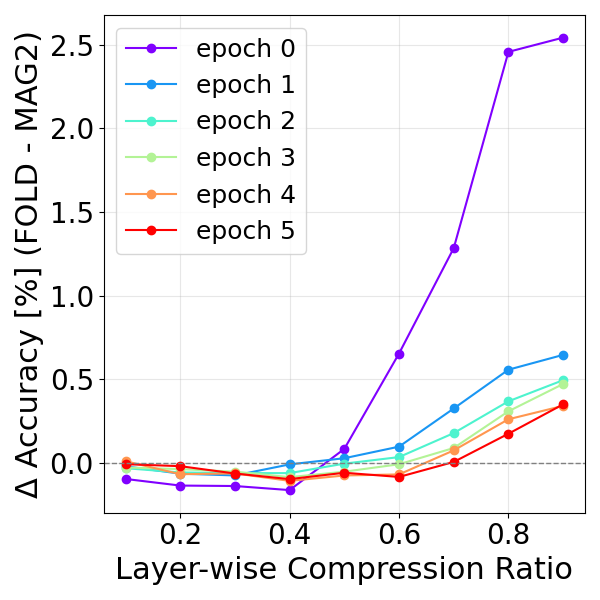}}
     \subfloat[][Before and after fine-tuning for 5 epochs: \fold (\textbf{left}) and \magtwo (\textbf{right}).]{\includegraphics[height=0.2\textwidth]{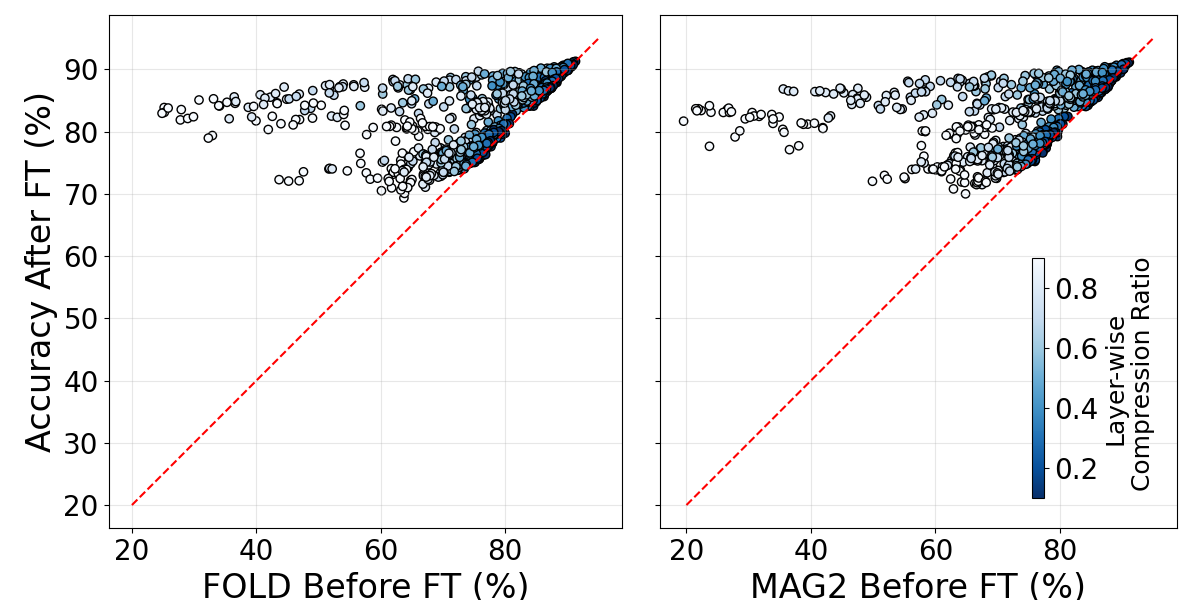}}
     
     \subfloat[][\magtwo vs \fold, fine-tuning for\\1 (\textbf{left}) and 5 (\textbf{right}) epochs.]
     {\includegraphics[height=0.2\textwidth]{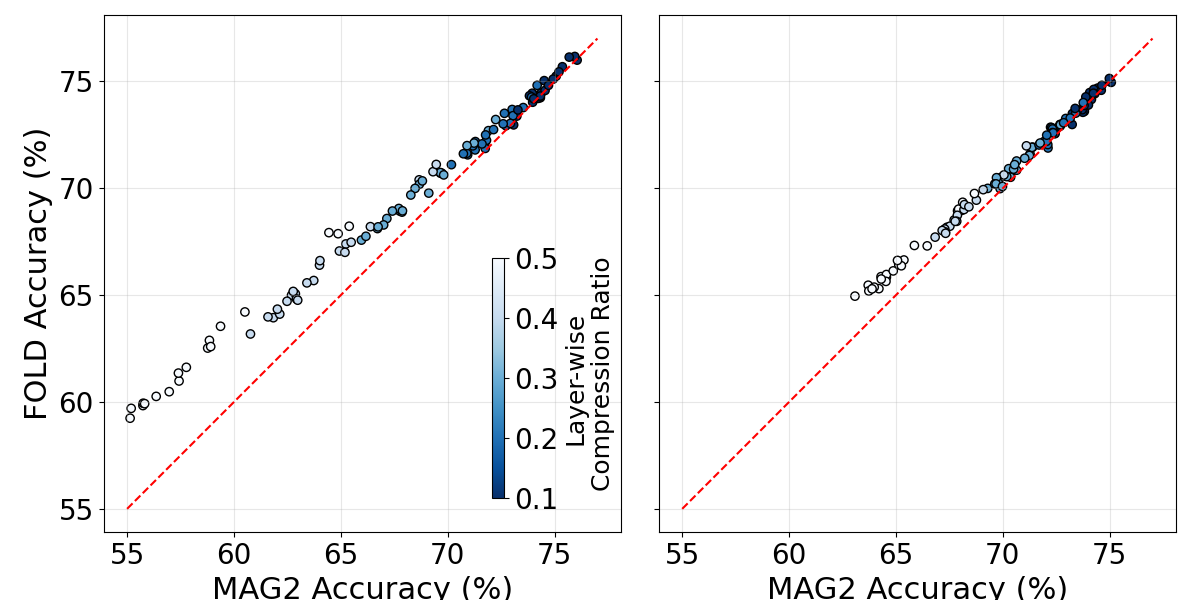}}
     \subfloat[][\magtwo vs \fold\\accuracy gap after ft.]{\includegraphics[height=0.2\textwidth]{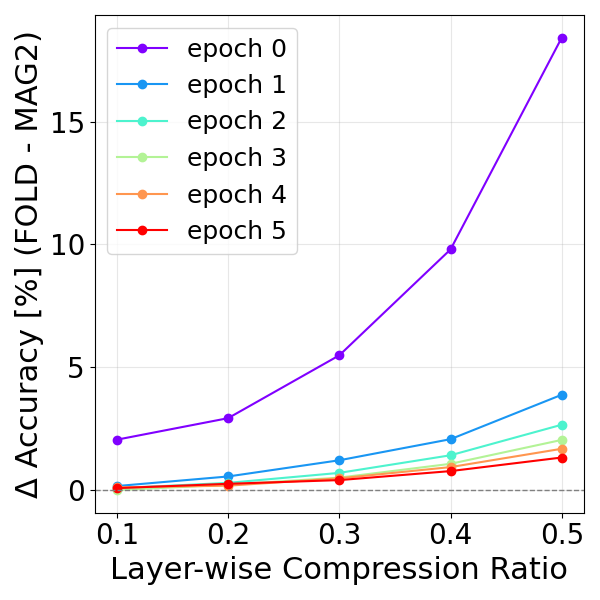}}
     \subfloat[][Before and after fine-tuning for 5 epochs: \fold (\textbf{left}) and \magtwo (\textbf{right}).]
     {\includegraphics[height=0.2\textwidth]{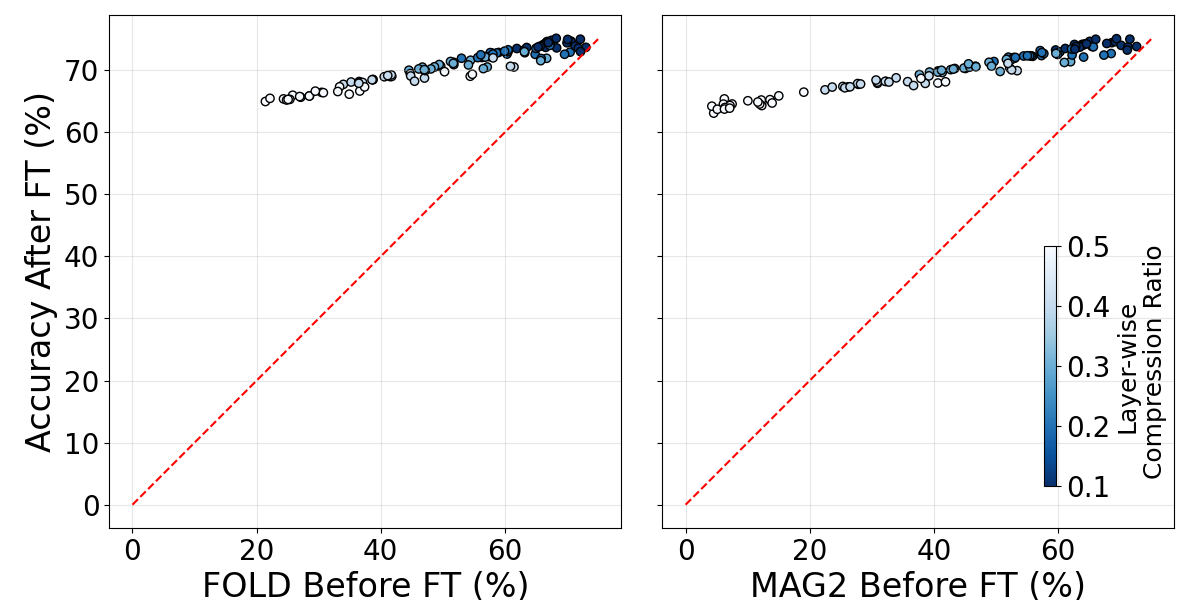}}
     
     \caption{\textbf{\fold outperforms \magtwo after full fine-tuning for 1--5 epochs on ViT-B/32 and CLIP ViT-B/32.} Results for ViT-B/32 on CIFAR-10 (\textbf{top}) and CLIP ViT-B/32 on ImageNet-1K (\textbf{bottom}). \textbf{(a,d)} accuracy of \magtwo vs. \fold after 1 and 5 epochs of fine-tuning. \textbf{(b,e)} accuracy gap $\Delta$ over epochs, remaining positive. \textbf{(c,f)} accuracy trajectories from post-compression through 5 epochs, showing faster recovery and higher final accuracy for \fold. The figure extends \Figref{fig:ft_CNN_and_ViTs_1_FOLD_vs_MAG_L1} in the main paper and \Figref{fig:ft_CNN_and_ViTs_2_FOLD_vs_MAG_L1} in the appendix to \magtwo.}
     \label{fig:ft_ViTs_FOLD_vs_MAG_L2}
\end{figure}

\begin{figure}[t]
     \centering
     \vskip -0.4cm
     \subfloat[][ResNet18, Adam, \textbf{no} L1 reg., \textbf{no} weight decay]{
        \includegraphics[height=0.24\textwidth]{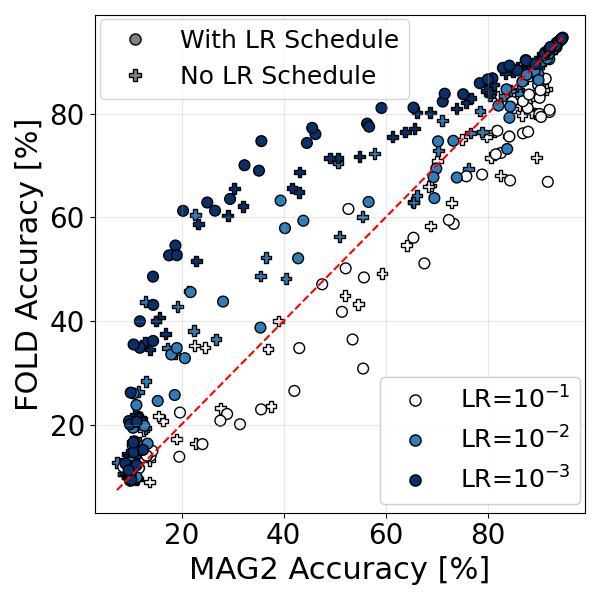}
        \includegraphics[height=0.24\textwidth]{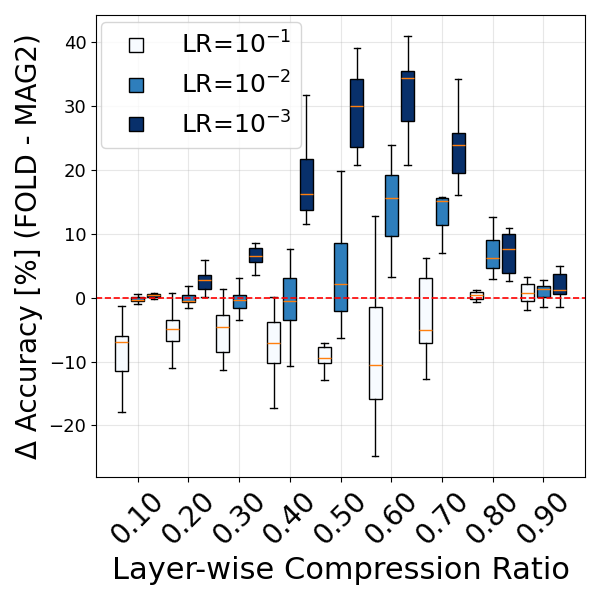}
    }
    \subfloat[][ResNet18, SGD, \textbf{no} L1 reg., \textbf{no} weight decay]{
        \includegraphics[height=0.24\textwidth]{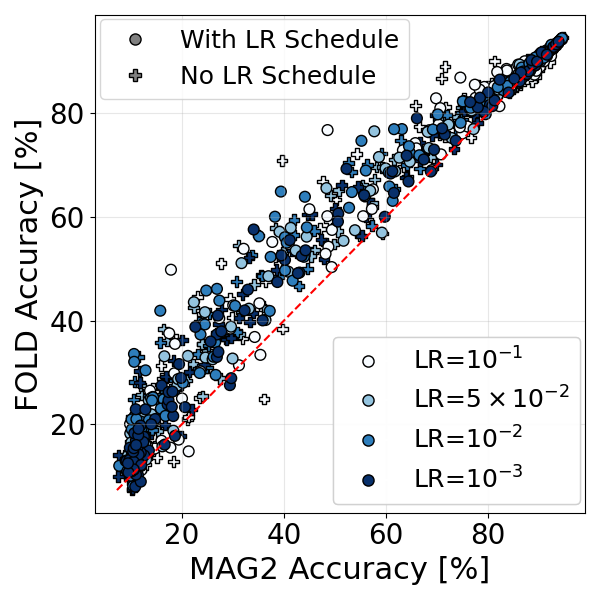}
        \includegraphics[height=0.24\textwidth]{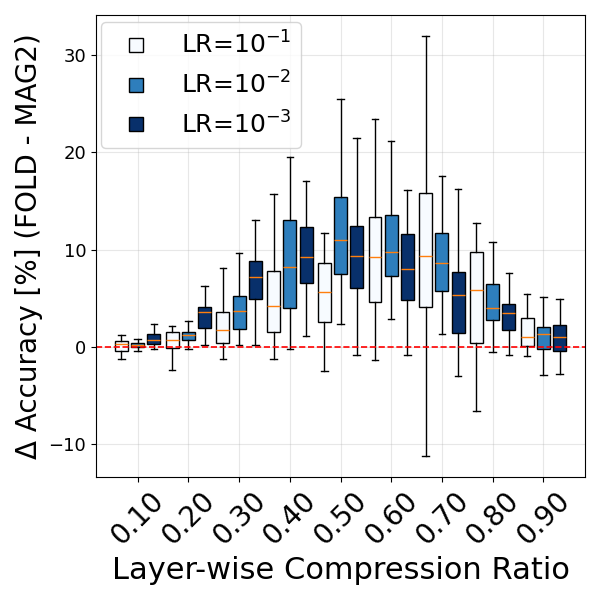}
    }

    \subfloat[][PreActResNet18]{
        \includegraphics[height=0.24\textwidth]{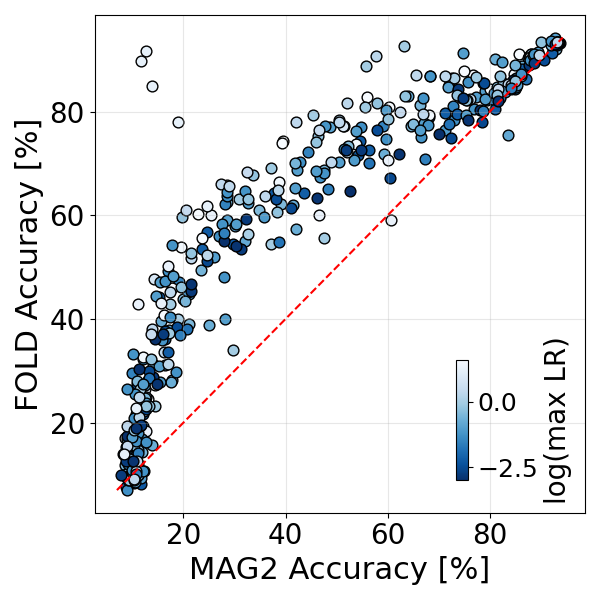}
        \includegraphics[height=0.24\textwidth]{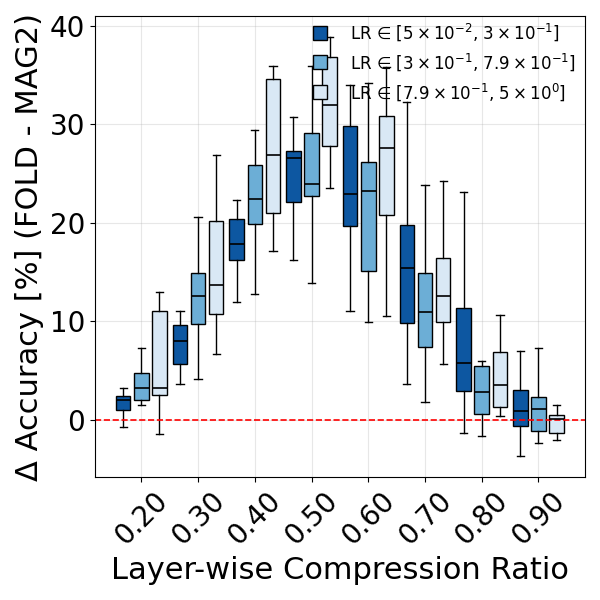}
    }
    \subfloat[][ViT-B/32]{
        \includegraphics[height=0.24\textwidth]{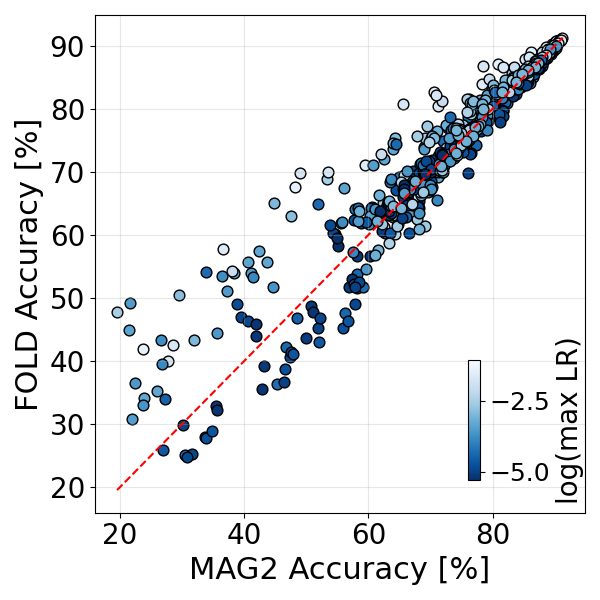}
        \includegraphics[height=0.24\textwidth]{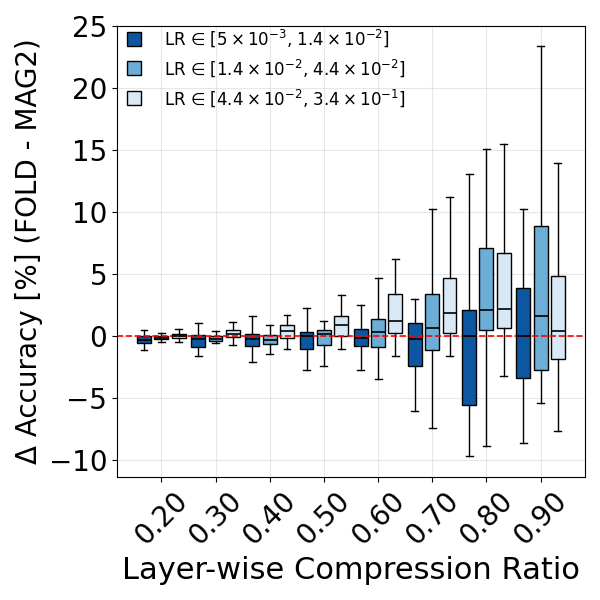}
    }
     
     \caption{\textbf{Learning rate modulates folding’s edge.} Post-compression accuracy of \magtwo and \fold across learning rates: ResNet18 with Adam \textbf{(a)} and SGD \textbf{(b)}, PreActResNet18 \textbf{(c)}, and ViT-B/32 \textbf{(d)}. \fold typically leads at moderate–low rates; the gap shrinks or reverses at very high rates, and closes again at extremely small rates. The same setup as in \Figref{fig:lr-effects} in the main paper, but for \magtwo.}
    \label{fig:lr-effects-other}
\end{figure}

\begin{figure}[t]
     \centering
     \vskip -0.4cm
     \subfloat[][ResNet18, SGD, \magtwo vs \fold, \textbf{without} L1 reg.]{
        \includegraphics[height=0.24\textwidth]{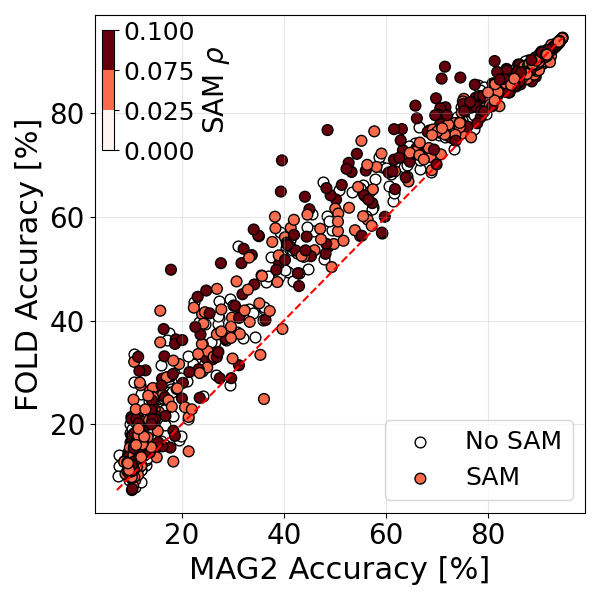}
        \includegraphics[height=0.24\textwidth]{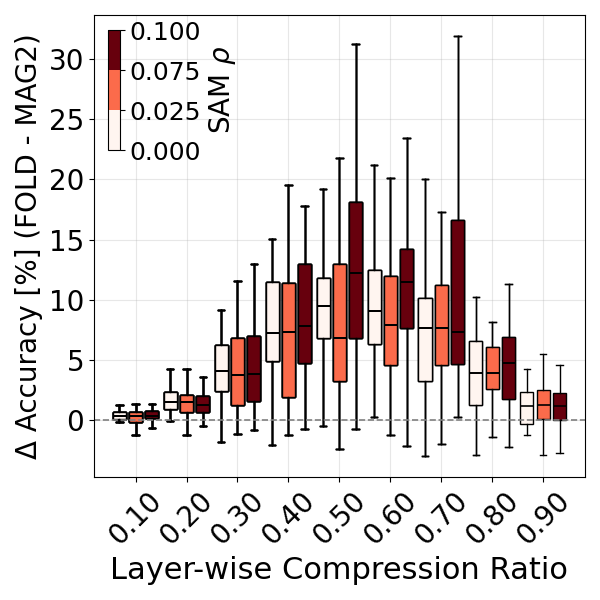}
     }
     \subfloat[][ResNet18, SGD, \magtwo vs \fold, \textbf{with} L1 reg.]{
        \includegraphics[height=0.24\textwidth]{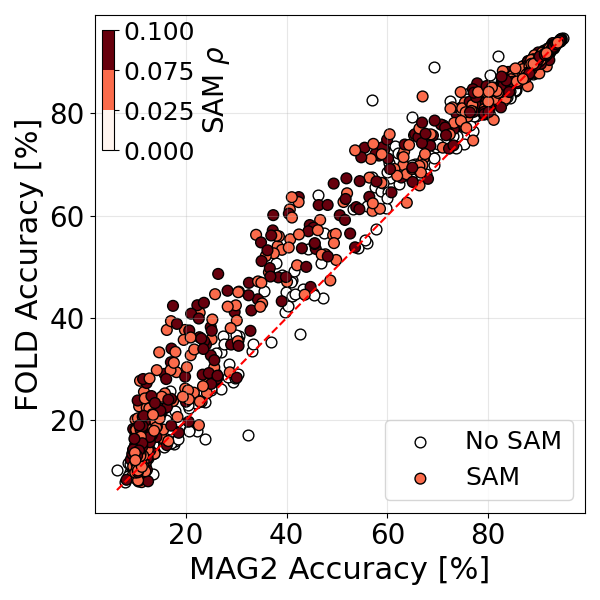}
        \includegraphics[height=0.24\textwidth]{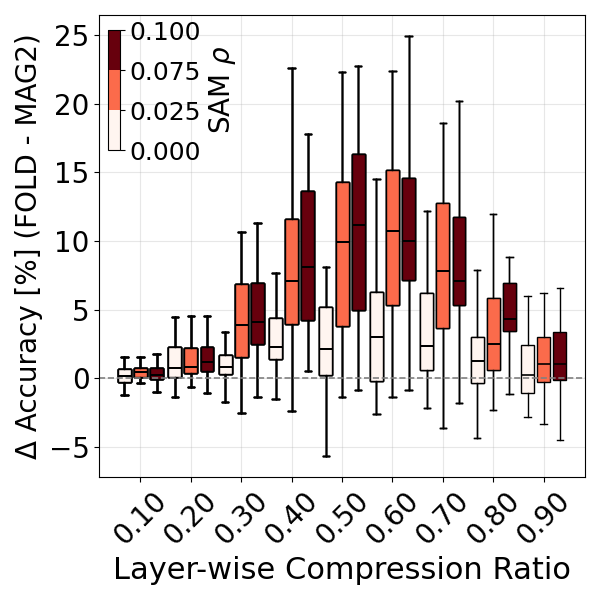}
     }

     \subfloat[][PreActResNet, \magtwo vs \fold, \textbf{no} L1 regularization]{
        \includegraphics[height=0.24\textwidth]{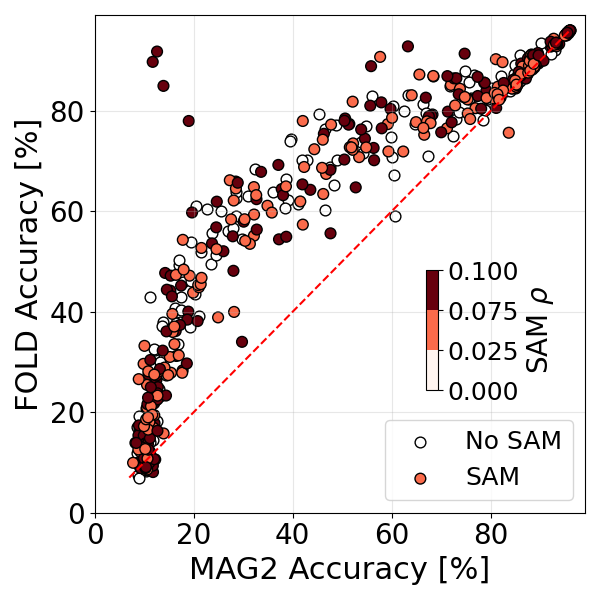}
        \includegraphics[height=0.24\textwidth]{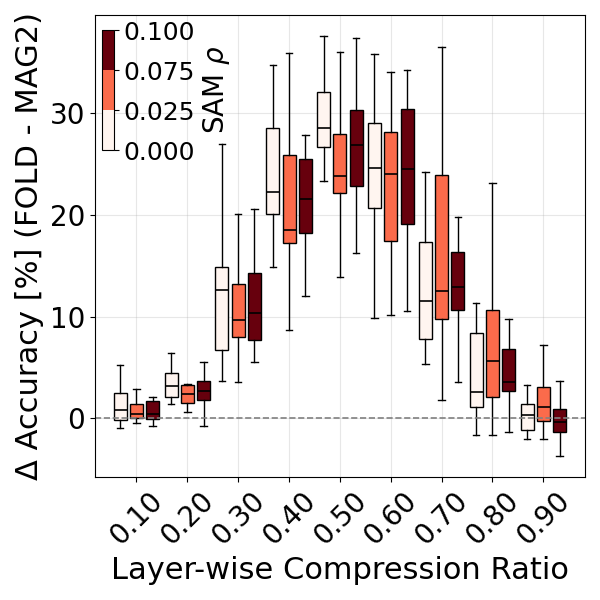}
     }
     \subfloat[][ViT-B/32, \magtwo vs \fold, \textbf{no} L1 regularization]{
        \includegraphics[height=0.24\textwidth]{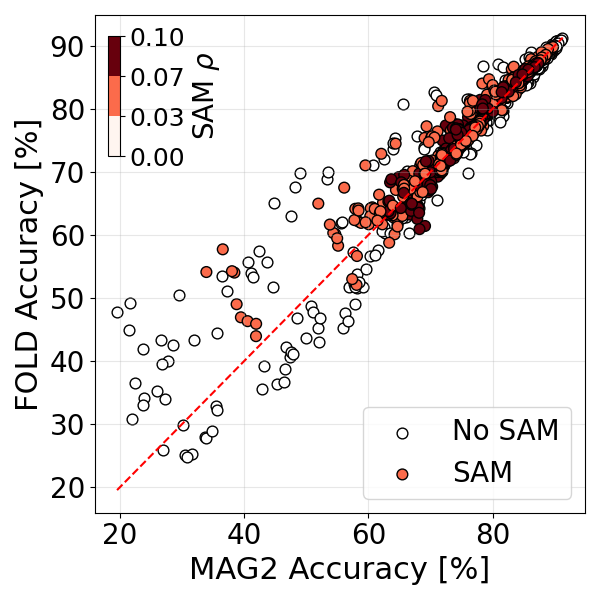}
        \includegraphics[height=0.24\textwidth]{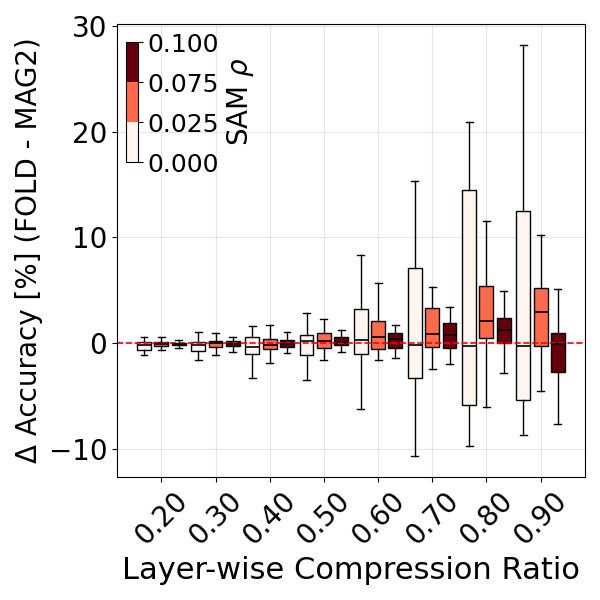}
      }

     \caption{\textbf{SAM can boost model compression.} Post-compression accuracy under training with\,/\,without SAM. \textbf{(a)} ResNet18 (Adam), no L1. \textbf{(b)} ResNet18 (Adam), L1$=10^{-5}$. \textbf{(c)} PreActResNet18 (SGD), no L1. \textbf{(d)} ViT\mbox{-}B/32, no L1. The figure extends the results in \Figref{fig:sam-effects} to \magtwo.}
    \label{fig:sam-effects-other}
\end{figure}

\begin{figure}[t]
    \centering
    \vskip -0.4cm
    \subfloat[][Adam, \magtwo vs \fold]{
        \includegraphics[height=0.24\textwidth]{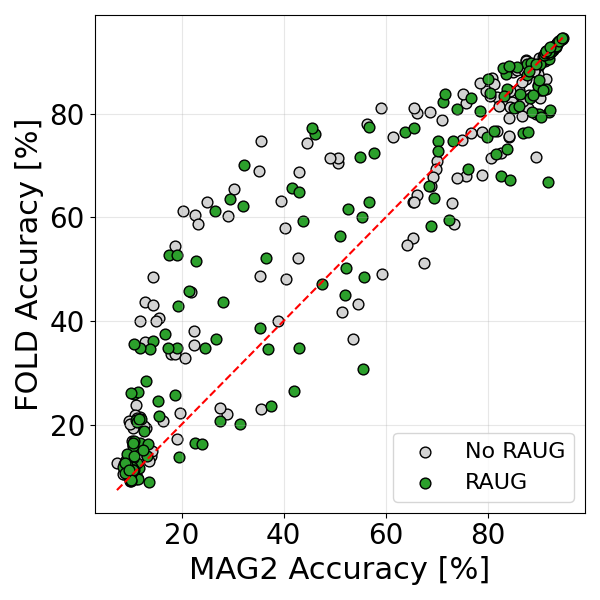}
        \includegraphics[height=0.24\textwidth]{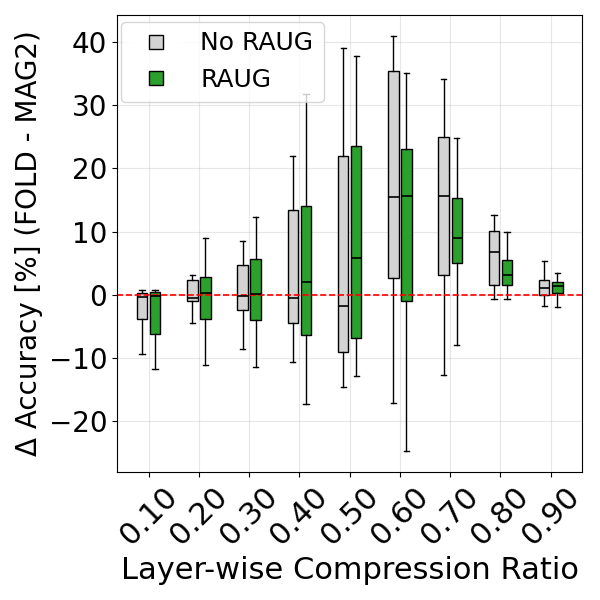}
    }
    \subfloat[][SGD, \magtwo vs \fold]{
        \includegraphics[height=0.24\textwidth]{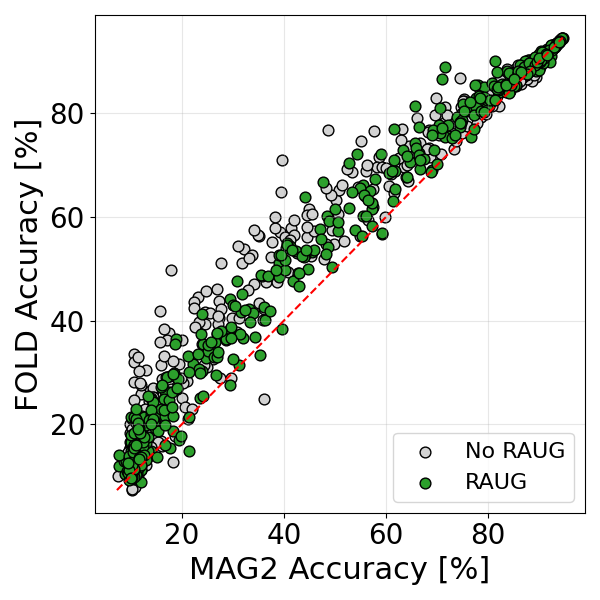}
        \includegraphics[height=0.24\textwidth]{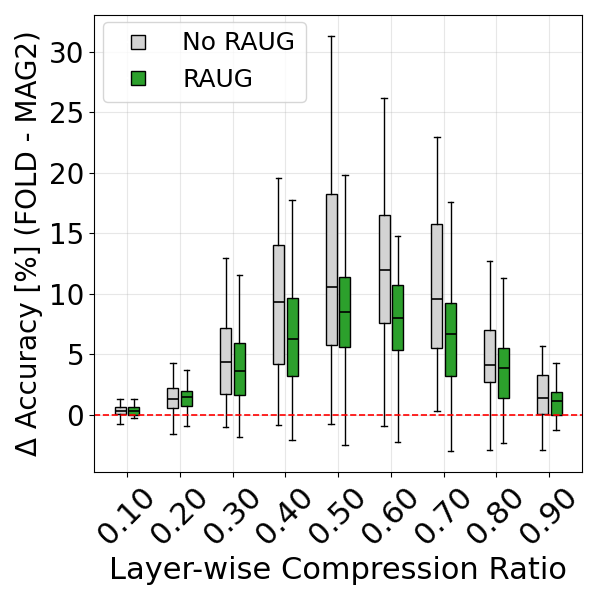}
    }

    \subfloat[][PreActResNet, \magtwo vs \fold]{
        \includegraphics[height=0.24\textwidth]{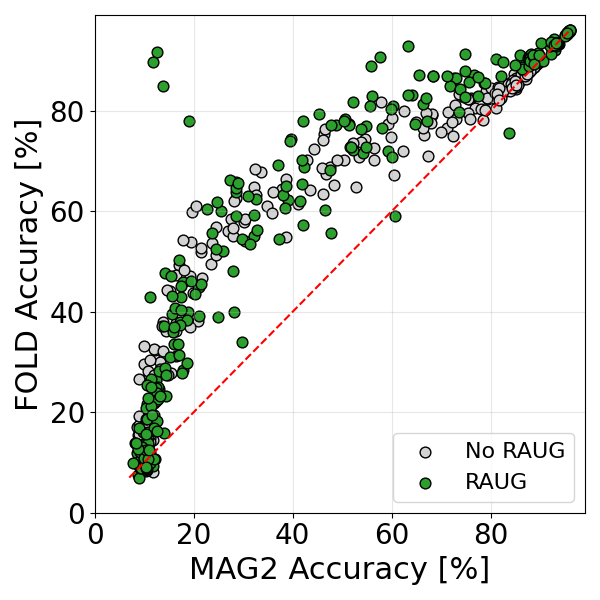}
        \includegraphics[height=0.24\textwidth]{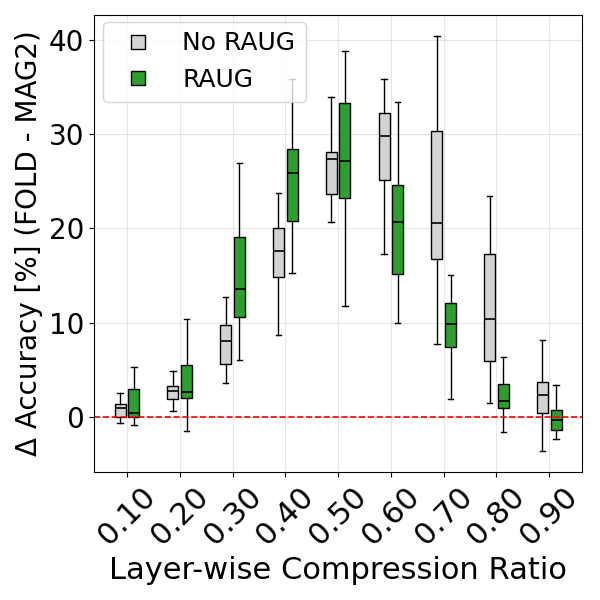}
    }
    \subfloat[][ViT-B/32, \magtwo vs \fold]{
        \includegraphics[height=0.24\textwidth]{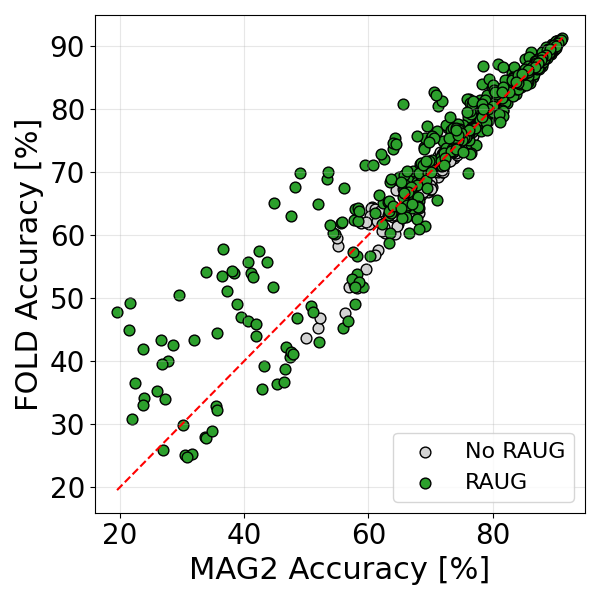}
        \includegraphics[height=0.24\textwidth]{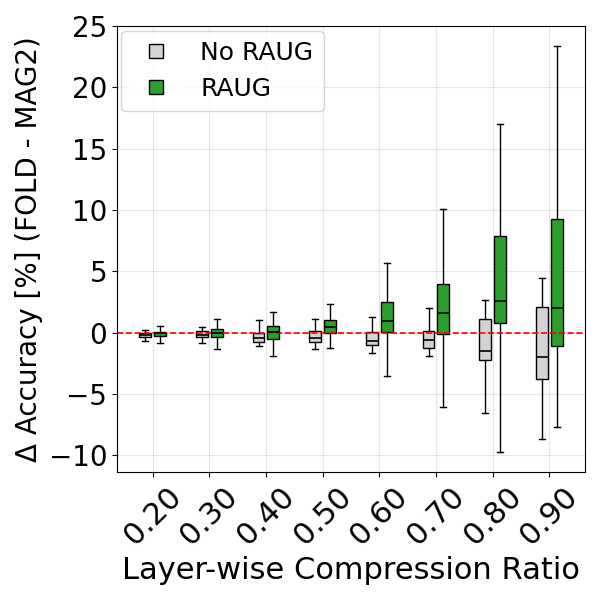}
    }
    \caption{\textbf{Random augmentations narrow the folding–pruning gap.} Post-compression accuracy on ResNet18 (CIFAR-10) trained without vs.\ with random augmentations: \textbf{(a)} Adam, \textbf{(b)} SGD, \textbf{(c)} PreActResNet, \textbf{(d)} ViT-B/32. The figure extends \Figref{fig:raug-effects} to \magtwo.
     }
    \label{fig:raug-effects-other}
\end{figure}

\begin{figure}[t]
    \centering
    \vskip -0.4cm
    \subfloat[][Adam, \magtwo vs \fold]{
        \includegraphics[height=0.24\textwidth]{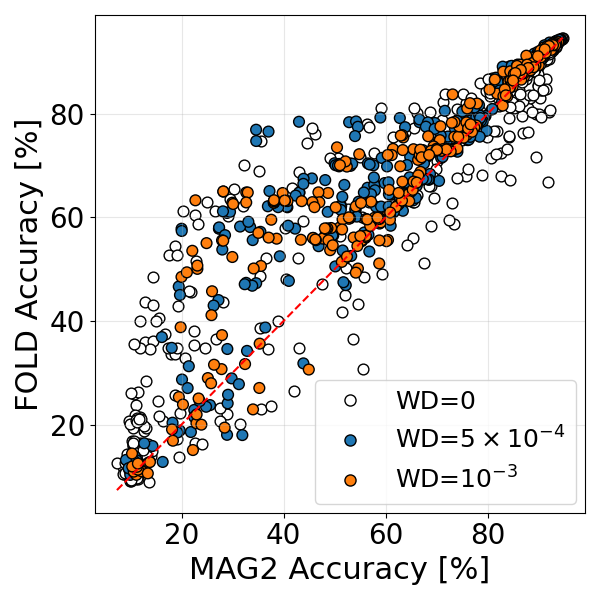}
        \includegraphics[height=0.24\textwidth]{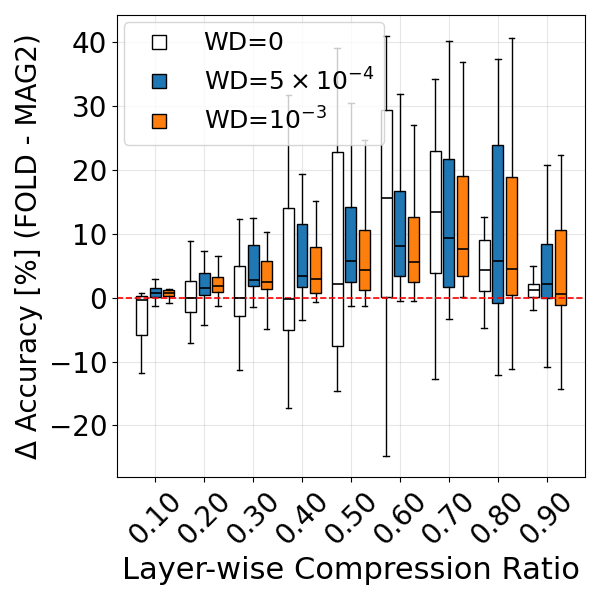}
    }
    \subfloat[][SGD, \magtwo vs \fold]{
        \includegraphics[height=0.24\textwidth]{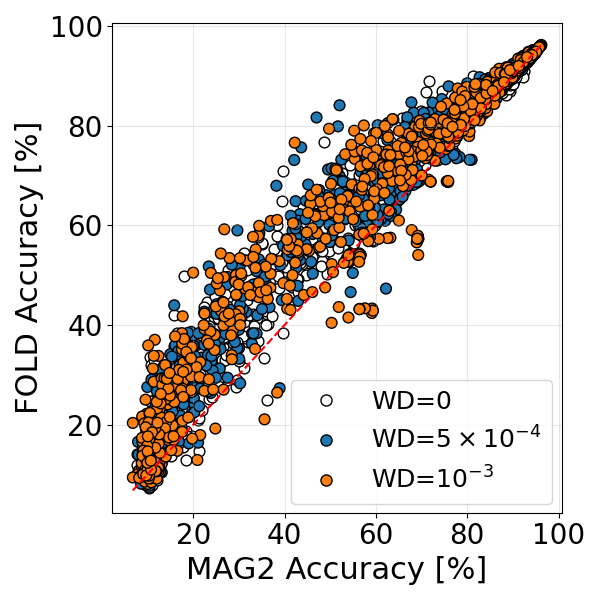}
        \includegraphics[height=0.24\textwidth]{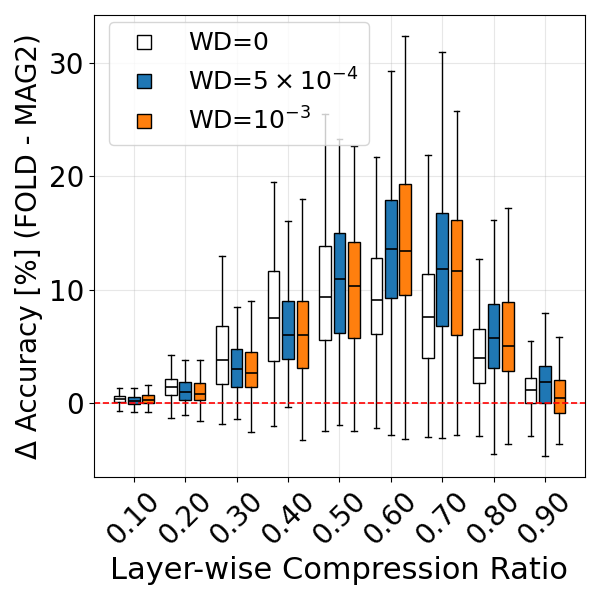}
    }
    \caption{\textbf{ResNet18: Weight Decay.} 
    Test accuracy of ResNet18 checkpoints trained with varying weight decay values. Weight decay does not diminish the advantage of \fold compared to \magtwo, especially for SGD-trained models.}
    \label{fig:wd-effects-other}
\end{figure}

\section{Additional Analyses: Sharpness, Runtime, and LLMs}
\label{appx:rebuttal}

Below we report additional evaluations. We extend our study by training and compressing 60M- and 130M-parameter LLaMA models on C4, and provide analyses of sharpness, and measure runtime overhead.

\begin{table*}[h]
\centering
\small
\setlength{\tabcolsep}{4pt}
\renewcommand{\arraystretch}{1.2}
\resizebox{1.00\textwidth}{!}{%
\begin{tabular}{cccc|cc|cc}
\hline
\rowcolor{lightblue}
\textbf{weight\_decay} & \textbf{warmup\_steps} & \textbf{max\_lr} &
\textbf{PPL$\downarrow$ 0\% sparsity} &
\textbf{PPL$\downarrow$ \magtwo (20\%)} & \textbf{PPL$\downarrow$ \fold (20\%)} &
\textbf{PPL$\downarrow$ \magtwo (50\%)} & \textbf{PPL$\downarrow$ \fold (50\%)} \\
\hline
0.01& 1100 & 0.001 & 23.90  & 39.88  & \textbf{39.48}  & \textbf{236.16} & 308.77 \\
0.01& 2200 & 0.001 & 23.99  & \textbf{38.75} & 39.61   & \textbf{259.79} & 469.25 \\
0.01& 3300 & 0.001 & 24.08  & \textbf{38.54} & 39.10   & \textbf{289.67} & 451.27 \\
0.0 & 1100 & 0.001 & 24.01  & 42.39  & \textbf{42.27}  & \textbf{270.31} & 477.70 \\
0.0 & 2200 & 0.001 & 24.12  & \textbf{40.01} & 41.53   & \textbf{239.25} & 489.48 \\
0.0 & 3300 & 0.001 & 24.19  & \textbf{38.72} & 40.31   & \textbf{277.10} & 531.09 \\
0.01& 1100 & 0.005 & 42.11  & 72.31  & \textbf{62.63}  & 536.53 & \textbf{298.65} \\
0.01& 2200 & 0.005 & 22.82  & 52.18  & \textbf{40.46}  & 824.69 & \textbf{333.59} \\
0.01& 3300 & 0.005 & \underline{22.66}  & 44.35  & \underline{\textbf{36.22}}  & 589.33 & \textbf{222.21} \\
0.0 & 1100 & 0.005 & 44.92  & 73.38  & \textbf{63.75}  & 414.17 & \textbf{261.23} \\
0.0 & 2200 & 0.005 & 23.32  & 57.62  & \textbf{43.04}  & 1616.74 & \textbf{342.64} \\
0.0 & 3300 & 0.005 & 23.00  & 46.87  & \textbf{39.11}  & 904.07 & \textbf{305.85} \\
0.01& 1100 & 0.01  & 300.95 & 302.26 & \textbf{301.87} & 401.28 & \textbf{361.10} \\
0.01& 2200 & 0.01  & 66.48  & 88.09  & \textbf{84.30}  & 398.16 & \textbf{252.99} \\
0.01& 3300 & 0.01  & 54.34  & 97.35  & \textbf{76.15}  & 440.71 & \textbf{229.42} \\
0.0 & 1100 & 0.01  & 282.11 & 282.48 & \textbf{282.38} & 345.74 & \textbf{329.58} \\
0.0 & 2200 & 0.01  & 140.20 & 169.78 & \textbf{149.58} & 352.14 & \textbf{234.69} \\
0.0 & 3300 & 0.01  & 86.18  & 118.05 & \textbf{100.14} & 339.37 & \underline{\textbf{179.43}} \\
\hline
\end{tabular}
}
\caption{\textbf{Evaluation of \fold and \magtwo on LLaMA-130M} (in addition to LLaMA-60M evaluations in \Tabref{tab:llama60m}). We train and evaluate 18 LLaMA-family models with 130M parameters on C4 while varying max\_lr, warmup steps, and weight decay. Columns show perplexity of the pretrained model (0\% sparsity) and perplexity after structured magnitude pruning and folding with 20\% and 50\% sparsity in FFN blocks. For higher learning rates, especially for the settings with the best achieved performance in each sparsity category (underlined), \fold consistently outperforms \magtwo (bold).}
\label{tab:llama130m}
\end{table*}

\begin{figure}[h]
     \centering
     \vskip -0.4cm
     \subfloat[][ResNet18, Adam, \fold vs \magone, \textbf{no} L1 regularization]{
        \includegraphics[height=0.24\textwidth]{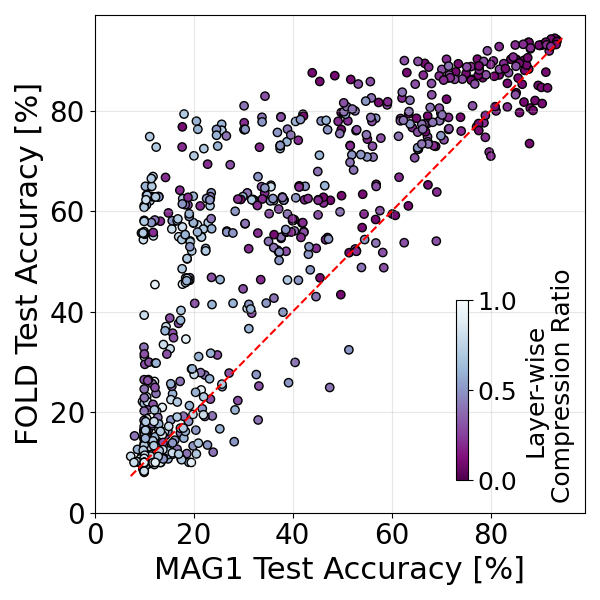}
        \includegraphics[height=0.24\textwidth]{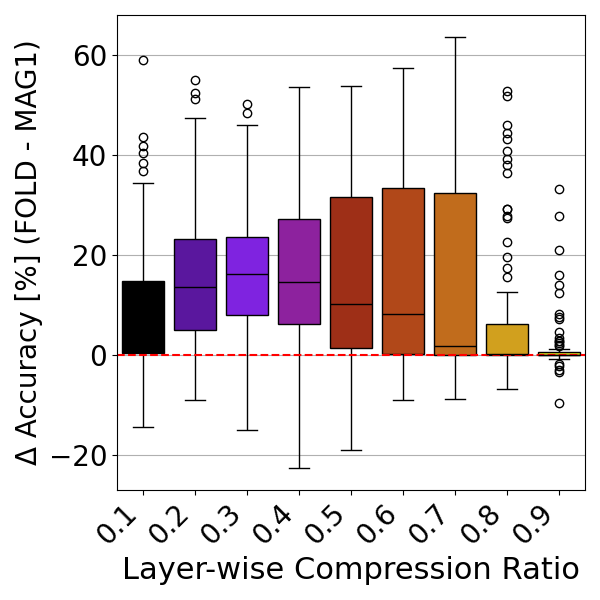}
    }
    \subfloat[][PreActResNet18, \fold vs \magone]{
        \includegraphics[height=0.24\textwidth]{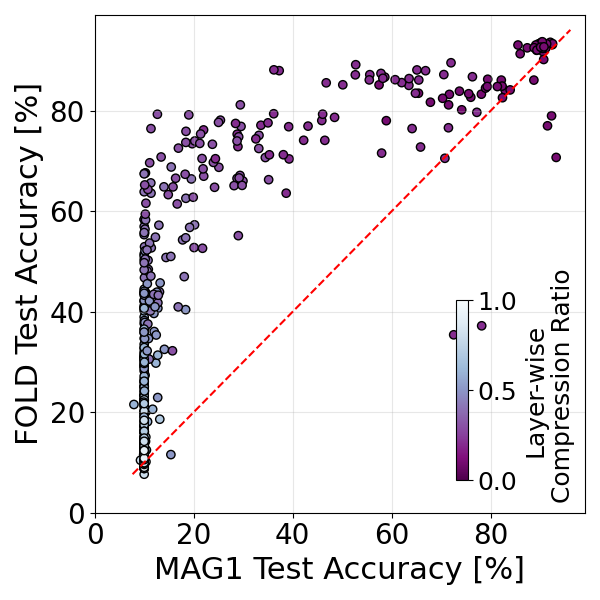}
        \includegraphics[height=0.24\textwidth]{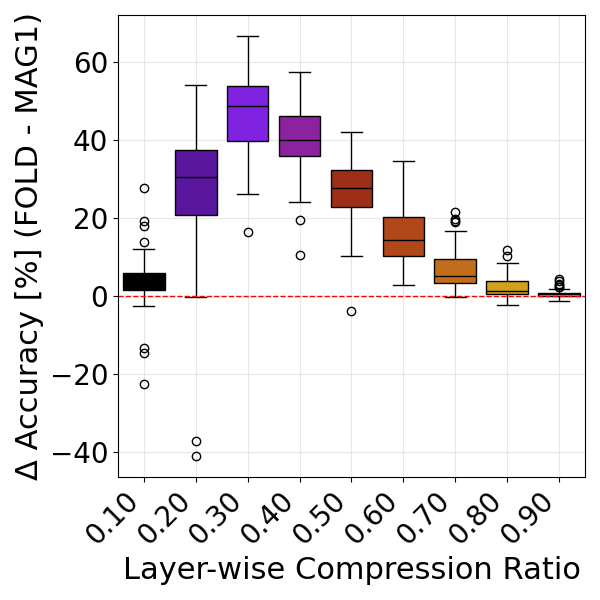}
    }

    \subfloat[][ResNet18, Adam, \fold vs \magtwo, \textbf{no} L1 regularization]{
        \includegraphics[height=0.24\textwidth]{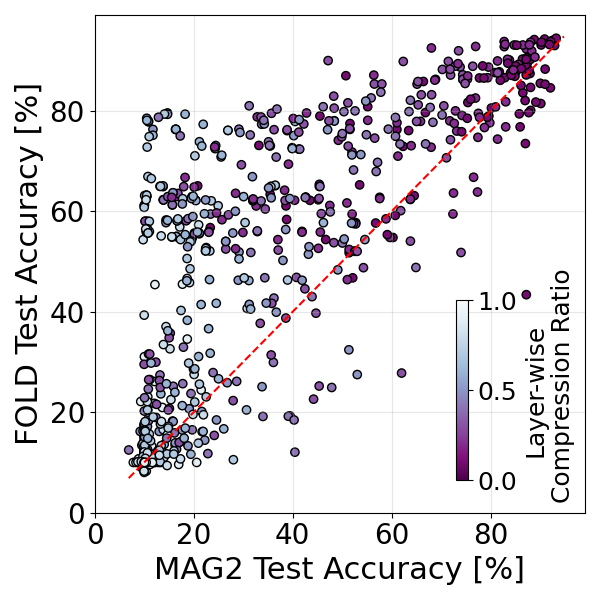}
        \includegraphics[height=0.24\textwidth]{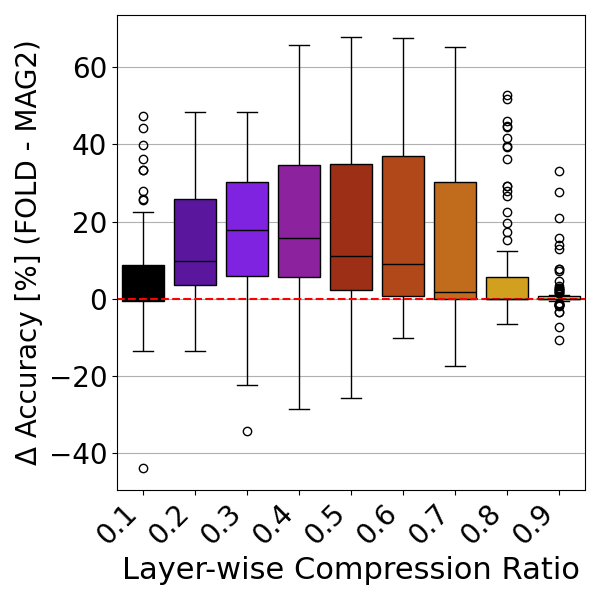}
    }
    \subfloat[][PreActResNet18, \fold vs \magtwo]{
        \includegraphics[height=0.24\textwidth]{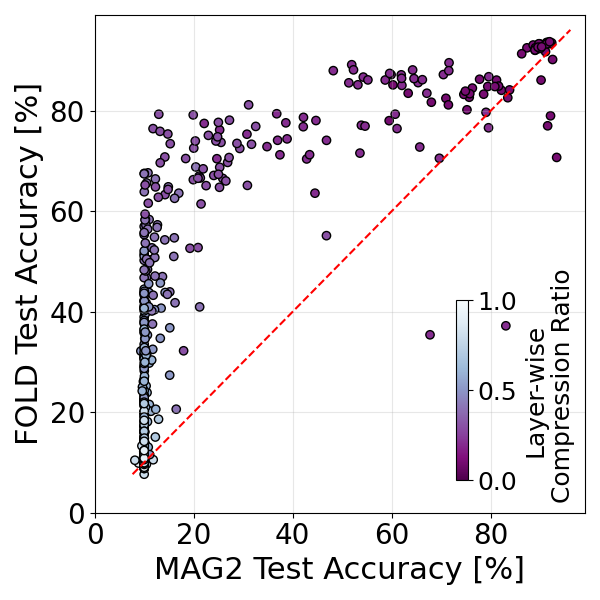}
        \includegraphics[height=0.24\textwidth]{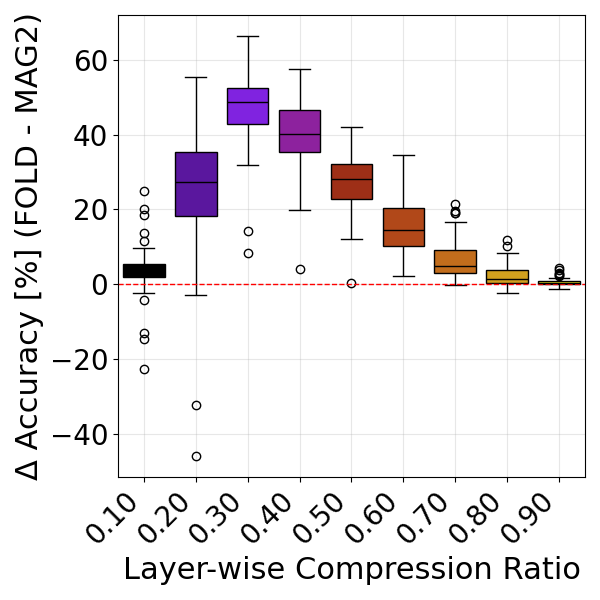}
    }
    \caption{\textbf{Folding vs. magnitude pruning before REPAIR.} The same setup as in \Figref{fig:accuracy_FOLD_vs_MAG_L1} and \Figref{fig:accuracy_FOLD_vs_MAG_L2} for CNNs (ResNet18 and PreActResNet18 on CIFAR-10), but the performance is compared for both pruning and folding before REPAIR.
    \textbf{Top row:} \magone, 
    \textbf{bottom row:} \magtwo. In both cases, folding shows stronger performance already before data-based REPAIR is applied.}
     \label{fig:accuracy_FOLD_vs_MAG_L2__before_repair}
\end{figure}

\begin{figure}[h]
     \centering
     \vskip -0.4cm
     \subfloat[][ResNet18, Adam]{
        \includegraphics[height=0.19\textwidth]{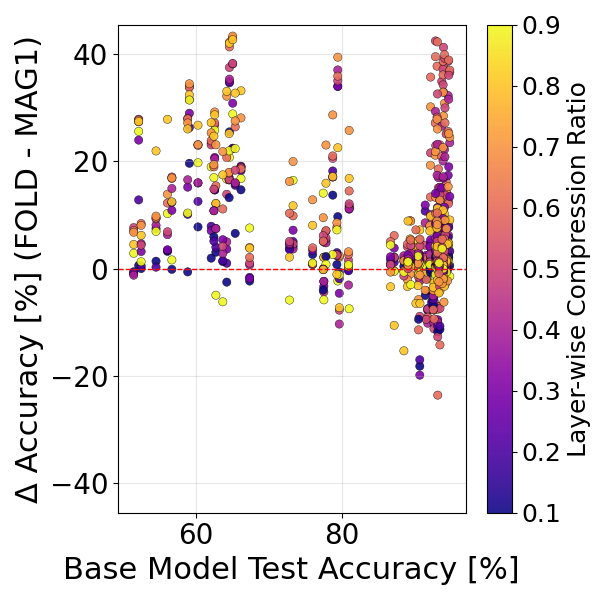}
    }
    \subfloat[][ResNet18, SGD]{
        \includegraphics[height=0.19\textwidth]{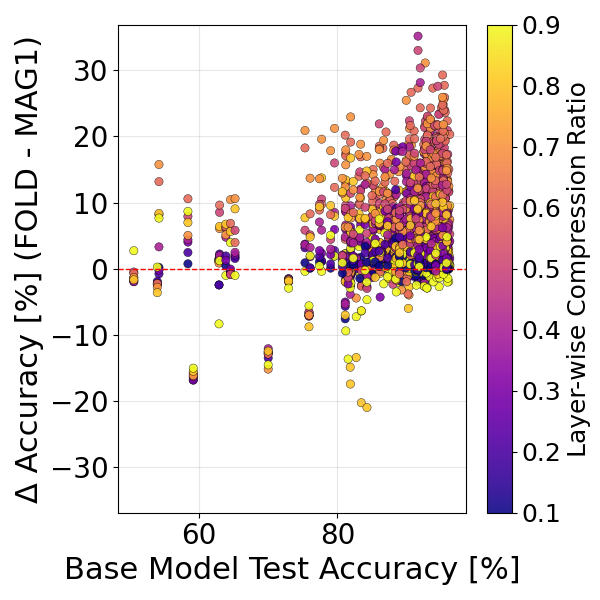}
    }
    \subfloat[][PreActResNet18]{
        \includegraphics[height=0.19\textwidth]{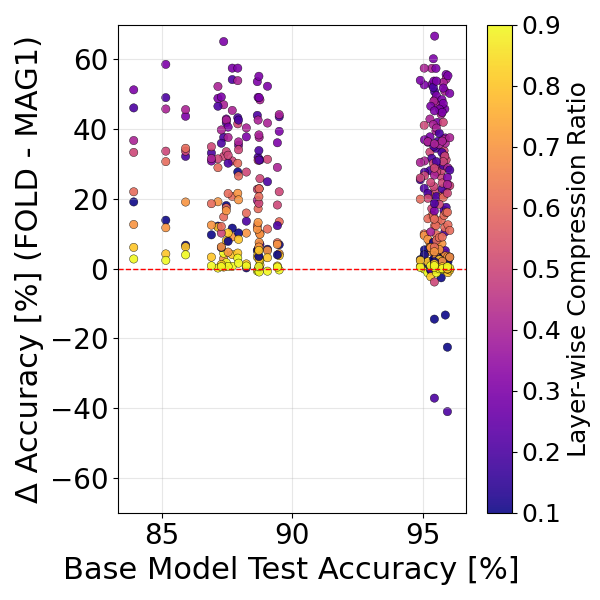}
    }
    \subfloat[][ViT-B/32]{
        \includegraphics[height=0.19\textwidth]{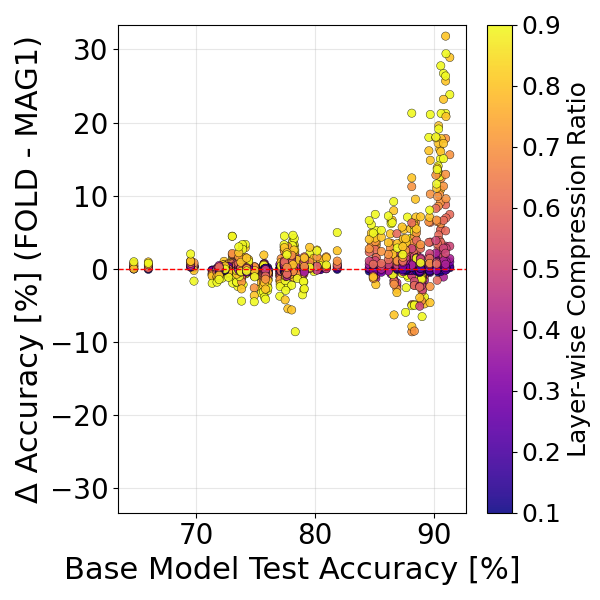}
    }
    \subfloat[][CLIP ViT-B/32]{
        \includegraphics[height=0.19\textwidth]{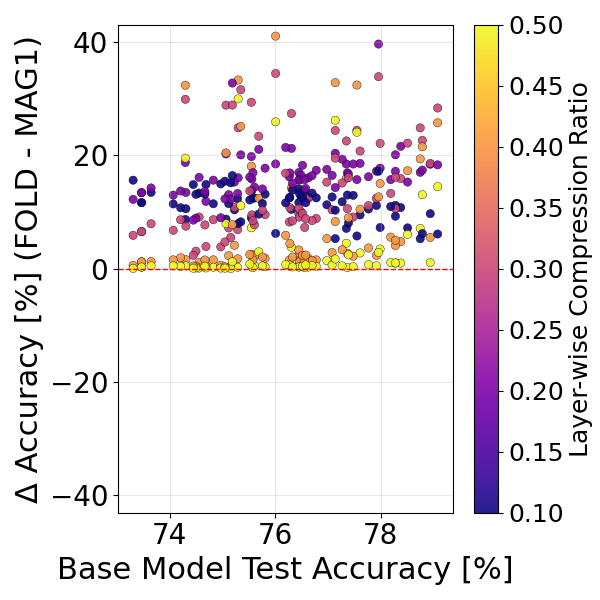}
    }

    \subfloat[][ResNet18, Adam]{
        \includegraphics[height=0.19\textwidth]{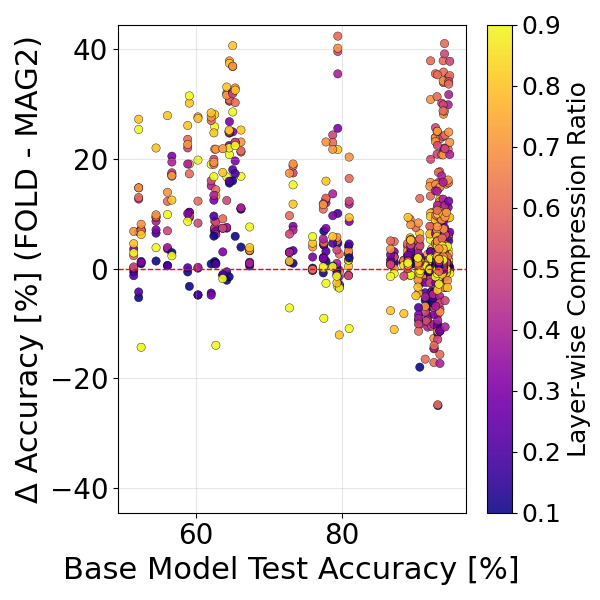}
    }
    \subfloat[][ResNet18, SGD]{
        \includegraphics[height=0.19\textwidth]{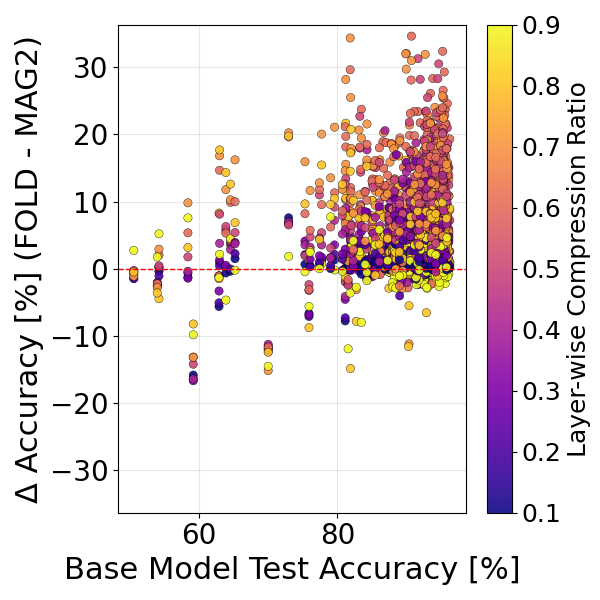}
    }
    \subfloat[][PreActResNet18]{
        \includegraphics[height=0.19\textwidth]{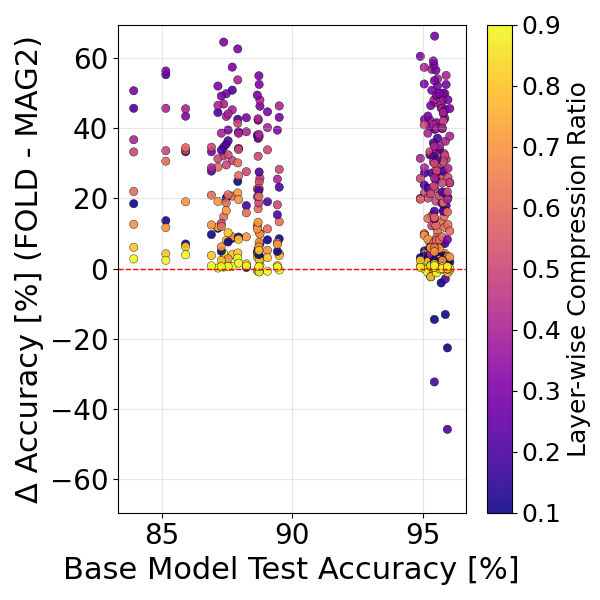}
    }
    \subfloat[][ViT-B/32]{
        \includegraphics[height=0.19\textwidth]{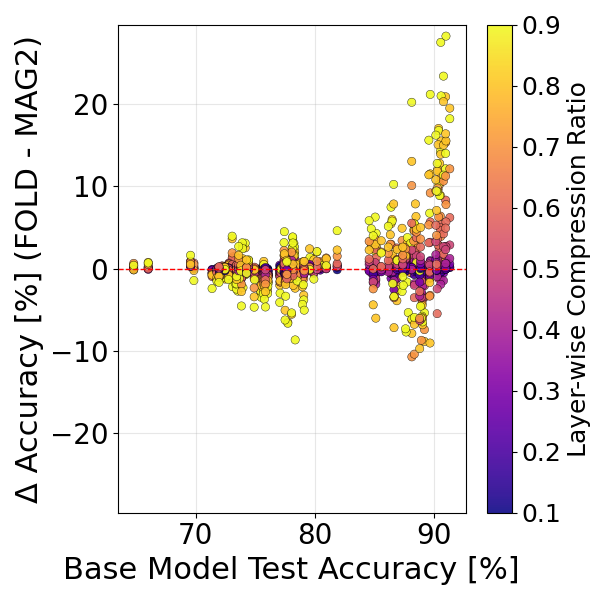}
    }
    \subfloat[][CLIP ViT-B/32]{
        \includegraphics[height=0.19\textwidth]{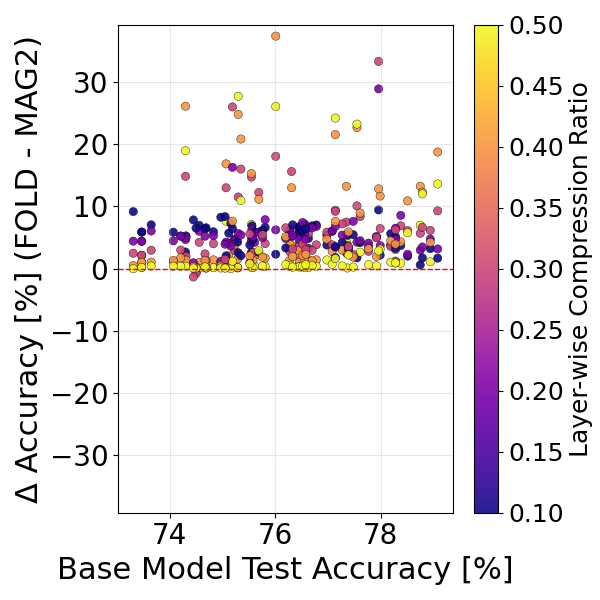}
    }
    \caption{\textbf{Uncompressed model accuracy vs. performance difference $\Delta \text{Accuracy}=\mathrm{Acc}{(\text{\fold})}-\mathrm{Acc}{(\text{\magprune})}$.} The same setup as in \Figref{fig:accuracy_FOLD_vs_MAG_L1} and \Figref{fig:accuracy_FOLD_vs_MAG_L2}.
    \textbf{Top row:} \magone,
    \textbf{Bottom row:} \magtwo. Model folding shows strong performance on models of different quality, with amplified effect on high-performing models (especially on ResNet18, SGD and ViT-B/32).}
     \label{fig:accuracy_compressed_vs_uncompressed}
\end{figure}

\subsection{Sharpness and Barrier Analysis}
\label{appx:sharpness}
We compute sharpness following the implementation by \citet{andriushchenko2023modernlookrelationshipsharpness}.  
Sharpness for CLIP is measured only on the final projection layer using $\sim$1000 images, while for PreActResNets it is computed over the full model and dataset.

Sharpness increases with compression ratio for all methods and architectures (\Figref{fig:sharpness_FOLD_vs_MAG}), reaching a peak before stronger compression pushes the model out of its original basin and into a flatter, lower-capacity region. This rise–then–fall pattern appears consistently in both PreActResNet and CLIP models, see \Figref{fig:sharpness_FOLD_vs_MAG}.

The correlation analysis in \Figref{fig:sharpness:corr} further supports this interpretation. Across the 200 compressed ResNet18 models, 50 compressed PreActResNet18s and 72 compressed CLIP models, \fold exhibits negative correlations between $\Delta$-sharpness and $\Delta$-accuracy ($\Delta\mathrm{Accuracy}=\mathrm{Acc}(\text{\fold})-\mathrm{Acc}(\text{\magprune})$). As shown in the scatter plots and correlation tables of \Figref{fig:sharpness:corr}, larger reductions in sharpness under \fold relative to \magprune are associated with larger accuracy gains. This relationship holds across the evaluated compression ratios, up to the point where one of the models leaves the original basin and sharpness becomes less informative.

In addition to the global sharpness trends discussed above, \Figref{fig:adam_sharpness_delta_vs_delta} and \Figref{fig:sgd_sharpness_delta_vs_delta} provide a more 
fine-grained view of how training hyperparameters influence the relationship between $\Delta$-sharpness and $\Delta$-accuracy under compression. For Adam-trained ResNet models (\Figref{fig:adam_sharpness_delta_vs_delta}), the scatter plots reveal a strong and stable negative correlation: whenever 
\fold produces lower sharpness than magnitude pruning, it also achieves higher accuracy across 
almost all pruning ratios. The structure of the point clouds, especially at high learning rates, shows that Adam’s adaptive scaling can induce highly anisotropic sharpness profiles, which in turn amplify the divergence between the compression trajectories of \fold and \magprune.

In contrast, SGD-trained models (\Figref{fig:sgd_sharpness_delta_vs_delta}) exhibit a weaker and more dispersed relationship 
between $\Delta$-sharpness and $\Delta$-accuracy, consistent with the flatter and more isotropic minima typically found by SGD. Under SGD, \fold often remains slightly flatter than 
magnitude pruning even when $\Delta$-sharpness $\approx 0$, explaining why \fold maintains a mild yet 
more weakly correlated accuracy advantage. The interaction with SAM and augmentation further differs across optimizers: SAM tightens the $\Delta$-sharpness distribution under SGD, 
stabilizing the performance gap in favor of \fold, while RandAug tends primarily to reduce variance without introducing strong directional trends.

These results highlight that the predictive power of sharpness for pruning outcomes is 
optimizer- and hyperparameter-dependent: sharpness differences are highly informative for Adam-trained networks but less so for SGD, even though \fold consistently follows a smoother and less disruptive compression path than magnitude-based pruning in both regimes.

These findings align with recent work linking compression and landscape geometry. AdaSAP \citep{bair2024adaptivesharpnessawarepruningrobust} treats pruning as a sharpness-aware process, and \citet{zhang2025sparseprunedeepnetwork} show that feasible pruning ratios depend on intrinsic flatness. Our results support this perspective: compression initially increases sharpness as degrees of freedom are removed within the same basin, but stronger compression forces the model into a flatter basin with reduced curvature. \fold follows this trajectory more smoothly, maintaining basin structure and yielding lower barriers than \magprune.

\begin{figure}[h]
     \centering
     \vskip -0.4cm
     \subfloat[][ResNet18, Adam, \fold]{
        \includegraphics[width=0.24\textwidth]{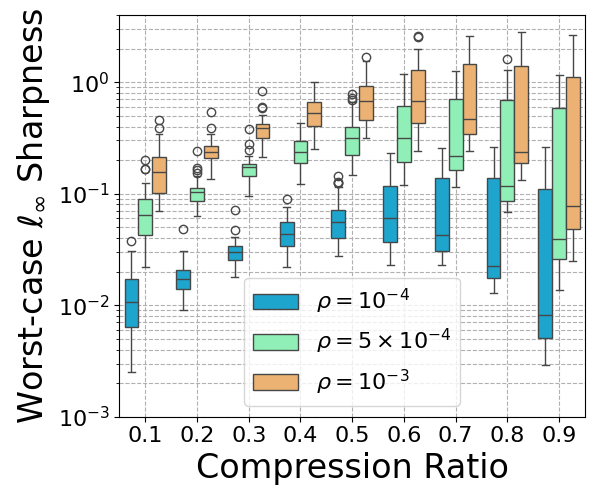}
    }
    \subfloat[][ResNet18, SGD, \fold]{
        \includegraphics[width=0.24\textwidth]{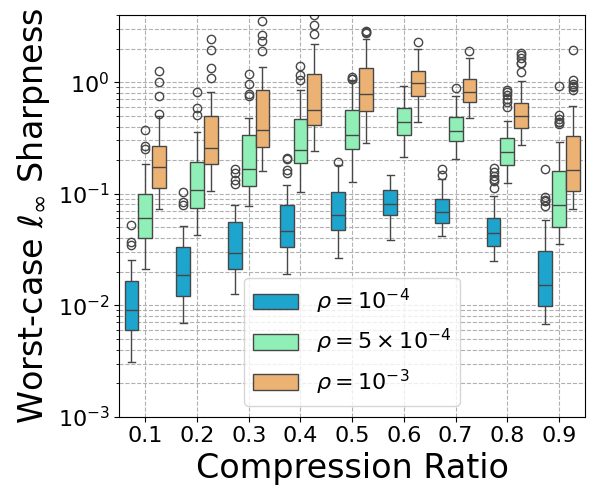}
    }
     \subfloat[][PreActResNet18, \fold]{
        \includegraphics[width=0.24\textwidth]{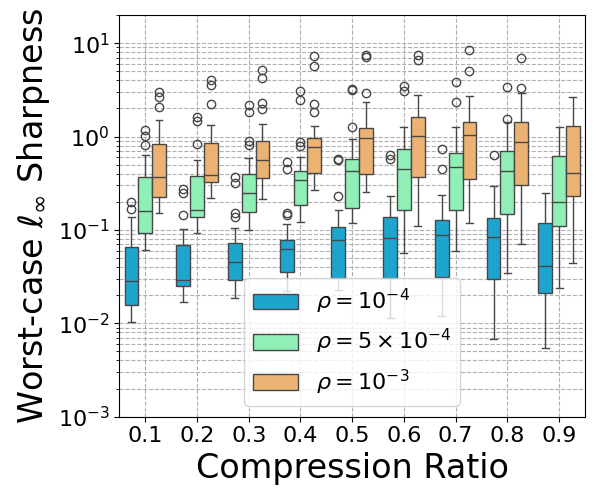}
    }
    \subfloat[][CLIP ViT-B/32, \fold]{
        \includegraphics[width=0.24\textwidth]{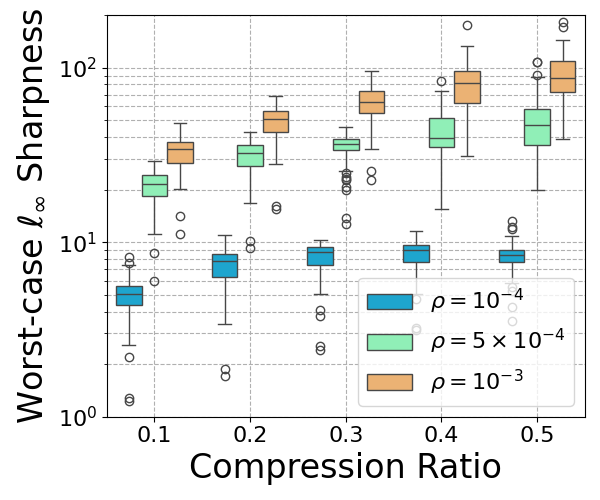}
    }

     \subfloat[][ResNet18, Adam, \magone]{
        \includegraphics[width=0.24\textwidth]{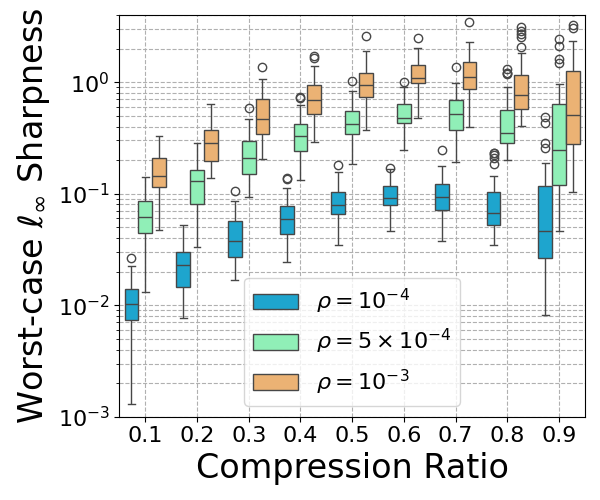}
    }
    \subfloat[][ResNet18, SGD, \magone]{
        \includegraphics[width=0.24\textwidth]{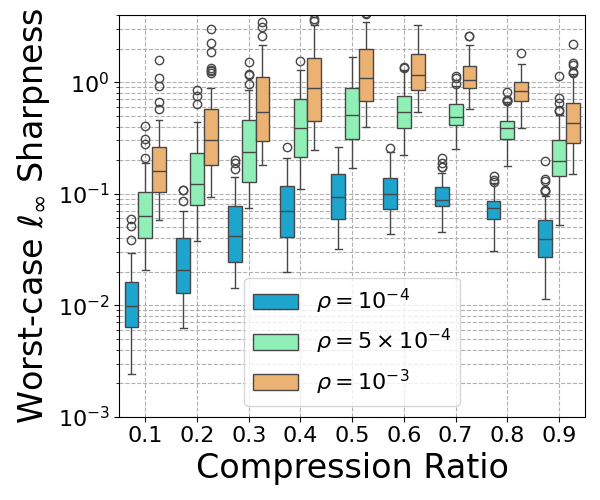}
    }
    \subfloat[][PreActResNet18, \magone]{
        \includegraphics[width=0.24\textwidth]{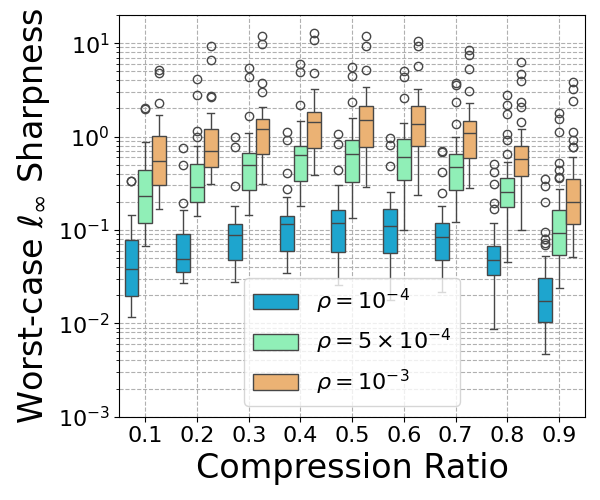}
    }
    \subfloat[][CLIP ViT-B/32, \magone]{
        \includegraphics[width=0.24\textwidth]{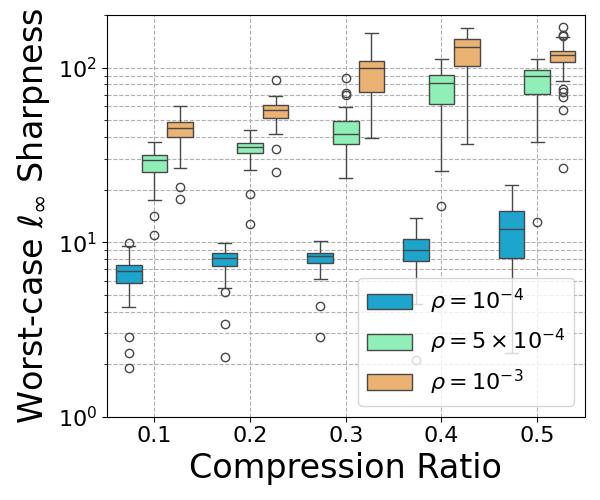}
    }
    \caption{\textbf{Worst-case $\ell_\infty$ sharpness as a function of compression ratio across architectures and pruning methods.}
    Each subplot reports the sharpness distribution for independently trained models at three perturbation radii 
    $(\rho = 10^{-4},\, 5\times10^{-4},\, 10^{-3})$. 
    Panels (a)--(d) show \fold sharpness for ResNet18 trained with Adam and SGD, PreActResNet18, and CLIP ViT-B/32, respectively. 
    Panels (e)--(h) show the corresponding results for \magone.
    Observed trends:
    (i) Sharpness generally increases with compression ratio up to moderate levels before flattening or dropping at extreme compression.
    (ii) \fold produces on-average lower sharpness than \magone.
    (iii) Transformer models (CLIP ViT-B/32) experience substantially sharper solutions under compression compared to residual networks.
    These patterns indicate that \fold maintains flatter loss landscapes across a wide range of settings, whereas \magprune more often drives models toward sharper and less stable minima.}
     \label{fig:sharpness_FOLD_vs_MAG}
\end{figure}

\begin{figure}[h]
\centering
\begin{minipage}[t]{0.24\linewidth}
    \centering
    \includegraphics[width=\linewidth]{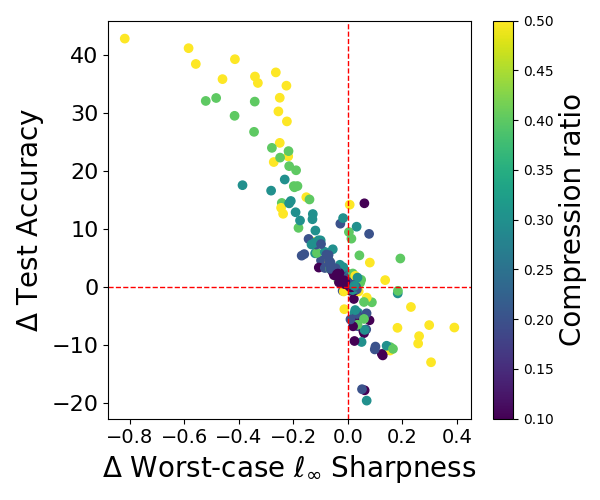}
\end{minipage}
\hfill
\begin{minipage}[t]{0.24\linewidth}
    \centering
    \includegraphics[width=\linewidth]{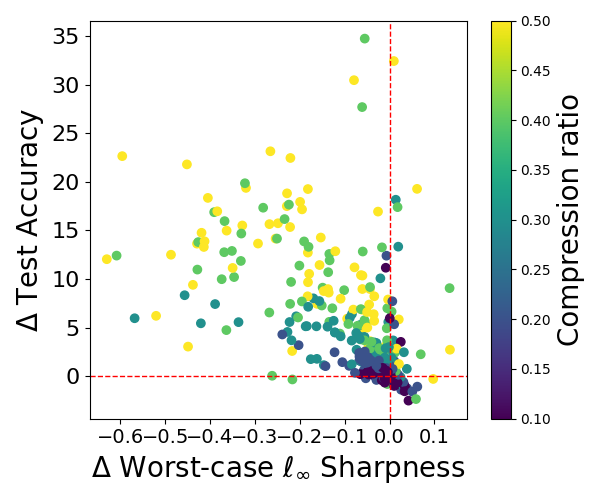}
\end{minipage}
\hfill
\begin{minipage}[t]{0.24\linewidth}
    \centering
    \includegraphics[width=\linewidth]{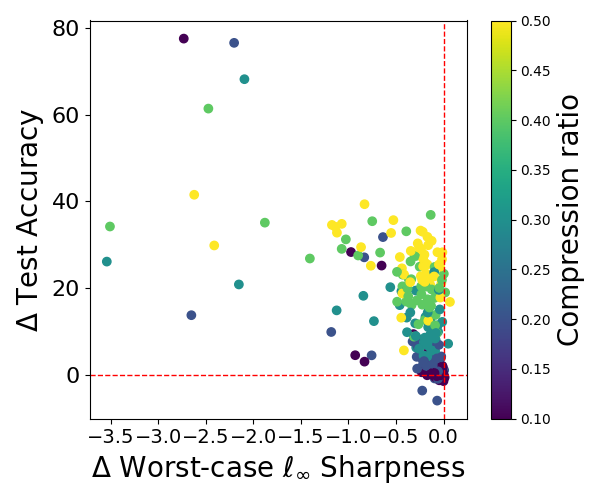}
\end{minipage}
\hfill
\begin{minipage}[t]{0.24\linewidth}
    \centering
    \includegraphics[width=\linewidth]{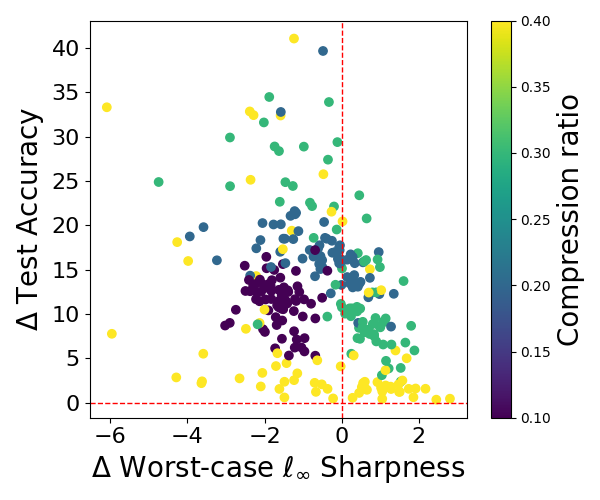}
\end{minipage}

\begin{minipage}[t]{0.24\linewidth}
\centering
\resizebox{1.00\textwidth}{!}{%
\begin{tabular}{c|cc}
\toprule
\rowcolor{lightblue}
\textbf{Ratio} & \textbf{Pearson} & \textbf{Spearman} \\
\midrule
10\% & -0.635 & -0.837 \\
20\% & -0.707 & -0.785 \\
30\% & -0.830 & -0.896 \\
40\% & -0.930 & -0.917 \\
50\% & -0.920 & -0.943 \\
60\% & -0.862 & -0.910 \\
70\% & -0.775 & -0.788 \\
80\% & -0.454 & -0.493 \\
90\% & 0.045  & 0.051  \\
\bottomrule
\end{tabular}
}
\caption*{\small ResNet18, Adam, \\\fold vs \magone}
\end{minipage}
\hfill
\begin{minipage}[t]{0.24\linewidth}
\centering
\resizebox{1.00\textwidth}{!}{%
\begin{tabular}{c|cc}
\toprule
\rowcolor{lightblue}
\textbf{Ratio} & \textbf{Pearson} & \textbf{Spearman} \\
\midrule
10\% & -0.219 & -0.437 \\
20\% & -0.488 & -0.546 \\
30\% & -0.505 & -0.654 \\
40\% & -0.628 & -0.710 \\
50\% & -0.535 & -0.665 \\
60\% & -0.369 & -0.403 \\
70\% & -0.399 & -0.419 \\
80\% & -0.277 & -0.163 \\
90\% & -0.168 & -0.120 \\
\bottomrule
\end{tabular}
}
\caption*{\small  ResNet18, SGD, \\\fold vs \magone}
\end{minipage}
\begin{minipage}[t]{0.24\linewidth}
\centering
\resizebox{1.00\textwidth}{!}{%
\begin{tabular}{c|cc}
\toprule
\rowcolor{lightblue}
\textbf{Ratio} & \textbf{Pearson} & \textbf{Spearman} \\
\midrule
10\% & -0.932 & -0.520 \\
20\% & -0.710 & -0.399 \\
30\% & -0.612 & -0.418 \\
40\% & -0.630 & -0.481 \\
50\% & -0.440 & -0.388 \\
60\% & -0.164 & -0.135 \\
70\% & 0.005  & 0.050  \\
80\% & 0.065  & 0.115  \\
90\% & 0.320  & 0.368  \\
\bottomrule
\end{tabular}
}
\caption*{\small PreActResNet18, \\\fold vs \magone}
\end{minipage}
\hfill
\begin{minipage}[t]{0.24\linewidth}
\centering
\resizebox{1.00\textwidth}{!}{%
\begin{tabular}{c|cc}
\toprule
\rowcolor{lightblue}
\textbf{Ratio} & \textbf{Pearson} & \textbf{Spearman} \\
\midrule
10\% & -0.359 & -0.400 \\
20\% & -0.270 & -0.222 \\
30\% & -0.520 & -0.462 \\
40\% & 0.148  & 0.058  \\
50\% & -0.139 & -0.383 \\
\bottomrule
\end{tabular}
}
\caption*{\small CLIP ViT-B/32, \\\fold vs \magone}
\end{minipage}
\caption{\textbf{Correlation between $\Delta$-sharpness and $\Delta$-accuracy across architectures and pruning baselines.}
Each column shows the relationship between pruning–induced differences in worst-case $\ell_\infty$ sharpness 
($\Delta$ sharpness = \fold $-$ \textsc{mag}) and differences in test accuracy ($\Delta$ accuracy = \fold $-$ \textsc{mag}), for the pair of pruning methods indicated below each plot. 
Color encodes the layer-wise compression ratio. 
The tables underneath each subplot report Pearson and Spearman correlations at every compression ratio, 
quantifying how predictive the sharpness difference is of the accuracy difference. Results are shown for \fold vs \magone for ResNet18 trained with Adam and SGD, PreActResNet18 and CLIP ViT-B/32. Statistics are computed over ~200 independently trained ResNets18 and 50 PreActResNet18 on CIFAR-10, and 72 CLIP ViT-B/32 models on ImageNet-1K. Correlations use the sharpness value at $\rho = 5\times10^{-4}$. Results at $\rho = 10^{-4}$ and $\rho = 10^{-3}$ are qualitatively very close.}
\label{fig:sharpness:corr}
\end{figure}

\begin{figure}[t]
     \centering
     \includegraphics[width=\textwidth]{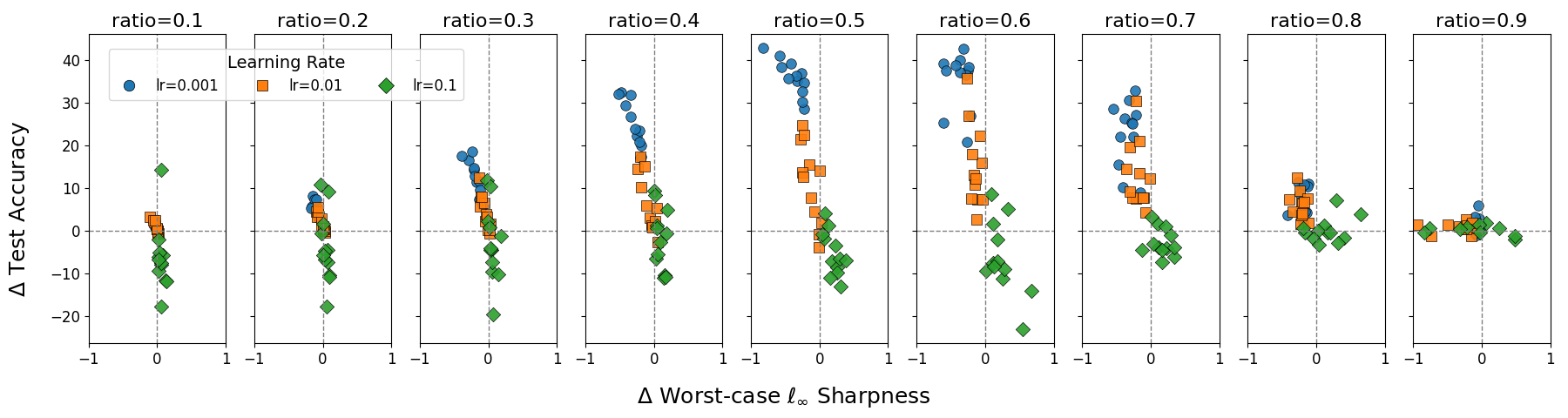}
        
    \includegraphics[width=\textwidth]{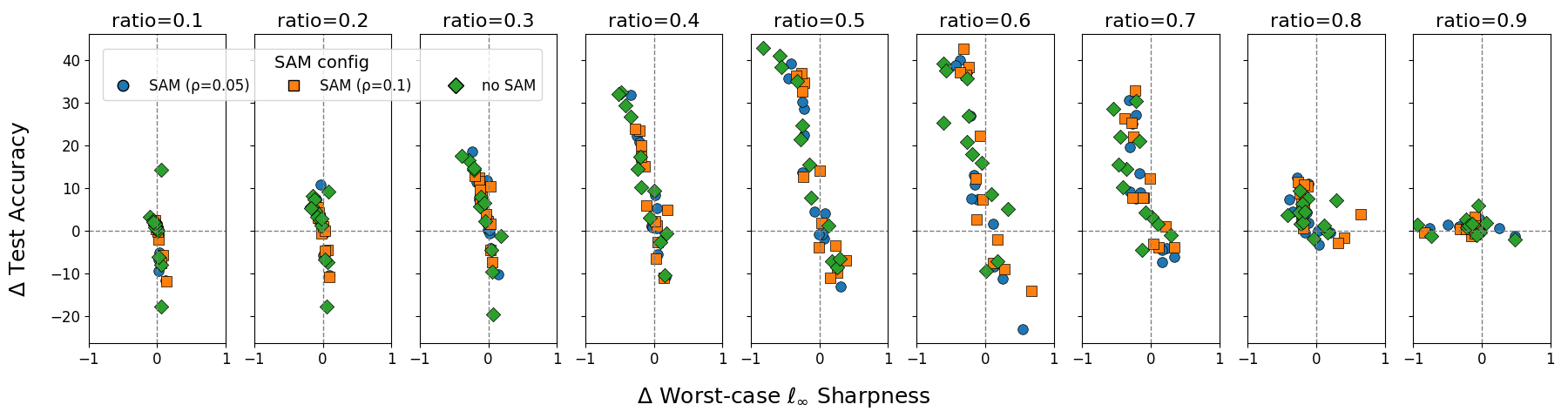}
        
    \includegraphics[width=\textwidth]{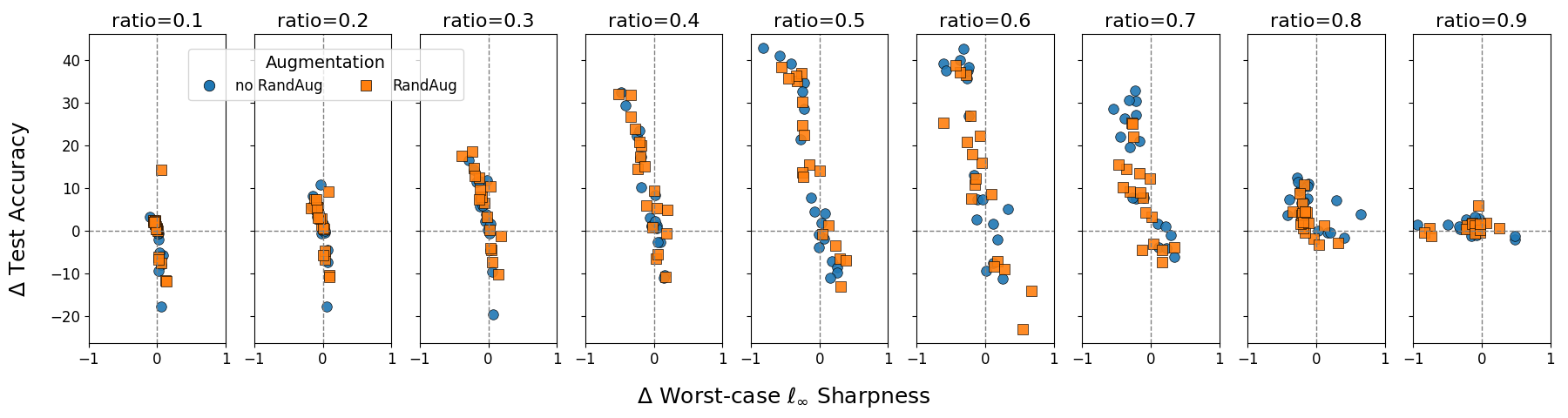}
    \caption{\textbf{Sharpness-accuracy trade-off between \fold and \magone for ResNet18 trained with Adam.} Each column corresponds to a layer-wise compression ratio (0.1--0.9), and the three rows group models by learning rate, SAM configuration (including~$\rho$), and RandAug usage. Points show $\Delta$ worst-case $\ell_\infty$ sharpness (\fold $-$ \magone) vs.\ $\Delta$ test accuracy (\fold $-$ \magone). 
    \textbf{Observations:} 
    (1) The difference in model sharpness strongly predicts the difference in performance between \fold and \magone across almost all pruning ratios. 
    (2) \emph{Learning rate:} higher learning rates lead to more dispersed sharpness changes. For models trained with Adam using high learning rates, \magone moves the model along a less sharp path compared to \fold, which struggles to catch up. However, the behavior flips for moderate and low learning rates.
    (3) \emph{SAM/}$\rho$: using SAM reduces the variability in the sharpness shift between methods, especially for larger $\rho$. $\Delta$ sharpness gets closer to zero.
    (4) \emph{RandAug:} augmentation show little specific visible trend.}
     \label{fig:adam_sharpness_delta_vs_delta}
\end{figure}

\begin{figure}[t]
     \centering
     \includegraphics[width=\textwidth]{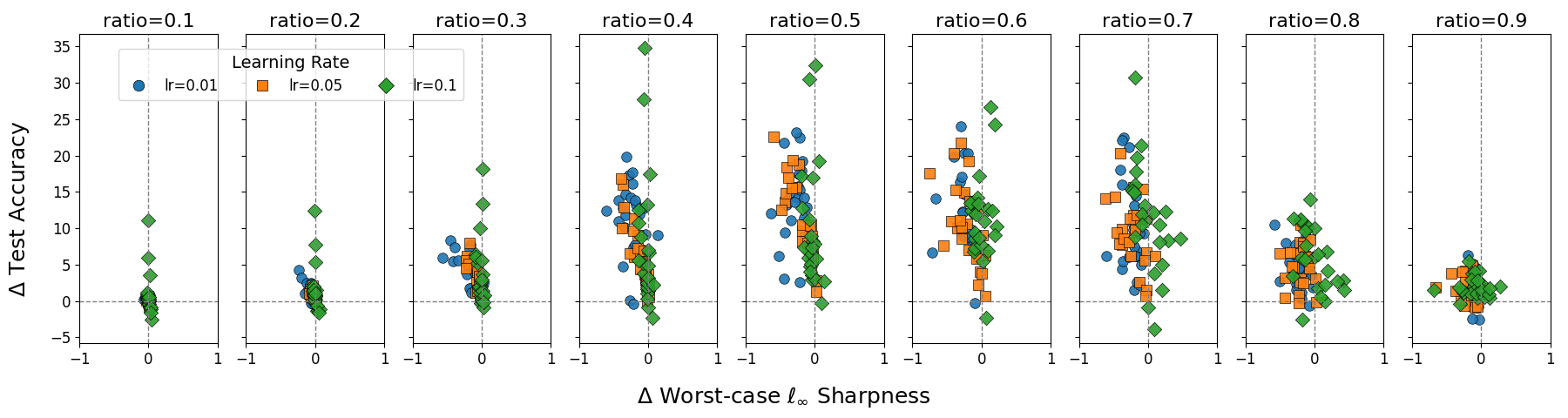}
        
    \includegraphics[width=\textwidth]{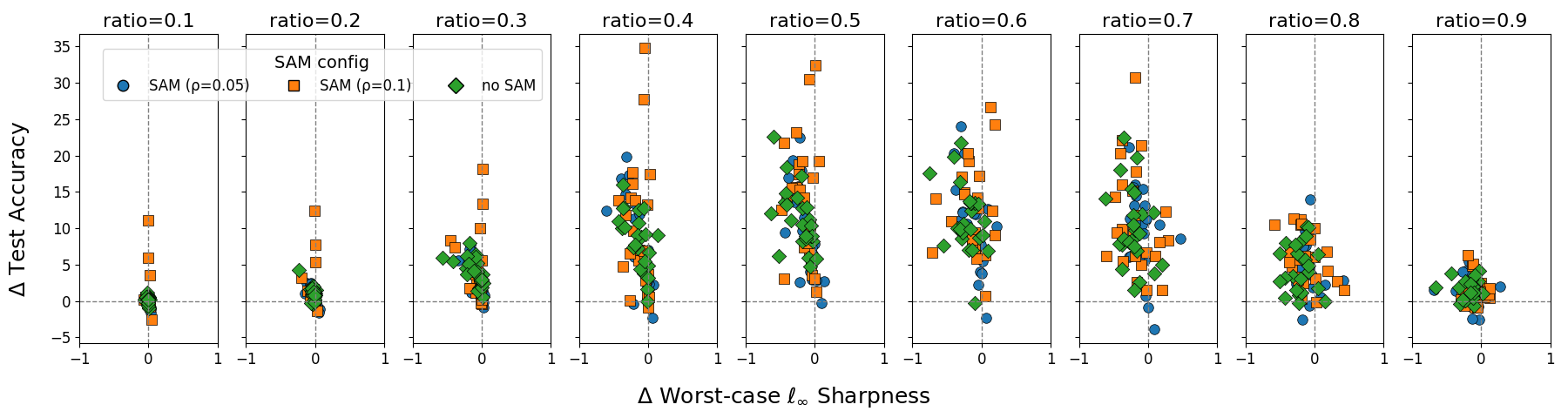}
        
    \includegraphics[width=\textwidth]{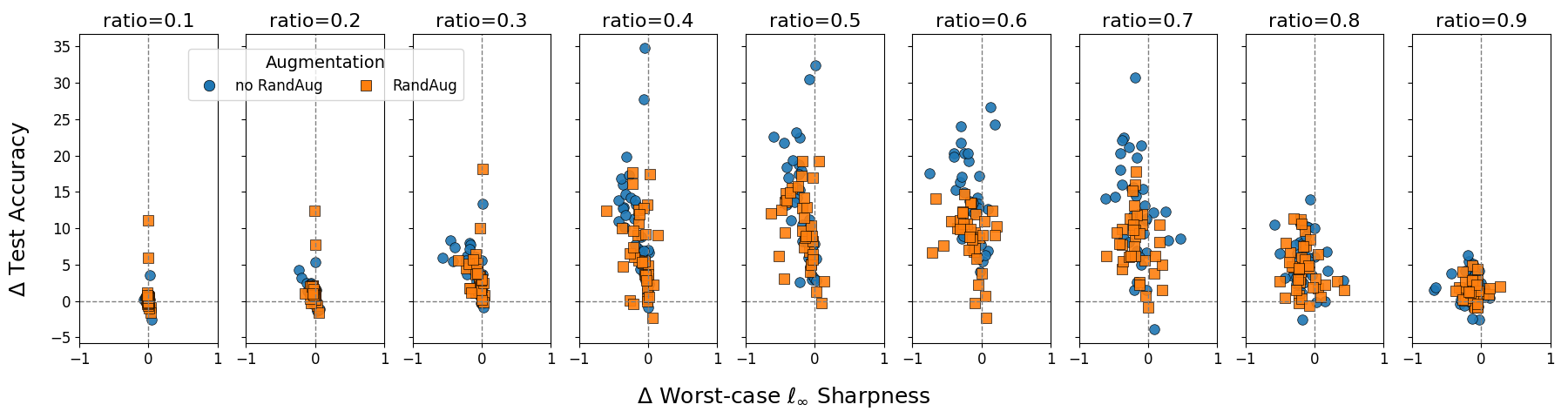}
    \caption{\textbf{Sharpness-accuracy trade-off between \fold and \magone for ResNet18 trained with SGD.}
    Each column corresponds to a layer-wise compression ratio (0.1--0.9), and the three rows group models by learning rate, SAM configuration (including~$\rho$), and RandAug usage. 
    Points show $\Delta$ worst-case $\ell_\infty$ sharpness (\fold $-$ \magone) vs.\ $\Delta$ test accuracy (\fold $-$ \magone).
    \textbf{Observations:}
    (1) For SGD, the relationship between $\Delta$ sharpness and $\Delta$ accuracy is visibly weaker and more scattered than for Adam, reflecting the flatter and more anisotropic minima found by SGD.  
    (2) \emph{Learning rate:} Unlike Adam, higher SGD learning rates do not systematically increase the sharpness gap between the methods. For most ratios, \fold tends to remain less sharp than \magone, producing positive $\Delta$ accuracy even when $\Delta$ sharpness is near zero.  
    (3) \emph{SAM/}$\rho$: SAM has a strong flattening effect under SGD—$\Delta$ sharpness clusters tightly around zero, and the accuracy advantage of \fold becomes more stable as $\rho$ increases.  
    (4) \emph{RandAug:} Augmentation increases robustness to pruning under SGD, reducing the spread in $\Delta$ accuracy and further weakening the sharpness--accuracy link. Overall, SGD-trained models exhibit a regime where \fold consistently follows a gentler sharpness trajectory than \magone, leading to a clearer accuracy advantage at moderate pruning ratios.}
     \label{fig:sgd_sharpness_delta_vs_delta}
\end{figure}

\subsection{Runtime overhead and equivalence of compressed models}
\label{subsec:runtime_overhead}

We profile both the compression procedures and the inference behavior of the resulting compressed models on a dedicated DGX A100 server equipped with dual-socket AMD EPYC 7742 CPUs (256 hardware threads) and 8× NVIDIA A100 80GB GPUs. All measurements use the \textsc{THOP} profiler\footnote{\url{https://github.com/ultralytics/thop}} and report compression time, peak memory during compression, per-batch latency, FLOPs, and peak forward-pass memory before and after compression.
For each architecture (PreActResNet18 and CLIP ViT-B/32), all compression methods generate the \emph{same} compressed network topology (identical channel counts and tensor shapes). Consequently, all methods yield identical FLOPs and nearly identical latencies, demonstrating that inference-time behavior is determined entirely by the resulting architecture, not by the choice of compression algorithm. \fold introduces a moderate one-off compression overhead, but its inference-time profile matches the other compressed models.

\begin{table}[h]
  \centering
  \resizebox{1.0\textwidth}{!}{%
  \begin{tabular}{lrrrrrrrr}
    \toprule
    \rowcolor{lightblue}
    Method & Params & Comp.~time [s] & Comp.~peak mem [MB] & Lat. [ms/batch] & Lat. [ms/img] & FLOPs [MFLOPs/img] & Fwd peak mem [MB] \\
    \midrule
    Original & 11{,}172{,}170 & --    & --      & 3.69 & 0.0288 & 557.65 & 214.30 \\
    \midrule
    \textsc{Fold}   & 4{,}008{,}346 & 9.48 & 157.47 & 3.17 & 0.0248 & 199.05 & 170.53 \\
    \textsc{Mag}    & 4{,}008{,}346 & 1.77 & 115.22 & 3.15 & 0.0246 & 199.05 & 169.82 \\
    \bottomrule
  \end{tabular}
  }
  \caption{Runtime characteristics of PreActResNet18 before and after compression ($64.1\%$ parameter reduction). Latency is measured for a full batch.
  FLOPs are reported per image. Comp.~time and comp.~peak mem refer to the
  overhead of running the compression method once.}
  \label{tab:runtime_preact}
\end{table}

Table~\ref{tab:runtime_preact} compares the original and compressed
PreActResNet18 at a layer-wise compression ratio of $0.4$ (\ie a
$64.1\%$ reduction in model parameters). Compression reduces FLOPs from
$557.65$~MFLOPs/image to $199.05$~MFLOPs/image (a $64.3\%$ reduction),
improves latency from $3.69$~ms/batch to roughly $3.15$~ms/batch, and lower peak forward-pass memory (from $214.30$~MB to about $170$~MB).

\begin{table}[h]
  \centering
  \resizebox{1.0\textwidth}{!}{%
  \begin{tabular}{lrrrrrrrr}
    \toprule
    \rowcolor{lightblue}
    Method & Params & Comp.~time [s] & Comp.~peak mem [MB] &
    Lat. [ms/batch] & Lat. [ms/img] & FLOPs [MFLOPs/img] & Fwd peak mem [MB] \\
    \midrule
    Original & 151{,}790{,}313 & --    & --      & 19.851 & 0.6203 & 2946.76 & 684.18 \\
    \midrule
    \textsc{Fold} & 140{,}447{,}253 & 92.833 & 681.23 & 17.343 & 0.5420 & 2379.98 & 636.97 \\
    \textsc{Mag}  & 140{,}447{,}253 & 2.627  & 625.61 & 17.372 & 0.5429 & 2379.98 & 637.88 \\
    \bottomrule
  \end{tabular}
  }
  \caption{Runtime characteristics of CLIP ViT-B/32 before and after compression ($7.47\%$ parameter reduction). Latency is measured for a full batch. FLOPs are reported per image. Comp.~time and Comp.~peak mem refer to the one-off overhead of running the compression method.}
  \label{tab:runtime_clip}
\end{table}

Table~\ref{tab:runtime_clip} shows the same evaluation for
CLIP ViT-B/32, where FFN blocks are compressed to with 20\% layer-wise compression ratio. Here, FLOPs decrease from $2946.76$~MFLOPs/image to
$2379.98$~MFLOPs/image (a $19.2\%$ reduction), and latency improves
from $19.8$~ms/batch to roughly $17.35$~ms/batch
(about $1.14\times$ speed-up). Again, \fold is the slowest method due to its iterative nature to compute $k$-means clusters, but the compressed models share the same FLOPs, memory and latency.

\begin{table}[h]
  \centering
  \resizebox{1.0\textwidth}{!}{%
  \begin{tabular}{lrrrrrrrrrrrrr}
    \toprule
    \rowcolor{lightblue}
    Layer &
    \multicolumn{1}{c}{Params\_fold} &
    \multicolumn{1}{c}{Params\_mag} &
    \multicolumn{1}{c}{$\Delta p$} &
    \multicolumn{1}{c}{FLOPs\_fold} &
    \multicolumn{1}{c}{FLOPs\_mag} &
    \multicolumn{1}{c}{$\Delta F$} &
    \multicolumn{1}{c}{Act\_fold} &
    \multicolumn{1}{c}{Act\_mag} &
    \multicolumn{1}{c}{$\Delta a$} &
    \multicolumn{1}{c}{NZ\_fold} &
    \multicolumn{1}{c}{NZ\_mag} &
    \multicolumn{1}{c}{$\Delta nz$} \\
    \midrule
    conv1                      &    1026 &    1026 &    0 &   1050624 &   1050624 &      0 &   38912 &   38912 &    0 &   38912 &   38912 &    0 \\
    layer1.0.conv1             &   12996 &   12996 &    0 &  13307904 &  13307904 &      0 &   38912 &   38912 &    0 &   38912 &   38912 &    0 \\
    layer1.0.conv2             &   12996 &   12996 &    0 &  13307904 &  13307904 &      0 &   38912 &   38912 &    0 &   38912 &   38912 &    0 \\
    layer1.1.conv1             &   12996 &   12996 &    0 &  13307904 &  13307904 &      0 &   38912 &   38912 &    0 &   38912 &   38912 &    0 \\
    layer1.1.conv2             &   12996 &   12996 &    0 &  13307904 &  13307904 &      0 &   38912 &   38912 &    0 &   38912 &   38912 &    0 \\
    layer2.0.conv1             &   25992 &   25992 &    0 &   6653952 &   6653952 &      0 &   19456 &   19456 &    0 &   19456 &   19456 &    0 \\
    layer2.0.conv2             &   51984 &   51984 &    0 &  13307904 &  13307904 &      0 &   19456 &   19456 &    0 &   19456 &   19456 &    0 \\
    layer2.0.shortcut.0        &    2888 &    2888 &    0 &    739328 &    739328 &      0 &   19456 &   19456 &    0 &   19456 &   19456 &    0 \\
    layer2.1.conv1             &   51984 &   51984 &    0 &  13307904 &  13307904 &      0 &   19456 &   19456 &    0 &   19456 &   19456 &    0 \\
    layer2.1.conv2             &   51984 &   51984 &    0 &  13307904 &  13307904 &      0 &   19456 &   19456 &    0 &   19456 &   19456 &    0 \\
    layer3.0.conv1             &  104652 &  104652 &    0 &   6697728 &   6697728 &      0 &    9792 &    9792 &    0 &    9792 &    9792 &    0 \\
    layer3.0.conv2             &  210681 &  210681 &    0 &  13483584 &  13483584 &      0 &    9792 &    9792 &    0 &    9792 &    9792 &    0 \\
    layer3.0.shortcut.0        &   11628 &   11628 &    0 &    744192 &    744192 &      0 &    9792 &    9792 &    0 &    9792 &    9792 &    0 \\
    layer3.1.conv1             &  210681 &  210681 &    0 &  13483584 &  13483584 &      0 &    9792 &    9792 &    0 &    9792 &    9792 &    0 \\
    layer3.1.conv2             &  210681 &  210681 &    0 &  13483584 &  13483584 &      0 &    9792 &    9792 &    0 &    9792 &    9792 &    0 \\
    layer4.0.conv1             &  422739 &  422739 &    0 &   6763824 &   6763824 &      0 &    4912 &    4912 &    0 &    4912 &    4912 &    0 \\
    layer4.0.conv2             &  848241 &  848241 &    0 &  13571856 &  13571856 &      0 &    4912 &    4912 &    0 &    4912 &    4912 &    0 \\
    layer4.0.shortcut.0        &   46971 &   46971 &    0 &    751536 &    751536 &      0 &    4912 &    4912 &    0 &    4912 &    4912 &    0 \\
    layer4.1.conv1             &  848241 &  848241 &    0 &  13571856 &  13571856 &      0 &    4912 &    4912 &    0 &    4912 &    4912 &    0 \\
    layer4.1.conv2             &  848241 &  848241 &    0 &  13571856 &  13571856 &      0 &    4912 &    4912 &    0 &    4912 &    4912 &    0 \\
    linear                     &    3080 &    3080 &    0 &      3070 &      3070 &      0 &      10 &      10 &    0 &      10 &      10 &    0 \\
    \midrule
    TOTALS                     & 4003678 & 4003678 &    0 & 197725902 & 197725902 &      0 &  365370 &  365370 &    0 &  365370 &  365370 &    0 \\
    \bottomrule
  \end{tabular}
  }
  \caption{Per-layer comparison of PreActResNet18 after \fold and \magtwo at compression ratio $0.4$. For each convolutional and linear layer we report parameters, per-image FLOPs, activation size, and the number of non-zero activations (effective activations). All per-layer differences are zero, confirming that parameters, FLOPs, activations, and effective activations are exactly matched between the two compressed models.}
  \label{tab:per_layer_preact}
\end{table}

\begin{table}[h]
  \vskip -1cm
  \centering
  \resizebox{0.95\textwidth}{!}{%
  \begin{tabular}{lrrrrrrrrrrrrr}
    \toprule
    \rowcolor{lightblue}
    Layer &
    \multicolumn{1}{c}{Params\_fold} &
    \multicolumn{1}{c}{Params\_mag} &
    \multicolumn{1}{c}{$\Delta p$} &
    \multicolumn{1}{c}{FLOPs\_fold} &
    \multicolumn{1}{c}{FLOPs\_mag} &
    \multicolumn{1}{c}{$\Delta F$} &
    \multicolumn{1}{c}{Act\_fold} &
    \multicolumn{1}{c}{Act\_mag} &
    \multicolumn{1}{c}{$\Delta a$} &
    \multicolumn{1}{c}{NZ\_fold} &
    \multicolumn{1}{c}{NZ\_mag} &
    \multicolumn{1}{c}{$\Delta nz$} \\
    \midrule
    classification\_head                                   &   513000 &   513000 &    0 &          0 &          0 &     0 &        0 &        0 &    0 &        0 &        0 &    0 \\
    transformer.resblocks.0.attn.out\_proj                 &   262656 &   262656 &    0 &          0 &          0 &     0 &        0 &        0 &    0 &        0 &        0 &    0 \\
    transformer.resblocks.0.mlp.c\_fc                      &  1050624 &  1050624 &    0 &          0 &          0 &     0 &        0 &        0 &    0 &        0 &        0 &    0 \\
    transformer.resblocks.0.mlp.c\_proj                    &  1049088 &  1049088 &    0 &          0 &          0 &     0 &        0 &        0 &    0 &        0 &        0 &    0 \\
    transformer.resblocks.1.attn.out\_proj                 &   262656 &   262656 &    0 &          0 &          0 &     0 &        0 &        0 &    0 &        0 &        0 &    0 \\
    transformer.resblocks.1.mlp.c\_fc                      &  1050624 &  1050624 &    0 &          0 &          0 &     0 &        0 &        0 &    0 &        0 &        0 &    0 \\
    transformer.resblocks.1.mlp.c\_proj                    &  1049088 &  1049088 &    0 &          0 &          0 &     0 &        0 &        0 &    0 &        0 &        0 &    0 \\
    transformer.resblocks.10.attn.out\_proj                &   262656 &   262656 &    0 &          0 &          0 &     0 &        0 &        0 &    0 &        0 &        0 &    0 \\
    transformer.resblocks.10.mlp.c\_fc                     &  1050624 &  1050624 &    0 &          0 &          0 &     0 &        0 &        0 &    0 &        0 &        0 &    0 \\
    transformer.resblocks.10.mlp.c\_proj                   &  1049088 &  1049088 &    0 &          0 &          0 &     0 &        0 &        0 &    0 &        0 &        0 &    0 \\
    transformer.resblocks.11.attn.out\_proj                &   262656 &   262656 &    0 &          0 &          0 &     0 &        0 &        0 &    0 &        0 &        0 &    0 \\
    transformer.resblocks.11.mlp.c\_fc                     &  1050624 &  1050624 &    0 &          0 &          0 &     0 &        0 &        0 &    0 &        0 &        0 &    0 \\
    transformer.resblocks.11.mlp.c\_proj                   &  1049088 &  1049088 &    0 &          0 &          0 &     0 &        0 &        0 &    0 &        0 &        0 &    0 \\
    transformer.resblocks.2.attn.out\_proj                 &   262656 &   262656 &    0 &          0 &          0 &     0 &        0 &        0 &    0 &        0 &        0 &    0 \\
    transformer.resblocks.2.mlp.c\_fc                      &  1050624 &  1050624 &    0 &          0 &          0 &     0 &        0 &        0 &    0 &        0 &        0 &    0 \\
    transformer.resblocks.2.mlp.c\_proj                    &  1049088 &  1049088 &    0 &          0 &          0 &     0 &        0 &        0 &    0 &        0 &        0 &    0 \\
    transformer.resblocks.3.attn.out\_proj                 &   262656 &   262656 &    0 &          0 &          0 &     0 &        0 &        0 &    0 &        0 &        0 &    0 \\
    transformer.resblocks.3.mlp.c\_fc                      &  1050624 &  1050624 &    0 &          0 &          0 &     0 &        0 &        0 &    0 &        0 &        0 &    0 \\
    transformer.resblocks.3.mlp.c\_proj                    &  1049088 &  1049088 &    0 &          0 &          0 &     0 &        0 &        0 &    0 &        0 &        0 &    0 \\
    transformer.resblocks.4.attn.out\_proj                 &   262656 &   262656 &    0 &          0 &          0 &     0 &        0 &        0 &    0 &        0 &        0 &    0 \\
    transformer.resblocks.4.mlp.c\_fc                      &  1050624 &  1050624 &    0 &          0 &          0 &     0 &        0 &        0 &    0 &        0 &        0 &    0 \\
    transformer.resblocks.4.mlp.c\_proj                    &  1049088 &  1049088 &    0 &          0 &          0 &     0 &        0 &        0 &    0 &        0 &        0 &    0 \\
    transformer.resblocks.5.attn.out\_proj                 &   262656 &   262656 &    0 &          0 &          0 &     0 &        0 &        0 &    0 &        0 &        0 &    0 \\
    transformer.resblocks.5.mlp.c\_fc                      &  1050624 &  1050624 &    0 &          0 &          0 &     0 &        0 &        0 &    0 &        0 &        0 &    0 \\
    transformer.resblocks.5.mlp.c\_proj                    &  1049088 &  1049088 &    0 &          0 &          0 &     0 &        0 &        0 &    0 &        0 &        0 &    0 \\
    transformer.resblocks.6.attn.out\_proj                 &   262656 &   262656 &    0 &          0 &          0 &     0 &        0 &        0 &    0 &        0 &        0 &    0 \\
    transformer.resblocks.6.mlp.c\_fc                      &  1050624 &  1050624 &    0 &          0 &          0 &     0 &        0 &        0 &    0 &        0 &        0 &    0 \\
    transformer.resblocks.6.mlp.c\_proj                    &  1049088 &  1049088 &    0 &          0 &          0 &     0 &        0 &        0 &    0 &        0 &        0 &    0 \\
    transformer.resblocks.7.attn.out\_proj                 &   262656 &   262656 &    0 &          0 &          0 &     0 &        0 &        0 &    0 &        0 &        0 &    0 \\
    transformer.resblocks.7.mlp.c\_fc                      &  1050624 &  1050624 &    0 &          0 &          0 &     0 &        0 &        0 &    0 &        0 &        0 &    0 \\
    transformer.resblocks.7.mlp.c\_proj                    &  1049088 &  1049088 &    0 &          0 &          0 &     0 &        0 &        0 &    0 &        0 &        0 &    0 \\
    transformer.resblocks.8.attn.out\_proj                 &   262656 &   262656 &    0 &          0 &          0 &     0 &        0 &        0 &    0 &        0 &        0 &    0 \\
    transformer.resblocks.8.mlp.c\_fc                      &  1050624 &  1050624 &    0 &          0 &          0 &     0 &        0 &        0 &    0 &        0 &        0 &    0 \\
    transformer.resblocks.8.mlp.c\_proj                    &  1049088 &  1049088 &    0 &          0 &          0 &     0 &        0 &        0 &    0 &        0 &        0 &    0 \\
    transformer.resblocks.9.attn.out\_proj                 &   262656 &   262656 &    0 &          0 &          0 &     0 &        0 &        0 &    0 &        0 &        0 &    0 \\
    transformer.resblocks.9.mlp.c\_fc                      &  1050624 &  1050624 &    0 &          0 &          0 &     0 &        0 &        0 &    0 &        0 &        0 &    0 \\
    transformer.resblocks.9.mlp.c\_proj                    &  1049088 &  1049088 &    0 &          0 &          0 &     0 &        0 &        0 &    0 &        0 &        0 &    0 \\
    visual.conv1                                            & 2359296 & 2359296 &    0 & 115605504 & 115605504 &     0 &    37632 &    37632 &    0 &    37632 &    37632 &    0 \\
    visual.transformer.resblocks.0.attn.out\_proj          &   590592 &   590592 &    0 &         0 &         0 &     0 &        0 &        0 &    0 &        0 &        0 &    0 \\
    visual.transformer.resblocks.0.mlp.c\_fc               &  1889433 &  1889433 &    0 &  94348800 &  94348800 &     0 &   122850 &   122850 &    0 &   122850 &   122850 &    0 \\
    visual.transformer.resblocks.0.mlp.c\_proj             &  1887744 &  1887744 &    0 &  94348800 &  94348800 &     0 &    38400 &    38400 &    0 &    38400 &    38400 &    0 \\
    visual.transformer.resblocks.1.attn.out\_proj          &   590592 &   590592 &    0 &         0 &         0 &     0 &        0 &        0 &    0 &        0 &        0 &    0 \\
    visual.transformer.resblocks.1.mlp.c\_fc               &  1889433 &  1889433 &    0 &  94348800 &  94348800 &     0 &   122850 &   122850 &    0 &   122850 &   122850 &    0 \\
    visual.transformer.resblocks.1.mlp.c\_proj             &  1887744 &  1887744 &    0 &  94348800 &  94348800 &     0 &    38400 &    38400 &    0 &    38400 &    38400 &    0 \\
    visual.transformer.resblocks.10.attn.out\_proj         &   590592 &   590592 &    0 &         0 &         0 &     0 &        0 &        0 &    0 &        0 &        0 &    0 \\
    visual.transformer.resblocks.10.mlp.c\_fc              &  1889433 &  1889433 &    0 &  94348800 &  94348800 &     0 &   122850 &   122850 &    0 &   122850 &   122850 &    0 \\
    visual.transformer.resblocks.10.mlp.c\_proj            &  1887744 &  1887744 &    0 &  94348800 &  94348800 &     0 &    38400 &    38400 &    0 &    38400 &    38400 &    0 \\
    visual.transformer.resblocks.11.attn.out\_proj         &   590592 &   590592 &    0 &         0 &         0 &     0 &        0 &        0 &    0 &        0 &        0 &    0 \\
    visual.transformer.resblocks.11.mlp.c\_fc              &  1889433 &  1889433 &    0 &  94348800 &  94348800 &     0 &   122850 &   122850 &    0 &   122850 &   122850 &    0 \\
    visual.transformer.resblocks.11.mlp.c\_proj            &  1887744 &  1887744 &    0 &  94348800 &  94348800 &     0 &    38400 &    38400 &    0 &    38400 &    38400 &    0 \\
    visual.transformer.resblocks.2.attn.out\_proj          &   590592 &   590592 &    0 &         0 &         0 &     0 &        0 &        0 &    0 &        0 &        0 &    0 \\
    visual.transformer.resblocks.2.mlp.c\_fc               &  1889433 &  1889433 &    0 &  94348800 &  94348800 &     0 &   122850 &   122850 &    0 &   122850 &   122850 &    0 \\
    visual.transformer.resblocks.2.mlp.c\_proj             &  1887744 &  1887744 &    0 &  94348800 &  94348800 &     0 &    38400 &    38400 &    0 &    38400 &    38400 &    0 \\
    visual.transformer.resblocks.3.attn.out\_proj          &   590592 &   590592 &    0 &         0 &         0 &     0 &        0 &        0 &    0 &        0 &        0 &    0 \\
    visual.transformer.resblocks.3.mlp.c\_fc               &  1889433 &  1889433 &    0 &  94348800 &  94348800 &     0 &   122850 &   122850 &    0 &   122850 &   122850 &    0 \\
    visual.transformer.resblocks.3.mlp.c\_proj             &  1887744 &  1887744 &    0 &  94348800 &  94348800 &     0 &    38400 &    38400 &    0 &    38400 &    38400 &    0 \\
    visual.transformer.resblocks.4.attn.out\_proj          &   590592 &   590592 &    0 &         0 &         0 &     0 &        0 &        0 &    0 &        0 &        0 &    0 \\
    visual.transformer.resblocks.4.mlp.c\_fc               &  1889433 &  1889433 &    0 &  94348800 &  94348800 &     0 &   122850 &   122850 &    0 &   122850 &   122850 &    0 \\
    visual.transformer.resblocks.4.mlp.c\_proj             &  1887744 &  1887744 &    0 &  94348800 &  94348800 &     0 &    38400 &    38400 &    0 &    38400 &    38400 &    0 \\
    visual.transformer.resblocks.5.attn.out\_proj          &   590592 &   590592 &    0 &         0 &         0 &     0 &        0 &        0 &    0 &        0 &        0 &    0 \\
    visual.transformer.resblocks.5.mlp.c\_fc               &  1889433 &  1889433 &    0 &  94348800 &  94348800 &     0 &   122850 &   122850 &    0 &   122850 &   122850 &    0 \\
    visual.transformer.resblocks.5.mlp.c\_proj             &  1887744 &  1887744 &    0 &  94348800 &  94348800 &     0 &    38400 &    38400 &    0 &    38400 &    38400 &    0 \\
    visual.transformer.resblocks.6.attn.out\_proj          &   590592 &   590592 &    0 &         0 &         0 &     0 &        0 &        0 &    0 &        0 &        0 &    0 \\
    visual.transformer.resblocks.6.mlp.c\_fc               &  1889433 &  1889433 &    0 &  94348800 &  94348800 &     0 &   122850 &   122850 &    0 &   122850 &   122850 &    0 \\
    visual.transformer.resblocks.6.mlp.c\_proj             &  1887744 &  1887744 &    0 &  94348800 &  94348800 &     0 &    38400 &    38400 &    0 &    38400 &    38400 &    0 \\
    visual.transformer.resblocks.7.attn.out\_proj          &   590592 &   590592 &    0 &         0 &         0 &     0 &        0 &        0 &    0 &        0 &        0 &    0 \\
    visual.transformer.resblocks.7.mlp.c\_fc               &  1889433 &  1889433 &    0 &  94348800 &  94348800 &     0 &   122850 &   122850 &    0 &   122850 &   122850 &    0 \\
    visual.transformer.resblocks.7.mlp.c\_proj             &  1887744 &  1887744 &    0 &  94348800 &  94348800 &     0 &    38400 &    38400 &    0 &    38400 &    38400 &    0 \\
    visual.transformer.resblocks.8.attn.out\_proj          &   590592 &   590592 &    0 &         0 &         0 &     0 &        0 &        0 &    0 &        0 &        0 &    0 \\
    visual.transformer.resblocks.8.mlp.c\_fc               &  1889433 &  1889433 &    0 &  94348800 &  94348800 &     0 &   122850 &   122850 &    0 &   122850 &   122850 &    0 \\
    visual.transformer.resblocks.8.mlp.c\_proj             &  1887744 &  1887744 &    0 &  94348800 &  94348800 &     0 &    38400 &    38400 &    0 &    38400 &    38400 &    0 \\
    visual.transformer.resblocks.9.attn.out\_proj          &   590592 &   590592 &    0 &         0 &         0 &     0 &        0 &        0 &    0 &        0 &        0 &    0 \\
    visual.transformer.resblocks.9.mlp.c\_fc               &  1889433 &  1889433 &    0 &  94348800 &  94348800 &     0 &   122850 &   122850 &    0 &   122850 &   122850 &    0 \\
    visual.transformer.resblocks.9.mlp.c\_proj             &  1887744 &  1887744 &    0 &  94348800 &  94348800 &     0 &    38400 &    38400 &    0 &    38400 &    38400 &    0 \\
    \midrule
    TOTALS                                                 & 83633940 & 83633940 &    0 & 2379976704 & 2379976704 &     0 & 1972632 & 1972632 &    0 & 1972632 & 1972632 &    0 \\
    \bottomrule
  \end{tabular}
  }
  \caption{\small{Per-layer comparison of CLIP ViT-B/32 after \fold and \magtwo at compression ratio $0.2$ (global parameter reduction $7.47\%$). The table reports all convolutional and linear layers in the vision transformer and classification head, including their parameters, per-image FLOPs, activation sizes, and effective activations. 
  The \texttt{transformer.resblocks.*} modules belong to CLIP’s text encoder. Because the ImageNet-1k fine-tuned variant evaluates only the vision encoder and classification head, the text encoder is not part of the forward graph. \textsc{THOP} therefore records zero FLOPs and zero activations for these layers, while their parameters remain included in the model.
  Note that the per-layer totals (\(\approx 8.36\times10^7\) parameters) are smaller than the full model parameter count (\(\approx 1.40\times10^8\)) because this table excludes components without FLOPs, such as token embeddings, positional embeddings, and LayerNorm parameters, which are included in the global counts but not part of the per-layer FLOP/activation analysis. 
  Several projection layers inside the attention blocks show zero FLOPs because CLIP implements attention using fused operations; these operations are profiled at the block level by \textsc{THOP} rather than attributed to the individual Linear submodules.
  All per-layer differences are zero, showing that \fold and \magtwo produce structurally identical compressed models on every layer affected by compression.}}

  \label{tab:per_layer_clip}
\end{table}

\subsection{Impact of one rank slack and singleton folding}
\label{subsec:one_rank_slack}

In this section we separate two effects in our pruning vs. folding comparison: (i) the influence of the one-rank slack in 
Theorems~\ref{thm:folding-existence}–\ref{thm:folding-optimal}, and 
(ii) the intrinsic difference between pruning and folding as projection operators. We therefore contrast the gain from increasing the pruning rank from $k$ to $k{+}1$ (blue curves in \Figref{fig:adam_one_rank_slack:rank}) with the gain from replacing pruning by folding at the same nominal rank (orange curves).  
This isolates the contribution of rank from the contribution of the projection geometry.

\Figref{fig:adam_one_rank_slack:rank} shows both effects: 
for each weight matrix $\mathbf{W}$ (in every layer of ViT or ResNet18), it plots the relative Frobenius error change when the retained rank increases by one (blue) versus when the method changes from pruning to folding at fixed rank (orange), as a function of~$k$.

\[
\Delta_{\text{rank}}(k) \;=\; 
\frac{\|\mathbf{W} - \mathbf{W}_p^{(k)}\|_F \;-\; \|\mathbf{W} - \mathbf{W}_p^{(k+1)}\|_F}{\|\mathbf{W}\|_F}.
\]
This quantity measures the improvement obtained when increasing the retained rank from $k$ to $k+1$ within magnitude pruning \magtwo. It isolates the rank slack effect. Across all examined layers, the improvement from a single additional retained channel is small, especially in deeper layers.

\[
\Delta_{\text{method}}(k) \;=\;
\frac{\|\mathbf{W} - \mathbf{W}_p^{(k)}\|_F \;-\; \|\mathbf{W} - \mathbf{W}^{\star(k)}_f\|_F}{\|\mathbf{W}\|_F}.
\]
This measures the gain obtained by switching from structured magnitude pruning to optimal folding at the same rank $k$. The improvements are one to two orders of magnitude larger than the corresponding $\Delta_{\text{rank}}$ values for nearly all layers.

The results empirically support the clarification presented in the rebuttal: although Theorems~\ref{thm:folding-existence}-\ref{thm:folding-optimal} compare pruning at rank $k$ to folding at rank $k{+}1$, the contribution of the rank difference is negligible in practice. The blue curves show that $\|\mathbf{W} - \mathbf{W}_p^{(k)}\|_F$ changes only minimally when $k \!\to\! k{+}1$, while the orange curves demonstrate that folding provides a substantially tighter approximation than pruning at comparable compression levels. This confirms that the practical advantage of folding arises primarily from the richer family of cluster-based projections rather than the added rank.

\begin{figure}[h]
     \centering
     \includegraphics[width=.88\textwidth]{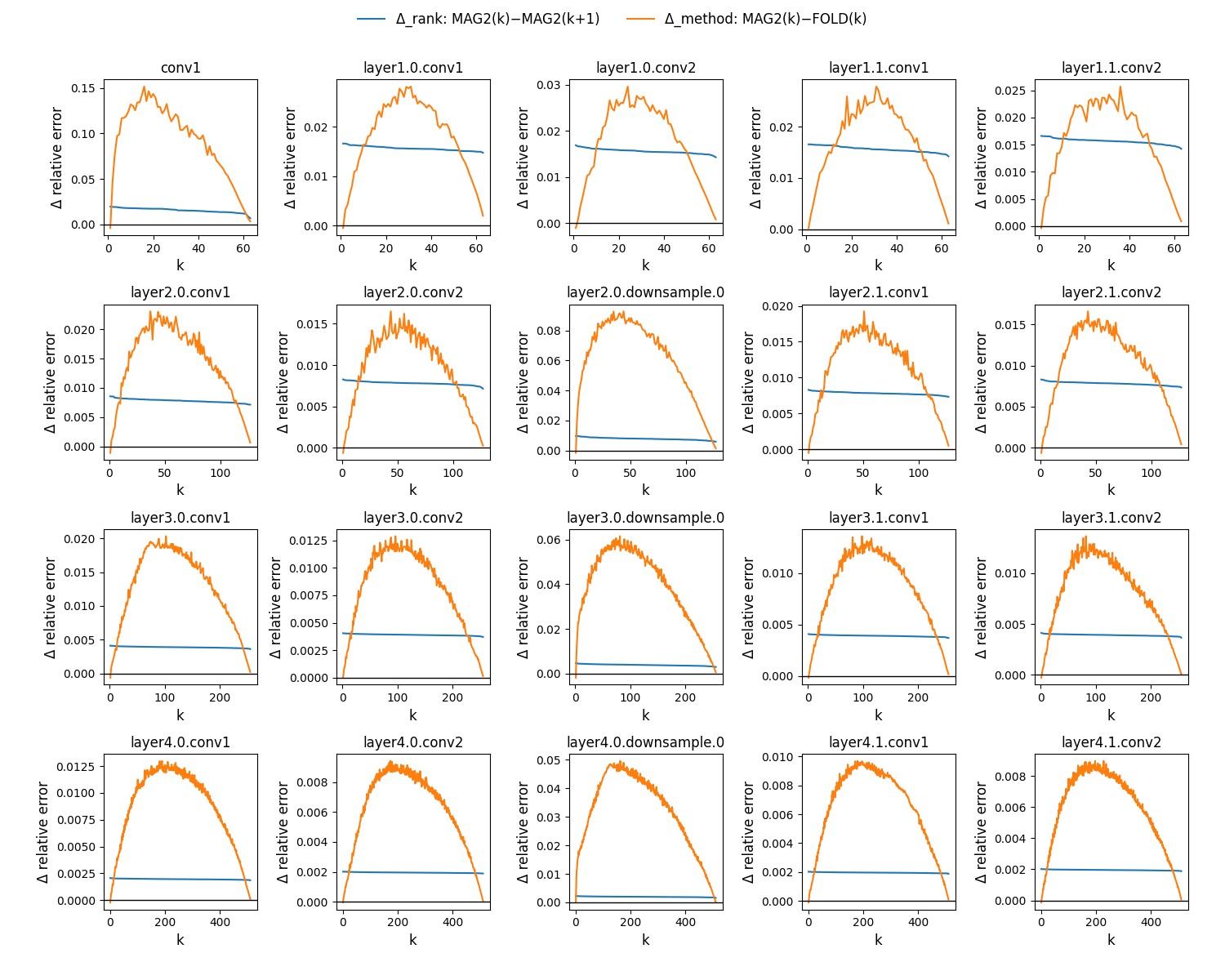}

     \includegraphics[width=.68\textwidth]{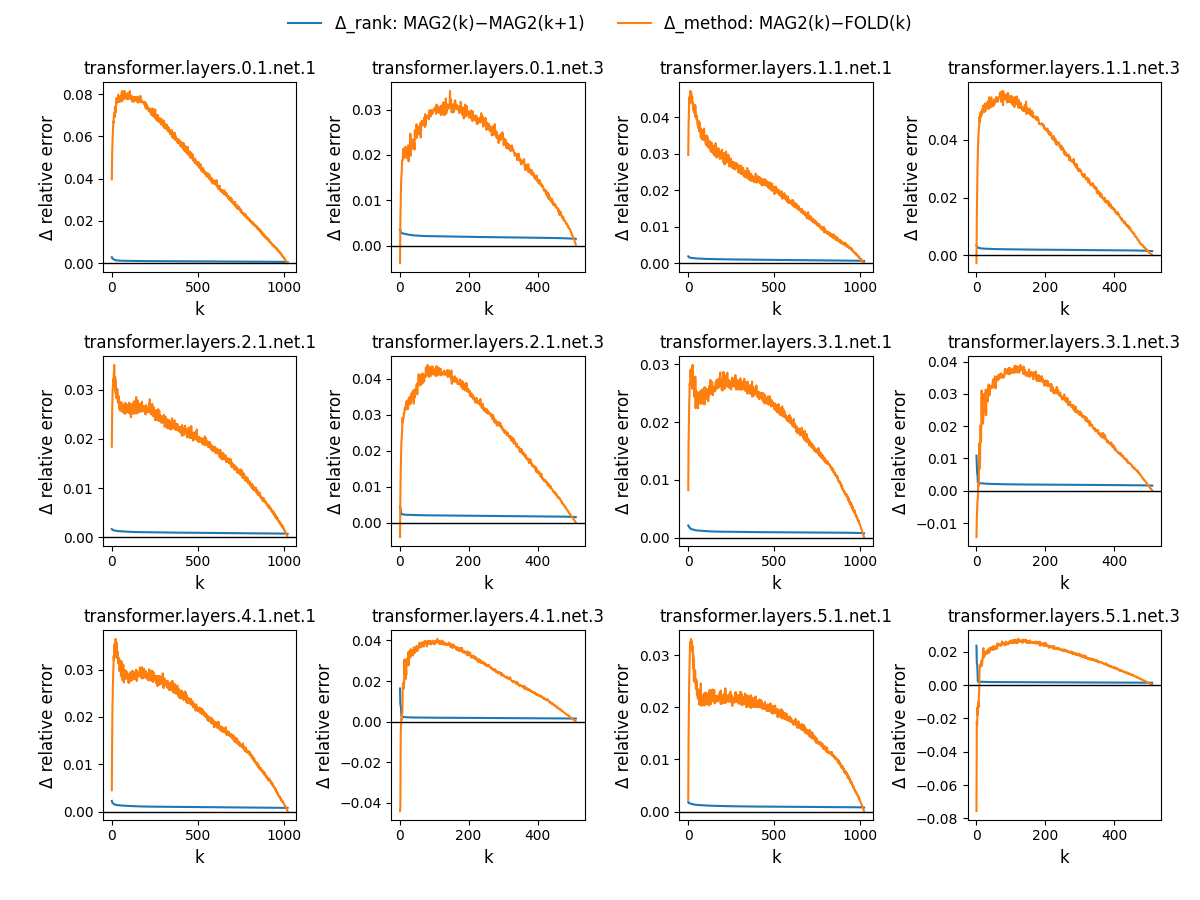}
    \caption{\small{\textbf{Effect of increasing the retained rank by one is significantly lower than changing the compression method from \magprune to \fold.} Comparison of (i) the effect of increasing the retained rank by one (blue curves) and (ii) the effect of switching from \magtwo to \fold at the same nominal rank (orange curves). Each panel corresponds to a single weight matrix $\mathbf{W}$ in ResNet18 convolutional layers (\textbf{top}) and ViT-B/32 FFN layers (\textbf{bottom}) and shows the relative Frobenius error difference $\Delta$ as a function of the retained rank $k$.}}
     \label{fig:adam_one_rank_slack:rank}
\end{figure}

\Figref{fig:adam_one_rank_slack:opt} reports the relative squared error for pruning, special folding $\mathbf{W}_f'$, and optimal folding $\mathbf{W}_f^*$ across all layers of ResNet18 and ViT-B/32 on CIFAR10, evaluated at multiple keep ratios. Three consistent phenomena appear:

\begin{itemize}
    \item \textbf{Pruning always yields the largest error.}
    For every layer and every keep ratio, the pruning curves lie above both folding curves. This confirms empirically that pruning introduces the largest distortion of the original weight matrix.

    \item \textbf{Special folding $\mathbf{W}_f'$ strictly improves over pruning.}
    The construction used in Theorem~\ref{thm:folding-existence}, obtained by merging all pruned rows, leads to smaller reconstruction error for all layers and all keep ratios. This empirically validates the first inequality $\mathrm{error}(\mathbf{W}_p) \ge \mathrm{error}(\mathbf{W}_f')$.

    \item \textbf{Optimal folding $\mathbf{W}_f^*$ achieves the smallest error.}
    The $k$-means solution consistently attains the lowest error, verifying the second inequality $\mathrm{error}(\mathbf{W}_f') \ge \mathrm{error}(\mathbf{W}_f^*)$.
\end{itemize}

Importantly, the gap between pruning and both folding methods is 
larger than the very small difference induced by adding 
a single additional cluster (\ie increasing the rank from $k$ to $k+1$). This supports the clarification made in the rebuttal: 
the practical advantage of folding does not stem from the $+1$ change in rank, but from the richer family of cluster-based projections that folding can realize.

The observed ordering, \ie $\mathrm{error}(\mathbf{W}_p) > \mathrm{error}(\mathbf{W}_f') > \mathrm{error}(\mathbf{W}_f^*)$, holds uniformly across all layers of ResNet18 as well as FFN layers of ViT-B/32. This indicates that the theoretical inequalities are not only valid in principle but also manifest strongly and consistently in real trained models.

\begin{figure}[h]
     \centering
     \includegraphics[width=.85\textwidth]{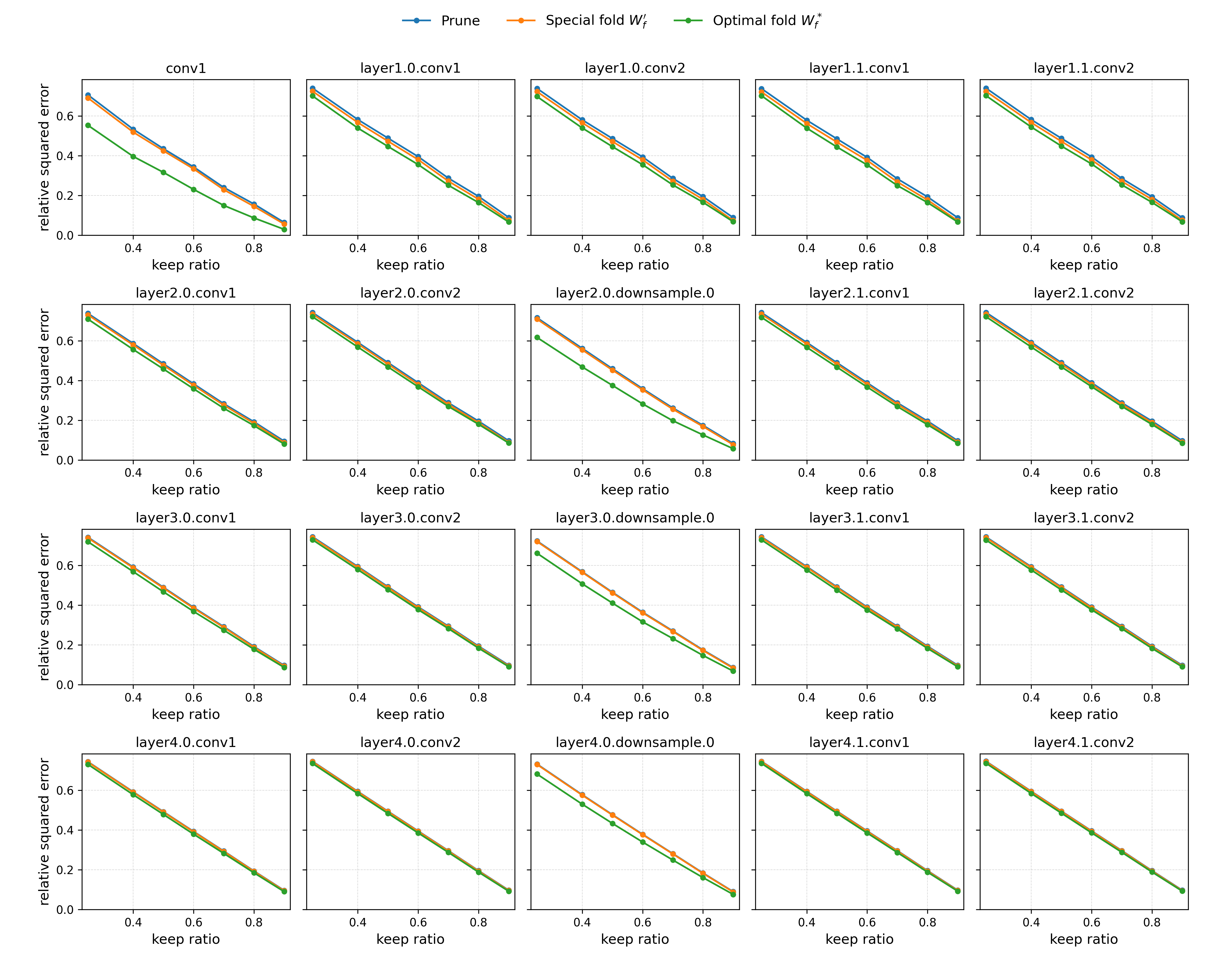}

     \includegraphics[width=.65\textwidth]{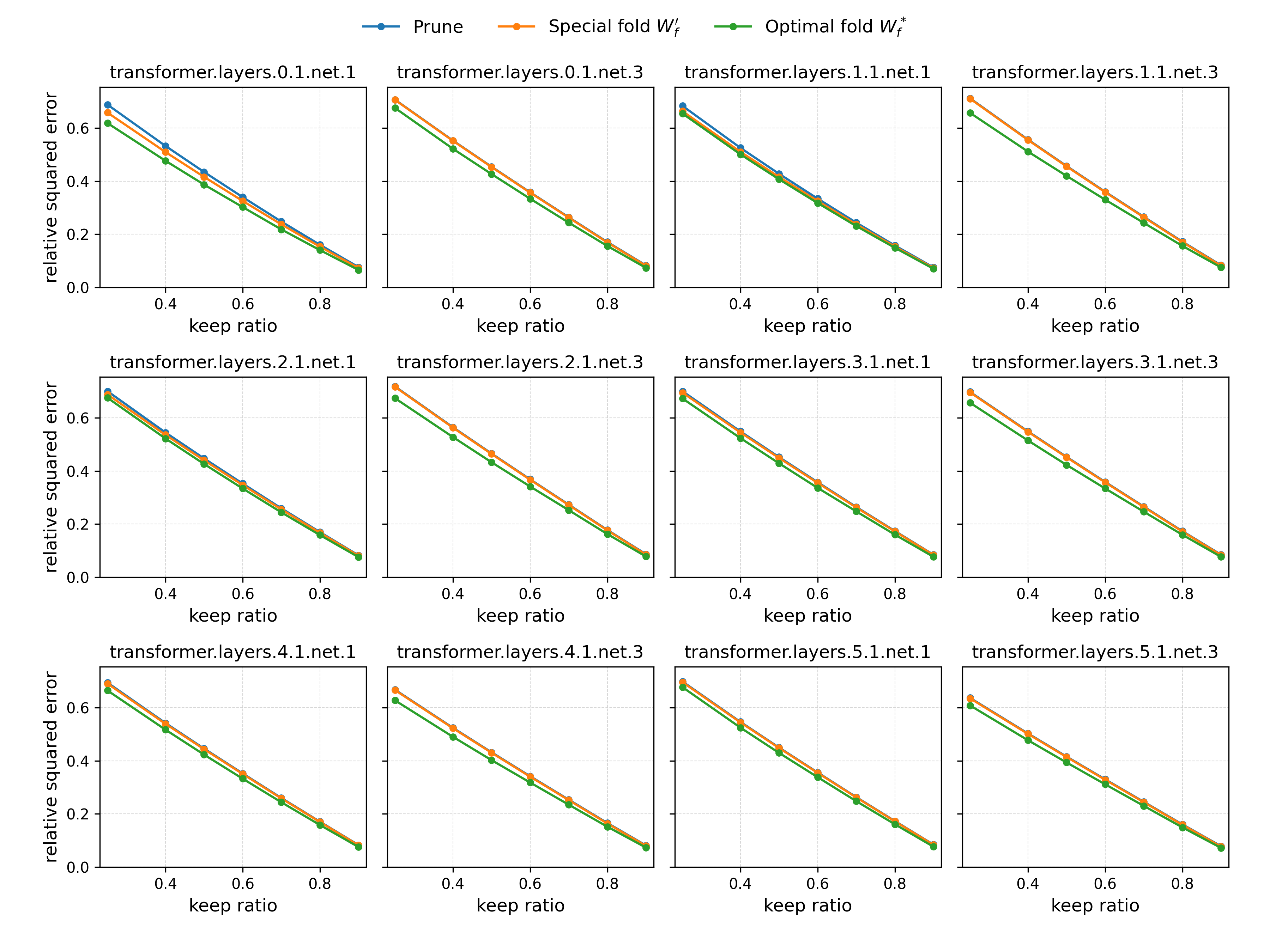}
    \caption{\small{\textbf{Relative reconstruction error of pruning, special folding $\mathbf{W}_f'$ (from the proof of Theorem~\ref{thm:folding-existence}), and optimal folding $\mathbf{W}_f^*$ for all layers of ResNet18 (\textbf{top}) and ViT-B/32 (\textbf{bottom}).}
    For each layer, we report the normalized squared Frobenius error  $\|\mathbf{W} - \mathbf{W}_\bullet\|_F^2 / \|\mathbf{W}\|_F^2$ at several keep ratios $k_p/m$, where $\bullet \in \{\magtwo, \text{singleton}, \fold\}$. \magtwo (blue) denotes structured magnitude pruning with $k_p$ retained rows. The special fold $\mathbf{W}_f'$ (orange) merges all pruned rows into a single extra cluster ($k_f = k_p + 1$). The optimal fold $\mathbf{W}_f^*$ (green) is the $k$-means solution with $k_f$ clusters.
    Across all layers, $\mathbf{W}_f'$ consistently outperforms pruning, and $\mathbf{W}_f^*$ yields the smallest error, empirically validating $\mathrm{error}(\mathbf{W}_p) \ge \mathrm{error}(\mathbf{W}_f') \ge \mathrm{error}(\mathbf{W}_f^*)$.}}
     \label{fig:adam_one_rank_slack:opt}
\end{figure}

Across all layers of ResNet18 and ViT-B/32, the gain from increasing the retained rank by one is consistently negligible, while the gain from replacing pruning with folding is one to two orders of 
magnitude larger (\Figref{fig:adam_one_rank_slack:rank}). Thus, the empirical advantage of folding is not a byproduct of the $k \mapsto k{+}1$ rank slack, but stems from the richer family of cluster-based projections that folding can realize. This is further corroborated by the layer-wise reconstruction errors in \Figref{fig:adam_one_rank_slack:opt}, where 
$\mathrm{error}(\mathbf{W}_p) > \mathrm{error}(\mathbf{W}_f') > \mathrm{error}(\mathbf{W}_f^*)$ holds uniformly. These results confirm that folding’s superior approximation properties are structural rather than an artifact of rank.

The theory controls the loss via the parameter-Lipschitz bound 
$|L(\mathbf{W}) - L(\mathbf{W}_\bullet)| \le \kappa \|\mathbf{W} - \mathbf{W}_\bullet\|_F$.  In regimes where $\kappa$ is moderate, \eg flat solutions obtained with smaller learning rates or SAM-folding’s smaller Frobenius error reliably translates into smaller loss degradation. However, in sharp minima such as those produced by Adam at large learning rates (see \Figref{fig:adam_sharpness_delta_vs_delta} and \Figref{fig:sgd_sharpness_delta_vs_delta}), the effective local $\kappa$ becomes extremely large. In this setting, even tiny parameter perturbations cause large loss changes, and the ordering of 
Frobenius errors no longer predicts the ordering of accuracies. Thus, the discovered failure cases of folding are not contradictions of the theory but instances where the Lipschitz assumption required for loss control breaks down due to extreme curvature.

\section{Use of Large Language Models}
\label{appx:use_llms}
We used ChatGPT~\footnote{ChatGPT / GPT-5: \url{https://chatgpt.com}} for sentence-level grammar correction and improvement, drafting trivial plotting snippets to produce figures from logs, and code readability edits. All ideas, proofs, experiments, and analyses are ours.

\end{document}